\newtheorem*{rep@theorem}{\rep@title}
\newcommand{\newreptheorem}[2]{%
	\newenvironment{rep#1}[1]{%
		\def\rep@title{#2 \ref{##1}}%
		\begin{rep@theorem}}%
		{\end{rep@theorem}}}
\theoremstyle{plain}
\newtheorem{theorem}{Theorem}[section]
\newtheorem{proposition}[theorem]{Proposition}
\newtheorem{lemma}[theorem]{Lemma}
\newtheorem{corollary}[theorem]{Corollary}
\theoremstyle{definition}
\newtheorem{definition}[theorem]{Definition}
\theoremstyle{remark}
\definecolor{darkgreen}{RGB}{0,192,0}
\definecolor{darkred}{RGB}{192,0,0}
\definecolor{myred}{RGB}{215,48,39}
\definecolor{mygreen}{RGB}{26,152,80}
\newcommand{\cmark}{\textcolor{mygreen}{\ding{51}}}
\newcommand{\xmark}{\textcolor{myred}{\ding{55}}}
\let\oldtodo\todo
\renewcommand{\todo}[1]{\PackageWarning{mystyle}{#1}\oldtodo{#1}}
\renewcommand{\d}{d}
\newcommand{\NN}{\mathbb{N}}
\newcommand{\RR}{\mathbb{R}}
\newcommand{\EE}{\mathbb{E}}
\newcommand{\Dd}{\mathcal{D}}
\newcommand{\Hh}{\mathcal{H}}
\newcommand{\Ll}{\mathcal{L}}
\newcommand{\Nn}{\mathcal{N}}
\newcommand{\Pp}{\mathcal{P}}
\renewcommand{\phi}{\varphi}
\newcommand{\subalign}[1]{%
  \vcenter{%
    \Let@ \restore@math@cr \default@tag
    \baselineskip\fontdimen10 \scriptfont\tw@
    \advance\baselineskip\fontdimen12 \scriptfont\tw@
    \lineskip\thr@@\fontdimen8 \scriptfont\thr@@
    \lineskiplimit\lineskip
    \ialign{\hfil$\m@th\scriptstyle##$&$\m@th\scriptstyle{}##$\hfil\crcr
      #1\crcr
    }%
  }%
}
\newcommand{\eql}[1]{\begin{equation}#1\end{equation}}
\newcommand{\norm}[1]{\left\| #1 \right\|}
\let\originalleft\left
\let\originalright\right
\renewcommand{\left}{\mathopen{}\mathclose\bgroup\originalleft}
\renewcommand{\right}{\aftergroup\egroup\originalright}
\DeclareMathOperator{\relu}{ReLU}
\DeclareMathOperator{\Var}{Var}
\DeclareMathOperator{\const}{const}
\newcommand{\KL}[2]{\Dd_\text{KL}(#1 \| #2)}
\newcommand{\mean}{m}
\newcommand{\N}{\Nn}
\icmltitlerunning{On the Universality of Volume-Preserving and Coupling-Based Normalizing Flows}
\begin{document}

\twocolumn[
\icmltitle{On the Universality of Volume-Preserving and \texorpdfstring{\\}{} Coupling-Based Normalizing Flows}

\icmlsetsymbol{equal}{*}

\begin{icmlauthorlist}
\icmlauthor{Felix Draxler}{hd}
\icmlauthor{Stefan Wahl}{hd}
\icmlauthor{Christoph Schnörr}{hd}
\icmlauthor{Ullrich Köthe}{hd}
\end{icmlauthorlist}

\icmlaffiliation{hd}{Heidelberg University, Germany}
\icmlcorrespondingauthor{Felix Draxler}{felix.draxler@iwr.uni-heidelberg.de}

\icmlkeywords{Machine Learning, ICML}

\vskip 0.3in
]

\printAffiliationsAndNotice{}  %

\begin{abstract}
    We present a novel theoretical framework for understanding the expressive power of normalizing flows. Despite their prevalence in scientific applications, a comprehensive understanding of flows remains elusive due to their restricted architectures. Existing theorems fall short as they require the use of arbitrarily ill-conditioned neural networks, limiting practical applicability. We propose a distributional universality theorem for well-conditioned coupling-based normalizing flows such as RealNVP \cite{dinh2017density}. In addition, we show that volume-preserving normalizing flows are not universal, what distribution they learn instead, and how to fix their expressivity. Our results support the general wisdom that affine and related couplings are expressive and in general outperform volume-preserving flows, bridging a gap between empirical results and theoretical understanding.
\end{abstract}

\section{Introduction}

Density estimation and generative modeling of complex distributions is a fundamental problem in statistics and machine learning, with applications ranging from computer vision \cite{kingma2018glow} to thermodynamic systems \cite{noe2019boltzmann, albergo2019flowbased, nicoli2021estimation} and uncertainty quantification \cite{ardizzone2018analyzing}.

\begin{figure}[th]
    \centering
    \includegraphics[width=\linewidth]{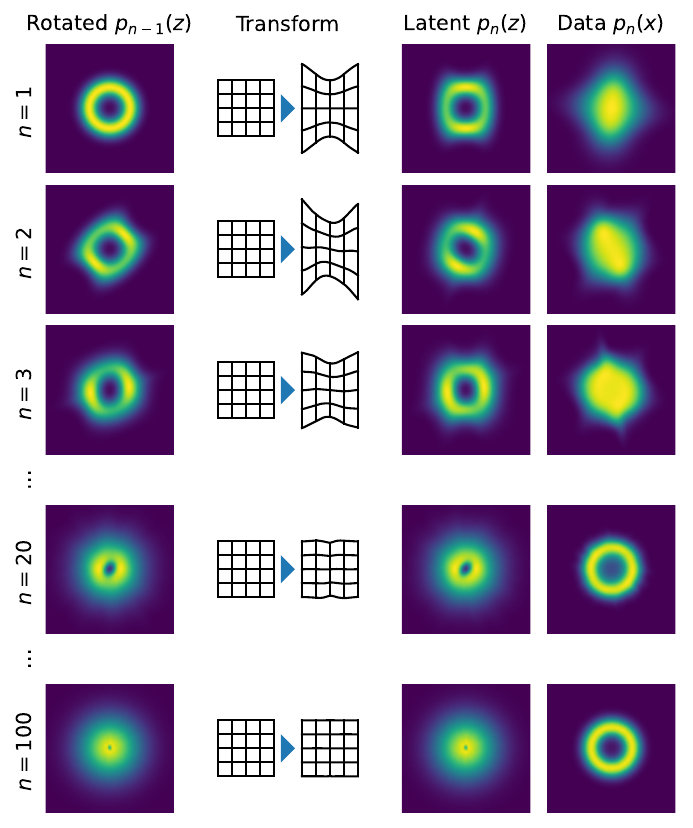}
    \caption{
        \textbf{Our universality proof constructs a normalizing flow by iteratively adding affine coupling blocks.} We illustrate this by constructing such a flow from topologically challenging toy data. Each block first rotates the distribution $p_{n-1}(z)$ from the previous step \textit{(first column)}, then applies an affine coupling layer that transforms the active dimensions to zero mean and unit variance for each passive coordinate $b$ \textit{(second column)}. The resulting latent distribution converges step by step \textit{(third column)} to a standard normal distribution, where the learned additional layers essentially learn the identity \textit{(last row)}. The data distribution $p_\theta(x)$ converges in parallel \textit{(right)}.
    }
    \label{fig:universality-example}
\end{figure}

Normalizing flows are a class of generative models that learn a probability density $p_\theta(x)$ from samples $x \sim p(x)$ or from a potentially unnormalized ground truth density $\hat p(x) \propto p(x)$. They are implemented by transporting a simple multivariate base density such as the standard normal via a learned invertible function to the distribution of interest.

Constructing flexible and tractable invertible neural networks is nontrivial and a significant body of work has developed a plethora of architectures, see \cite{kobyzev2021normalizing} for an overview. The evidence for which architecture to choose in practice is mostly limited to empirical results, however. In this work, we prove rigorous results regarding the universality of the families of volume-preserving and coupling-based normalizing flows.

First, we consider volume-preserving flows such as \cite{dinh2015nice, sorrenson2019disentanglement, toth2020hamiltonian}. Volume-preservation can be useful in certain applications, such as disentanglement \cite{sorrenson2019disentanglement} or learning distributions with controllable temperature $p(x|T) \propto (\hat p(x))^T$ \cite{dibak2022temperature}. However, we show that they are not universal and learn a biased distribution in practice. We also provide a simple solution to restore universality, by adding a single one-dimensional non-volume-preserving layer.

Second, we improve on the universality theory of coupling-based normalizing flows (not preserving volume). These flows are a particularly efficient variant of parameterizing invertible neural networks, with fast training and inference. Despite their seemingly strong architectural constraints, in practice even the simple affine coupling-based normalizing flow \cite{dinh2017density} can learn high-dimensional distributions such as images \cite{kingma2018glow}.

Theoretical explanations for this architecture's ability to fit complex distributions are limited. Existing proofs make assumptions that are not valid in practice, as the involved constructions rely on ill-conditioned neural networks \cite{teshima2020couplingbased, koehler2021representational} or construct a volume-preserving flow \cite{koehler2021representational}. We introduce a new proof for the distributional universality of coupling-based normalizing flows that does not require ill-conditioned neural networks to converge. This proof is constructive, showing that training affine coupling blocks sequentially converges to the correct target (compare \cref{fig:universality-example}).

In summary, we contribute:
\begin{itemize}[itemsep=1pt,topsep=0pt]
    \item We show for the first time that volume-preserving flows are not universal, derive what distribution they converge to instead and provide simple fixes for their shortcomings in \cref{sec:limitations-vol-preserving-flows}.
    \item We then show that the most recent distributional universality proof for affine coupling-based normalizing flows by \citet{koehler2021representational} constructs such a volume-preserving flow in \cref{sec:shortcomings}.
    \item Finally, we give a new universality proof for coupling-based normalizing flows that is not volume-preserving, considers the full support of the distribution, and is not ill-conditioned in \cref{sec:affine-coupling-universality}.
\end{itemize}
Our results validate insights previously observed only empirically: Affine coupling blocks are an effective foundation for normalizing flows, and volume-preserving flows have limited expressive power. %

\newpage

\section{Related Work}

\begin{table}
	\caption{\textbf{Our construction of a universal coupling flow overcomes two important limitations} of the previous work on arbitrary input $p(x)$ \citep{teshima2020couplingbased, koehler2021representational}: We use well-conditioned coupling blocks and allow variable volume change $|f_\theta'(x)| \neq \operatorname{const}$, which is necessary for universality in KL divergence by \cref{thm:volume-preserving-not-universal}.}
	\label{tab:improvements}
		\begin{center}
	\resizebox{\linewidth}{!}{
\begin{tabular}{lccc}
\toprule
& Teshima et al. & Koehler et al. & Thm.~\ref{thm:affine-coupling-universality} \\ \midrule
Well-conditioned & \xmark & \xmark & \cmark \\
Variable $|f_\theta'(x)|$ & \cmark & \xmark & \cmark \\
Global support & \xmark & \xmark & \cmark \\
\bottomrule
\end{tabular}
   }
		\end{center}
\end{table}

Normalizing flows are a class of generative models based on invertible neural networks \cite{rezende2015variational}. We focus on the analytical expressivity of volume-preserving and coupling-based normalizing flows, see \cref{sec:background}.

That coupling-based normalizing flows work well in practice despite their restricted architecture has sparked the interest of several papers analyzing their distributional universality, i.e.~the question whether they can approximate any target distribution to arbitrary precision (see \cref{def:distributional-universality}). 
\citet{teshima2020couplingbased} showed that coupling flows are universal approximators for invertible functions, which results in distributional universality.
\citet{koehler2021representational} demonstrated that affine coupling-based normalizing flows can approximate any distribution with arbitrary precision using just three coupling blocks.
However, these works rely on ill-conditioned coupling blocks and consider convergence only on a bounded subspace. Our work addresses these limitations and shows that training a normalizing flow layer by layer yields universality.
In addition, we find that \citet{koehler2021representational} effectively construct a volume-preserving flow, which we show not to be universal in KL divergence, the loss used in practice.
This line of work is summarized in \cref{tab:improvements}.

Some works show distributional universality of \textit{augmented} affine coupling-based normalizing flows, which add at least one additional dimension usually filled with exact zeros \cite{huang2020augmented, koehler2021representational,lyu2022universality}. The problem with adding additional zeros is that the flow is not exactly invertible anymore in the data domain and usually loses tractability of the change of variables formula (\cref{eq:change-of-variables}). \citet{lee2021universala} add i.i.d.~Gaussians as additional dimensions, which again allows density estimation, but they only show how to approximate the limited class of log-concave distributions. Our universality proof does not rely on such a construction.

Other theoretical work on the expressivity of normalizing flows considers more expressive invertible neural networks, including SoS polynomial flows, Neural ODEs and Residual Neural Networks \cite{jaini2019sumofsquares, zhang2020approximation, teshima2020universal, ishikawa2022universal}. Another line of work found that the number of required coupling blocks is independent of dimension $D$ for Gaussian distributions compared to $O(D)$ Gaussianization blocks that lack couplings between dimensions \cite{koehler2021representational, draxler2022whitening, draxler2023convergence}.

Regarding volume-preserving flows, our results complement the fact that Hamiltonian Monte Carlo (HMC) resamples momenta at every step in order to sample the correct target distribution \cite{neal2011mcmc}.

\section{Background}
\label{sec:background}

\textbf{Normalizing Flows} are a class of generative models that represent a distribution $p_\theta(x)$ with parameters $\theta$ by learning an invertible function $z = f_\theta(x)$ so that the latent codes $z \in \RR^D$ obtained from the data $x \in \RR^D$ are distributed like a standard normal distribution $p(z) = \Nn(z; 0, I)$. Via the change of variables formula, see \cite{kothe2023review} for a review, this invertible function yields an explicit form for the density $p_\theta(x)$:
\begin{equation}
    \label{eq:change-of-variables}
    p_\theta(x) = p(z=f_\theta(x)) |f_\theta'(x)|,
\end{equation}
where $f_\theta'(x) = \frac{\partial}{\partial x}f_\theta(x)$ is the Jacobian matrix of $f_\theta$ at $x$ and $|f_\theta'(x)|$ is its absolute determinant. Note that we consistently denote by $p(\cdot)$ a ground truth target distribution and by $p_\theta(\cdot)$ a learned approximation.

\Cref{eq:change-of-variables} allows easily evaluating the model density at points of interest. Obtaining samples from $p_\theta(x)$ can be achieved by sampling from the latent standard normal and applying the inverse $f_\theta^{-1}(z)$ of the learned transformation:
\begin{equation}
    \label{eq:sampling}
    x = f_\theta^{-1}(z) \sim p_\theta(x) \text{ for } z \sim p(z).
\end{equation}

The change of variables formula (\cref{eq:change-of-variables}) can be used directly to train a normalizing flow. The corresponding loss minimizes the Kullback-Leibler divergence between the true data distribution $p(x)$ and the learned distribution, which can be optimized via a Monte-Carlo estimate of the involved expectation:
\begin{align}
    \label{eq:loss}
    \Ll &
    = \KL{p(x)}{p_\theta(x)} \\&
    = \EE_{x \sim p(x)}[\log p(x) - \log p_\theta(x)] \\&
    = \EE_{x \sim p(x)}[-\log p_\theta(x)] + \text{const.}
    \label{eq:negative-log-likelihood}
\end{align}
This last variant makes clear that minimizing this loss is exactly the same as maximizing the log-likelihood of the training data. For training, the expectation value is approximated using (batches of) training samples $x_{1, \dots, N}$.

In order for \cref{eq:change-of-variables,eq:sampling} to be useful in practice, $f_\theta(x)$ must have (i) a tractable inverse $f_\theta^{-1}(z)$ for fast sampling, and (ii) a tractable Jacobian determinant $|f_\theta'(x)|$ for fast training while (iii) being expressive enough to model complicated distributions. These constraints are nontrivial to fulfill at the same time and significant work has been put into constructing such invertible neural networks, see \cite{kobyzev2021normalizing}.

\textbf{Coupling-Based Normalizing Flows} \label{sec:coupling-flows} are an invertible neural network design that lies in a sweet spot of scaling well to large dimensions yet remaining fast to sample from \cite{draxler2023convergence} and exhibits a tractable Jacobian determinant. Its basic building block is the coupling layer, which consists of one invertible function $\tilde x_i = c(x_i; \psi_i)$ for each dimension, but with a twist: Only the second half of the dimensions $a = x_{D/2 + 1, \dots, D}$ (\textit{active}) is changed in a coupling layer, and the parameters of this transformation $\psi = \psi(b)$ are predicted by a function called the \textit{conditioner} that takes the first half of dimensions $b = x_{1, \dots, D/2}$ (\textit{passive}) as input:
\begin{equation}
    \label{eq:coupling-layer}
    \tilde x_i = (f_\text{cpl}(x))_i = \begin{cases}
        b_i & i \leq D/2 \\
        c(a_{i-D/2}; \psi_{i-D/2}(b)) & \text{else.}
    \end{cases}
\end{equation}
In practice, the conditioner of each block is realized by a neural network $\psi_\phi(b)$ with parameters $\phi$. Calculating the inverse of the coupling layer is easy, as $b = \tilde b$ for the passive dimensions. This allows computing the parameters $\psi(b)$ necessary to invert the active half of dimensions:
\begin{equation}
    \label{eq:coupling-layer-inverse}
    x_i = (f_\text{cpl}^{-1}(\tilde x))_i = \begin{cases}
        \tilde b_i & i \leq D/2 \\
        c^{-1}(\tilde a_{i-D/2}; \psi_{i-D/2}(\tilde b)) & \text{else.}
    \end{cases}
\end{equation}

Choosing the right one-dimensional invertible function $c(x; \psi)$ is the subject of active research, see our list in \cref{app:compatible-couplings} and \citet{kobyzev2021normalizing}. Many applications use affine-linear functions \cite{dinh2017density}:
\begin{align}
    \label{eq:affine-coupling}
    c(x; s, t) = sx + t,
\end{align}
where $s > 0$ and $t$ are predicted by the conditioner $\psi(b)$ as a function of the passive dimensions. Especially for smaller-dimensional problems it has proven useful to use more flexible $c$ such as rational-quadratic splines \cite{durkan2019neural}. The positive and negative universality theorems in this paper apply to all coupling architectures we are aware of. At the same time, we give a direct reason for using more expressive couplings, as they can learn the same distributions with fewer layers in \cref{sec:expressive-universality}.

In order to be expressive, a coupling flow consists of a stack of coupling layers, each with a different active and passive subspace. Varying the subspaces is realized by an additional layer before each coupling which mixes dimensions by a rotation matrix $Q \in SO(D)$:
\begin{align}
    f_\text{rot}(x) = Qx, \qquad
    f_\text{rot}^{-1}(\tilde x) = Q^T \tilde x.
\end{align}
The matrix $Q$ can either be chosen to be a hard permutation or any matrix with orthonormal columns, which mixes dimensions linearly. The latter can optionally be learned \cite{kingma2018glow}. From a representational perspective, these variants are interchangeable because any soft permutation can be represented by a constant number of coupling blocks with hard permutations \cite{koehler2021representational}.  %

A rotation layer together with a coupling layer forms a coupling block:
\begin{equation}
    \label{eq:coupling-block}
    f_\text{blk}(x) = (f_\text{cpl} \circ f_\text{rot})(x) = f_\text{cpl}(Qx).
\end{equation}
In the \cref{sec:coupling-universality}, we are concerned with what distributions $p(x)$ a potentially deep concatenation of coupling blocks can represent. Taken together, the parameters of the conditioners of the coupling layers $\phi$ together with the rotation matrices $Q$ make up the learnable parameters $\theta$ of the entire normalizing flow.

\textbf{Volume-Preserving Normalizing Flows} \label{sec:vp-flows} or sometimes \textit{incompressible flows} are a variant of normalizing flows that have a constant Jacobian determinant $|f_\theta'(x)| = \operatorname{const}$. This simplifies the change of variables formula in \cref{eq:change-of-variables} above, where $C = |f_\theta'(x)|$:
\begin{equation}
    \label{eq:incompressible-change-of-variables}
    p_\theta(x) = p(z = f_\theta(x)) C.
\end{equation}

Volume-preserving flows have been demonstrated to have useful properties in certain applications such as disentanglement \cite{sorrenson2019disentanglement}, or temperature-scaling in Boltzmann generators \cite{dibak2022temperature} or to preserve volume in physical state-space \cite{toth2020hamiltonian}. However, we show that the volume-preserving change of variables does not allow for universal normalizing flows in \cref{eq:incompressible-change-of-variables} regardless of the architecture in \cref{sec:limitations-vol-preserving-flows}.

For one-dimensional functions, a constant volume change implies that $f_\theta(x) = Cx+t$ is linear. For multivariate functions, $f_\theta(x)$ can be nonlinear, only that any volume change in one dimension must be compensated by an inverse volume change in the remaining dimensions.  Prominent implementations are nonlinear independent components estimation (NICE) \cite{dinh2015nice}, general incompressible-flow networks (GIN) \cite{sorrenson2019disentanglement} or Neural Hamiltonian Flows (NHF) \cite{toth2020hamiltonian}. For example, GIN realizes volume-preserving coupling blocks by requiring that $\sum_{i=1}^{D/2} \log s_i(b) = \operatorname{const}$. We list common volume-preserving constructions in \cref{app:volume-preserving-architectures}.

\textbf{Distributional Universality}
\label{sec:distributional-universality}
means that a certain class of generative models can represent any distribution $p(x)$. Due to the nature of neural networks, we cannot hope for our generative model to \emph{exactly} (i.e.~exact equality in the mathematical sense) represent $p(x)$. This becomes clear via an analogue in the context of regression: A neural network with ReLU activations always models piecewise linear functions, and as such it can never \emph{exactly} regress a parabola $y = x^2$. However, for every finite value of $\epsilon > 0$ and given more and more linear pieces, it can follow the parabola ever so closer, so that the average distance between $x^2$ and $f_\theta(x)$ vanishes: $\EE_{x \sim p(x)}[|x^2 - f_\theta(x)|^2] < \epsilon$. To characterize the expressivity of a class of neural networks, it is thus instructive to call a class of networks universal if the error between the model and any target can be reduced arbitrarily.

In terms of representing distributions $p(x)$, the following definition captures universality of a class of model distributions, similar to \citep[Definition 3]{teshima2020couplingbased}:
\begin{definition}
    \label{def:distributional-universality}
    A set of probability distributions $ \Pp $ is called a \emph{distributional universal approximator} if for every possible target distribution $p(x)$ there is a sequence of distributions $p_n(x) \in \Pp$ such that $ p_n(x) \xrightarrow{n \to \infty} p(x) $.
\end{definition}
The formulation of universality as a convergent series is useful as it (i) captures that the distribution in question $p(x)$ may not lie in $\Pp$, and (ii) the series index $n$ usually reflects a hyperparameter of the underlying model corresponding to computational requirements (for example, the depth of the network).

We have left the exact definition of the limit ``$p_n(x) \xrightarrow{n \to \infty} p(x)$'' open as we may want to consider different variations of convergence. The existing literature on affine coupling-based normalizing flows considers weak convergence \cite{teshima2020couplingbased} respectively convergence in Wasserstein distance \cite{koehler2021representational}. %
Many metrics of convergence have been proposed, see \citet{gibbs2002choosing} for a systematic overview.

While we consistently state our assumptions, we usually restrict ourselves to data distributions with densities $p(x)$ that are bounded and continuous, and that have infinite support and finite moments, which covers distributions of practical interest.

\section{Non-Universality of Volume-Preserving Flows}
\label{sec:limitations-vol-preserving-flows}

In this section, we consider normalizing flows with constant Jacobian determinant $|f_\theta'(x)| = \operatorname{const}$. We show that \textbf{volume-preserving flows are not universal in KL divergence}. We then propose how universality can be recovered.

From the volume-preserving change-of-variables in \cref{eq:incompressible-change-of-variables} we can derive the KL divergence $\KL{p(x)}{p_\theta(x)}$ in the special case of $C=1$:
\begin{align}
    \Ll &
    = \int p(x) \log \frac{p(x)}{p_\theta(x)} dx \\&
    = -H[p(x)] - \int p(x) \log p(z=f_\theta(x)) dx.
\end{align}
Only the last term depends on $f_\theta$. To derive the minimizer, consider the data $p(x)$ and latent distribution $p(z)$ on a regular grid over $\RR^D$ with some spacing $a > 0$. Then, define a volume-preserving flow with $C=1$ that permutes the grid cells $B_i \mapsto B_{f(i)}$ (within the cells, keep the relative positions). Then, discretize the above integral on the grid by approximating the latent probability by the average density in each cell, that is $p(z) \approx \frac{1}{a^D} p(B_i: z \in B_i)$:
\begin{align}
    &- \int p(x) \log p(z=f_\theta(x)) dx \\
    &\approx - \sum_{i} p(x \in B_{s_x(i)}) \log(p(z \in B_{f(i)}) / |a^D|).
\end{align}
This is minimized by a bijective $f^*: \NN \to \NN$ that permutes the grid cells such that the cell with the highest probability $p(x \in B_i)$ in the data space aligns with the cell with the highest (logarithmic) probability in latent space, and so on:
\begin{equation}
    f^*(i) = s_z(s_x^{-1}(i)),
    \label{eq:vp-grid-sorted}
\end{equation}
where $s_{v}(i)$ is a sorting of the grid cells, determined by probability mass $p(v \in B_i)$ for $v = x$ respectively $v = z$.

The following theorem makes the above argument continuous and determines the optimal volume change $C > 0$:
\begin{theorem}
    \label{thm:volume-preserving-lower-bound}
    Given a continuous bounded input density $p(x)$. Then, for any volume-preserving flow $p_\theta(x)$ with a standard normal latent distribution, the achievable KL divergence is bounded from below:
    \begin{equation}
        \KL{p(x)}{p_\theta(x)} \geq \KL{p^*(z)}{\Nn(0, |\Sigma_{p^*(z)}|^{\frac{1}{D}} I)},
    \end{equation}
    where $p^*(z)$ is constructed by decreasingly sorting the probability densities $p(x)$ from the origin with unit volume change, and $\Sigma_{p^*(z)}$ is its covariance matrix. The minimal loss is achieved for $C = |\Sigma_{p^*(z)}|^{-\frac{1}{2D}}$.
\end{theorem}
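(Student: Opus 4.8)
The plan is to reduce the bound to a rearrangement inequality for the distribution obtained by pushing the data through the flow into latent space. First I would exploit the invariance of the KL divergence under the common invertible map $f_\theta$ to write $\KL{p(x)}{p_\theta(x)} = \KL{q(z)}{\Nn(0,I)}$, where $q = (f_\theta)_\sharp p$ is the pushforward of the data density; equivalently this is the computation already sketched before the theorem, since $p_\theta(x) = \Nn(f_\theta(x);0,I)\,C$ and the factor $C$ cancels against the Jacobian of the change of variables $z = f_\theta(x)$. The role of volume preservation is that it severely restricts which $q$ are reachable: because $|f_\theta'| = C$ is constant, $q(z) = p(f_\theta^{-1}(z))/C$, and since $\{q>s\} = f_\theta(\{p > Cs\})$ while $f_\theta$ scales volume by $C$, one gets $\vol(\{q>s\}) = C\,\vol(\{p>Cs\})$ for every level $s$. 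Thus $q$ ranges only over densities \emph{equimeasurable} with $p/C$, i.e.\ volume-preserving rearrangements of the data density: the flow cannot reshape the histogram of density values, it can only relocate mass.

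Next I would split the divergence as $\KL{q}{\Nn(0,I)} = -H[q] - \EE_{z\sim q}[\log\Nn(z;0,I)]$. Equimeasurability makes the entropy constant across all reachable $q$ at fixed $C$, namely $H[q] = H[p] + \log C$, so the only free term is the cross-entropy. Since $\log\Nn(z;0,I) = -\tfrac{D}{2}\log(2\pi) - \tfrac12\|z\|^2$ is radially symmetric and strictly decreasing in $\|z\|$, minimizing the divergence is exactly minimizing the second moment $\EE_{z\sim q}[\|z\|^2]$ over the equimeasurable class. By the Hardy--Littlewood rearrangement inequality this is minimized by placing the largest density values closest to the origin, i.e.\ by the symmetric decreasing rearrangement of $p/C$. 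That rearrangement is precisely $p^*$ (the density obtained by sorting the values of $p$ decreasingly outward from the origin) dilated by $C^{1/D}$, namely $q^*_C(z) = C^{-1}p^*(z/C^{1/D})$, which is the law of $C^{1/D}Z^*$ for $Z^*\sim p^*$. Being radially symmetric, $p^*$ has mean zero and isotropic covariance $\Sigma_{p^*} = \tfrac1D\EE_{p^*}[\|z\|^2]\,I$.

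It remains to optimize the scalar volume change. Using scale-invariance of KL, $\KL{q^*_C}{\Nn(0,I)} = \KL{p^*}{\Nn(0,C^{-2/D}I)}$, so minimizing over $C>0$ becomes a one-dimensional problem in the isotropic latent variance $\sigma^2 = C^{-2/D}$. Differentiating $-H[p^*] + \tfrac{D}{2}\log(2\pi\sigma^2) + \tfrac{1}{2\sigma^2}\EE_{p^*}[\|z\|^2]$ in $\sigma^2$ and setting the derivative to zero gives $\sigma^2 = \tfrac1D\EE_{p^*}[\|z\|^2] = |\Sigma_{p^*}|^{1/D}$, equivalently the optimal scalar volume change reported in the statement, and substituting back yields the value $\KL{p^*(z)}{\Nn(0,|\Sigma_{p^*}|^{1/D}I)}$. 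Since an arbitrary volume-preserving flow has some $C$ and produces some reachable $q$, its divergence is at least $\KL{q^*_C}{\Nn(0,I)} \ge \min_{C'}\KL{q^*_{C'}}{\Nn(0,I)}$, which is exactly the claimed bound.

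The main obstacle I expect is making the rearrangement step rigorous in the continuous $\RR^D$ setting rather than on the heuristic grid used before the theorem. Concretely, one must justify the level-set volume identity $\vol(\{q>s\}) = C\,\vol(\{p>Cs\})$ (hence exact equimeasurability) for genuine volume-preserving diffeomorphisms, taking care of null sets and the behaviour where $p$ vanishes, and then invoke the Hardy--Littlewood inequality with the \emph{unbounded increasing} weight $\|z\|^2$; this is where continuity, boundedness, and the finite-moment assumption on $p$ enter, guaranteeing that $H[q]$, $\EE_q[\|z\|^2]$, and $\int q\log\Nn$ are finite and that the entropy is genuinely a rearrangement invariant. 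By contrast, the entropy bookkeeping and the one-dimensional optimization over $C$ are routine once the rearrangement inequality is established, and for the stated lower bound no attainability argument is needed (tightness, i.e.\ approximating $p^*$ by actual volume-preserving flows, is a separate question).
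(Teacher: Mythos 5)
Your proof follows essentially the same route as the paper's: pass to the latent-space KL, split off the entropy (which depends only on $C$), minimize the remaining cross-entropy over the reachable class by a symmetric decreasing rearrangement, and then optimize the scalar volume change --- you simply make the rearrangement step more explicit (equimeasurability of $q$ with $p/C$ plus Hardy--Littlewood) where the paper informally asserts the highest-to-highest assignment, and you optimize $C$ by direct calculus where the paper invokes a Pythagorean KL decomposition. One caveat: your bookkeeping yields $C = |\Sigma_{p^*}|^{-1/2}$ (the dilation factor being $C^{1/D}$ when $C$ denotes the Jacobian \emph{determinant}), which is not ``equivalently'' the $C = |\Sigma_{p^*}|^{-1/(2D)}$ stated in the theorem; the paper reaches the latter via a lemma that rescales $f_\theta$ by $C^{-1}$ per axis (so that $|\tilde f_\theta'| = C^{1-D}$, not $1$), and the two expressions agree only for $D=1$, with your version being the internally consistent one.
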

The optimal $p^*(x)$ and its latent counterpart $p^*(z)$ are constructed by sorting both the data and latent space by density and progressively assigning regions of decreasing density to each other (see proof in \cref{app:volume-preserving-minimizer}). \Cref{fig:effet-const-jac-example} shows how this optimal distribution $p^*(x)$ differs from the target $p(x)$ for a bimodal toy distribution in 2D.

\begin{figure}[tbh]
    \centering
    \includegraphics[width=\linewidth]{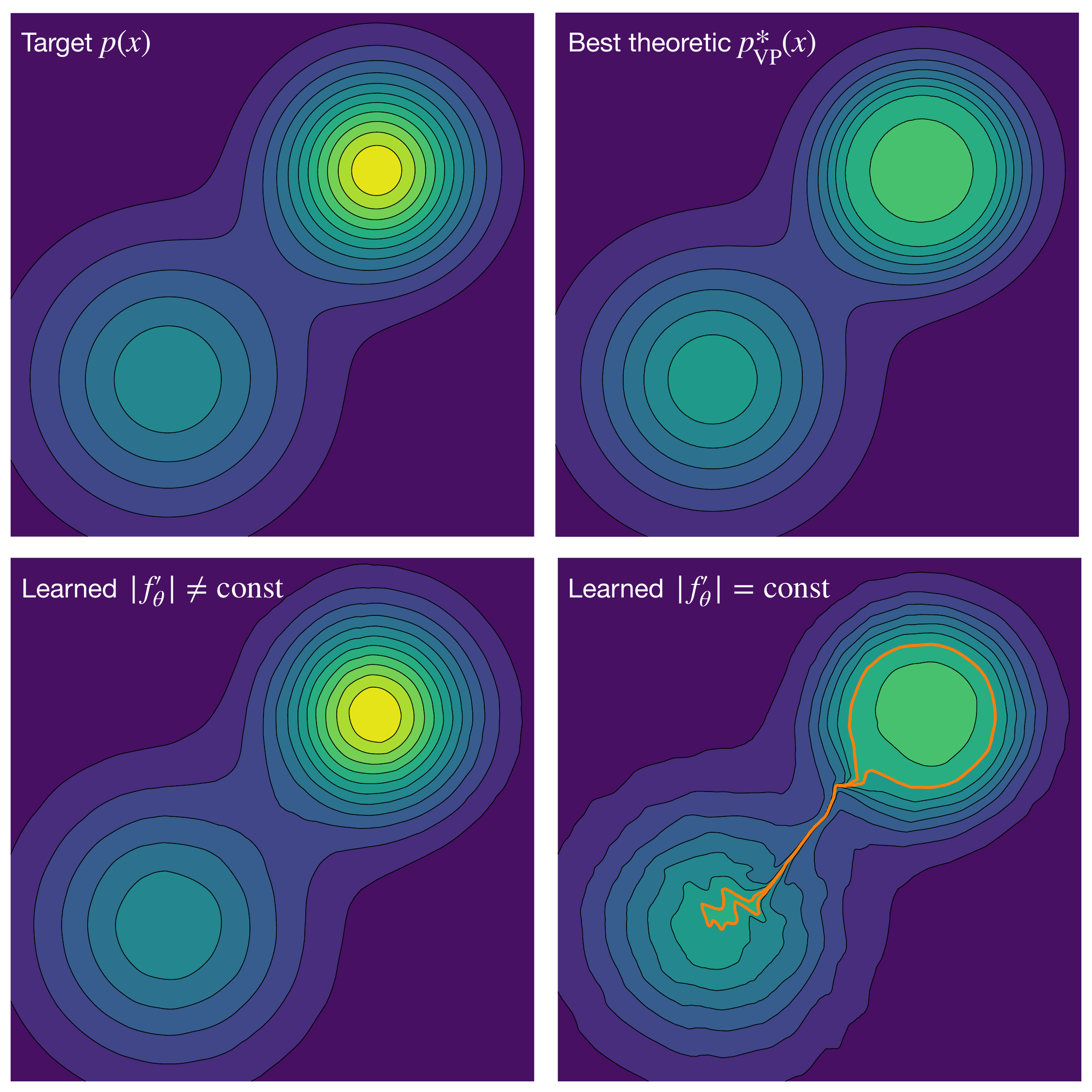}
    \caption{
        \textbf{We reveal two limitations of volume-preserving flows}: First, a 2D bimodal distribution \textit{(top left)} cannot be represented by a volume-preserving flow, the theoretic optimum predicted by \cref{thm:volume-preserving-lower-bound} assigns wrong densities to both modes \textit{(top right)}. Learning a volume-preserving flow comes very close to this suboptimal solution \textit{(bottom right)}. Second, since the flow is continuous in practice, it cannot represent multi-modal distributions by \cref{thm:mode-equivalence}, but a small bridge of high density remains \textit{(orange level set)}. A normalizing flow with variable Jacobian determinant does not have these issues \textit{(bottom left)}.
    }
    \label{fig:effet-const-jac-example}
\end{figure}

The following result formalizes that volume-preserving flows are not universal:
\begin{theorem}
    \label{thm:volume-preserving-not-universal}
    The family of normalizing flows with constant Jacobian determinant $|f_\theta'(x)| = \operatorname{const}$ is \emph{not} a universal distribution approximator under KL divergence.
\end{theorem}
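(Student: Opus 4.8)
The plan is to read off non-universality directly from the lower bound in \cref{thm:volume-preserving-lower-bound}. That theorem guarantees that for \emph{any} volume-preserving flow $p_\theta$ the loss satisfies $\KL{p(x)}{p_\theta(x)} \ge \KL{p^*(z)}{\Nn(0, |\Sigma_{p^*(z)}|^{1/D} I)} =: \delta_p$, where $\delta_p$ depends only on the target $p$ and not on $\theta$. Since KL divergence is nonnegative and vanishes only between almost-everywhere-equal distributions, $\delta_p > 0$ unless $p^*(z)$ coincides with the isotropic Gaussian $\Nn(0, |\Sigma_{p^*(z)}|^{1/D} I)$; note that $p^*$ is radially symmetric by construction, so the only Gaussian it can equal is a centered isotropic one. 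Hence it suffices to exhibit a single admissible target $p$ whose sorted density $p^*$ is \emph{not} an isotropic centered Gaussian: for that $p$ no sequence of volume-preserving flows can push the loss below the fixed constant $\delta_p$, so the family violates \cref{def:distributional-universality} under KL divergence.

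\textbf{Reducing $\delta_p$ to an explicit quantity.} To evaluate $\delta_p$ cleanly I would choose $p$ so that its sorted counterpart is the density itself. The construction of $p^*$ in \cref{thm:volume-preserving-lower-bound} rearranges the mass of $p$ into a radially symmetric, radially non-increasing profile with the same super-level-set volumes, i.e.\ the symmetric decreasing rearrangement of $p$. Therefore, if $p$ is already radially symmetric and radially decreasing, then $p^* = p$. Such a $p$ has covariance $\sigma_0^2 I$ for a single scalar $\sigma_0^2$, so $|\Sigma_{p^*}|^{1/D} = \sigma_0^2$ and the bound collapses to $\delta_p = \KL{p}{\Nn(0, \sigma_0^2 I)}$, which is strictly positive precisely when $p$ is not itself an isotropic Gaussian.

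\textbf{The example.} It then remains to pick a radially symmetric, radially decreasing density that is not Gaussian yet meets the standing assumptions. A convenient choice is the isotropic Laplace-type density $p(x) \propto \exp(-\norm{x})$ (a multivariate Student-$t$ would serve equally well): it is bounded, continuous, strictly decreasing in $\norm{x}$, has full support on $\RR^D$, and has finite moments, so $\sigma_0^2 < \infty$ is well defined. Because $p$ is not an isotropic Gaussian, $\delta_p = \KL{p}{\Nn(0, \sigma_0^2 I)} > 0$, and every volume-preserving flow incurs loss at least $\delta_p$, which proves the claim.

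\textbf{Main obstacle.} The only delicate points are both inherited from \cref{thm:volume-preserving-lower-bound} rather than genuinely new: (i) confirming that the density-sorting defining $p^*$ is exactly the symmetric decreasing rearrangement, so that $p^* = p$ for a radially decreasing target and its covariance scalar matches $|\Sigma_{p^*}|^{1/D}$; and (ii) checking that the chosen example has finite differential entropy and finite covariance, so that $\delta_p$ is a genuine finite positive number rather than an uninformative $+\infty$. I expect the first to be the only conceptual step; once that characterization of the rearrangement is available, the theorem is essentially a corollary of the lower bound, and the freedom to select any single non-Gaussian radial target makes the strict positivity of $\delta_p$ immediate.
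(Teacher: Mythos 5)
Your argument is correct, but it takes a different route from the paper. You derive the theorem as a corollary of \cref{thm:volume-preserving-lower-bound}: pick a radially symmetric, strictly radially decreasing, non-Gaussian target (e.g.\ $p(x)\propto e^{-\norm{x}}$), observe that the sorted rearrangement then satisfies $p^*=p$ and $\Sigma_{p^*}=\sigma_0^2 I$, so the uniform lower bound collapses to $\KL{p}{\Nn(0,\sigma_0^2 I)}>0$, which no volume-preserving flow can undercut. The paper instead gives a self-contained proof that does not invoke \cref{thm:volume-preserving-lower-bound} at all: it constructs an explicit two-dimensional density with a flat plateau of value $0.9$ on the unit square, splits into cases according to whether the set where $p_\theta>0.9-\epsilon$ is empty, bounds the volume of that set uniformly over the Jacobian constant $J$ by maximizing $\frac{1}{J}\,2\pi\log\bigl(J/(2\pi(0.9-\epsilon))\bigr)$, and converts the resulting total-variation gap into a KL bound via Pinsker's inequality, yielding the explicit constant $\approx 0.0058$. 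Your approach is shorter, more conceptual, and shows the obstruction is generic (essentially every non-Gaussian radial target is unreachable), but it inherits the full technical weight of the rearrangement construction in \cref{lem:vp-latent-distribution} and the optimality argument behind \cref{thm:volume-preserving-lower-bound}; the paper's proof is elementary, independent of that machinery, and produces a concrete numerical lower bound. The only points you should make explicit are the ones you already flag: that your example satisfies the hypotheses of \cref{lem:vp-latent-distribution} (bounded, continuous, $(D-1)$-dimensional level sets, strict radial decrease so that uniqueness forces $p^*=p$), and that positivity of the bound needs only $\delta_p>0$, so even $\delta_p=+\infty$ would suffice and the finiteness check is not actually load-bearing.
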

The proof constructs a concrete $p(x)$ and shows that the KL is bounded from below by a finite value for every possible value of $C$ (see \cref{app:proof-volume-preserving-not-universal}).

\Cref{fig:effet-const-jac-example} also shows another shortcoming of volume-preserving flows as they are practically implemented: There is a thin bridge of high density between the two modes. The reason is that flows are implemented as continuous invertible functions (as opposed to \cref{thm:volume-preserving-lower-bound,thm:volume-preserving-not-universal}, which only require invertibility). This makes the modeled distribution $p_\theta(x)$ inherit the mode structure of the latent $p(z)$:
\begin{proposition}
    \label{thm:mode-equivalence}
    A normalizing flow $p_\theta(x)$ based on diffeomorphisms $f_\theta(x)$ with constant Jacobian determinant $|f_\theta'(x)| = \operatorname{const}$ has the same number of modes as the latent distribution $p(z)$.
\end{proposition}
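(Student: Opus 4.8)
The plan is to reduce the claim to two elementary invariance facts: multiplying a function by a positive constant does not move its local maxima, and precomposing with a homeomorphism relocates local maxima bijectively. First I would fix the notion of a \emph{mode} as a point of local maximum of the density. By the volume-preserving change of variables (\cref{eq:incompressible-change-of-variables}), $p_\theta(x) = C\, p(z = f_\theta(x))$ with a fixed constant $C = |f_\theta'(x)| > 0$. Since $C$ is a strictly positive constant independent of $x$, the points at which $p_\theta$ attains a local maximum are exactly the points at which the composition $g := p \circ f_\theta$ attains a local maximum. This step isolates the role of the volume-preserving assumption: only because the Jacobian factor is constant does it fail to perturb the location of extrema, whereas a non-constant $|f_\theta'(x)|$ could in general create or destroy modes.

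The key step is to show that the diffeomorphism $f_\theta$ induces a bijection between the local maxima of $g = p \circ f_\theta$ and those of $p$. I would argue purely topologically, using that $f_\theta$ is a homeomorphism of $\RR^D$, so that both $f_\theta$ and $f_\theta^{-1}$ are continuous. If $x_0$ is a local maximum of $g$, pick a neighborhood $U$ of $x_0$ with $g(x) \le g(x_0)$ on $U$; then $f_\theta(U)$ is a neighborhood of $z_0 := f_\theta(x_0)$, and writing any $z \in f_\theta(U)$ as $z = f_\theta(x)$ gives $p(z) = g(x) \le g(x_0) = p(z_0)$, so $z_0$ is a local maximum of $p$. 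Applying the same argument to the inverse diffeomorphism $f_\theta^{-1}$ shows every local maximum $z_0$ of $p$ pulls back to a local maximum $f_\theta^{-1}(z_0)$ of $g$. Hence $x_0 \mapsto f_\theta(x_0)$ is a well-defined bijection between the two sets of local maxima, with inverse $z_0 \mapsto f_\theta^{-1}(z_0)$.

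Combining the two reductions, the modes of $p_\theta$ are in bijection with the modes of $p(z)$ via $f_\theta$, giving the claim. The main obstacle is not the analysis but pinning down the definition of ``mode'' so that the count is unambiguous: one should either restrict to isolated (strict) local maxima, or identify each connected plateau of equal, locally maximal density with a single mode, since otherwise a degenerate maximum could be miscounted. The topological argument is insensitive to this choice as long as the same convention is used on both sides, because a homeomorphism carries neighborhoods to neighborhoods and connected sets to connected sets, and the strictness of a maximum is itself preserved by $f_\theta$; I would therefore state the chosen convention explicitly and remark that no genuine degeneracy can be introduced or removed by a diffeomorphism.
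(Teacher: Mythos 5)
Your proposal is correct and follows essentially the same route as the paper: the constant positive Jacobian factor preserves the ordering of density values, and the diffeomorphism carries neighborhoods to neighborhoods, yielding a bijection between modes in data and latent space. The paper resolves the definitional point you raise by explicitly defining a mode as a connected constant-density set that strictly dominates a surrounding neighborhood (\cref{def:mode}), which matches the plateau convention you mention.
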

The proof in \cref{app:mode-equivalence} uses that diffeomorphisms map open sets to open sets, and thus the neighborhoods of density maxima in the latent space remain neighborhoods of density maxima in the data space.

The constructions underlying \cref{thm:volume-preserving-lower-bound,thm:volume-preserving-not-universal,thm:mode-equivalence} suggest concrete steps to make volume-preserving flows universal: First, one can write the latent distribution in hyperspherical coordinates $p(z) = p(r) p(\Omega|r)$ and replace the one-dimensional distribution $p(r)$ by a learned variant. This can even be done post-training, as the proof of \cref{thm:volume-preserving-lower-bound} reveals that a volume-preserving flow learns the same transport map whenever the latent distribution is rotationally symmetric and the density decreases with the distance from the origin. Second, a larger number of modes can be created by using (learnable) multi-modal distributions in the latent space. We leave implementing these practical improvements open to future work.

Together, we identify a fundamental limitation for applications based on volume-preserving flows. It explains why RealNVP significantly outperforms NICE in practice \cite{dinh2017density}. Work using volume-preserving flows must take this limited expressivity and the resulting biases in the learned distributions into account. In \cref{sec:shortcomings}, we show that this problem also applies to the most recent universality proof for coupling-based normalizing flows by \citet{koehler2021representational}.

\section{Universality of Coupling-Based Normalizing Flows}
\label{sec:coupling-universality}

In this section, we present our improved universality proof for non-volume-preserving coupling-based normalizing flows. It overcomes limitations of previous constructions and relies on the simple idea of iteratively training coupling blocks.

\subsection{Problems with Existing Constructions}
\label{sec:shortcomings}

The existing proofs \cite{teshima2020couplingbased, koehler2021representational} that affine %
and more expressive coupling flows are distributional universal approximators come with several limitations. In particular, their constructions use ill-conditioned coupling blocks as the translations $t(b)$ approximate step functions, as noted by \citet{koehler2021representational}. Also, they give guarantees only on a compact subspace $K \subset \RR^D$, and \citet{teshima2020couplingbased} use only one active dimension per coupling. This limits their practical applicability.

In addition, the flow constructed in \citet{koehler2021representational} is volume-preserving and thus not universal in KL divergence by our \cref{thm:volume-preserving-not-universal}. This is possible since they show convergence under Wasserstein distance $W_2$ in \citep[Theorem 1]{koehler2021representational}, which is has limited sensitivity to volume-preserving flows.

It is easy to see that their flow is volume-preserving by looking at the scaling functions $s(b)$ in the three affine coupling layers (\cref{eq:affine-coupling}) they use. They read: $s^{(1)} = \epsilon'$ and $s^{(2)} = s^{(3)} = \epsilon''$. This means that the overall flow has a Jacobian determinant $|f'_{\theta}| = (\epsilon' \epsilon''^2)^{\frac{D}{2}}$. This volume change is independent of the input, making the flow volume-preserving. 

Also note how the volume change is directly tied to the guaranteed Wasserstein distance, since they guarantee that $W_2(p_\theta(x), p(x)) < \epsilon$, and the above scalings fulfill $\epsilon'' \ll \epsilon' \ll \epsilon$. Thus, the volume change $|f'_{\theta}| \ll \epsilon^{\frac{3D}{2}}$ vanishes, and its inverse $|(f^{-1})'_{\theta}| \gg \epsilon^{-\frac{3D}{2}}$ explodes as $\epsilon$ is reduced, rendering the flow ill-conditioned regardless of the distribution at hand. This is additional to the ill-condition of the translation terms $t(b)$ approximating step functions. %

Together, this calls for a new universality guarantee that is based on a new coupling flow construction. Our new construction, presented in the following sections, uses a flow that is neither arbitrarily ill-conditioned nor volume-preserving. Also, it converges globally and considers vanilla affine coupling blocks.

\subsection{Convergence Metric}
\label{sec:convergence-metric}

Ideally, a universality theorem gives a guarantee on the Kullback-Leibler divergence (\cref{eq:loss}) to measure the progress of convergence. The KL is the loss used in practice and implies weak convergence (ensuring convergence of expectation values), and convergence of densities \cite{gibbs2002choosing}. Also, as we showed previously in \cref{sec:limitations-vol-preserving-flows}, the KL is able to distinguish the expressivity between volume-preserving and non-volume-preserving flows. %

The metric of convergence we consider in our proof is related to the Kullback-Leibler divergence. To construct it, consider the image of $p(x)$ through the flow $f_\theta(x)$, that is the latent distribution the flow actually creates:
\begin{equation}
	p_\theta(z) = ((f_\theta)_\sharp p(x))(z) = p(f_\theta^{-1}(z))|(f^{-1}_\theta)'(z) |.
\end{equation}
Now rewrite the loss $\Ll$ in \cref{eq:loss} into a form which compares $p_\theta(z)$ to the target latent distribution $p(z)$:
\begin{align}
    \Ll &
    = \KL{p(x)}{p_\theta(x)} 
    \\&
    = \int p(x) \log \frac{p(x)}{p(z=f_\theta(x))|f_\theta'(x)|}dx \\&
    = \int p(z) |f_\theta^{-1}{}'(z)| \log \frac{p(f_\theta^{-1}(z))|f_\theta^{-1}{}'(z)|}{p(z)}dz \\&
    = \KL{p_\theta(z)}{p(z)}. \label{eq:loss-latent-space}
\end{align}
This identity shows that the divergence between the true $p(x)$ and the model $p_\theta(x)$ can equally be measured in the latent space, via the KL divergence between the current latent distribution that the model generates from the data $p_\theta(z)$ and the target latent distribution $p(z)$.

Let us now consider what happens if we append one more affine coupling block $f_{\textnormal{blk}; \theta_+}$ to an existing normalizing flow $f_\theta(x)$, resulting in a flow which we call $p_{\theta \cup \theta_+}(x)$. Let us choose the parameters of the additional coupling block $\theta_+$ such that it maximally reduces the loss without changing the previous parameters:
\begin{equation}
    \label{eq:minimal-layer-loss}
    \min_{\theta_+} \KL{p_{\theta \cup \theta_+}(z)}{p(z)}.
\end{equation}
We now propose to use the difference in loss before and after this layer to measure the convergence of the flow:
\begin{definition}
	\label{def:convergence-measure}
	Given a normalizing flow $f_\theta(x)$ on a continuous probability distribution $p(x)$ with finite first and second moment and $p(x) > 0$ everywhere.
	Then, we define the \emph{loss improvement by an affine coupling block} as:
	\begin{equation}
		\label{eq:loss-improvement}
		\begin{matrix*}[l]
			\Delta_{\textnormal{affine}}(p_\theta(z)) \\
			\displaystyle := \KL{p_{\theta}(z)}{p(z)} - \min_{\theta_+} \KL{p_{\theta \cup \theta_+\!}(z)}{p(z)},
		\end{matrix*}
	\end{equation}
	where $\theta_+ = (Q, \phi)$ parameterizes a single $L$-bi-Lipschitz affine coupling block whose conditioner neural network $\psi_{\phi}$ has at least two hidden layers of finite width and ReLU activations.
\end{definition}
The coupling block is restricted to be bi-Lipschitz in order to be well-conditioned. This means that we can choose $L > 1$ such that $L^{-1} < \|f_\text{cpl}(x) - f_\text{cpl}(y)\|/\|x - y\|  < L$, making both forward and inverse passes through each coupling well-conditioned. Choosing a smaller $L$ will result in more coupling blocks, each more numerically stable. Note that we assume ReLU networks for mathematical convenience, but think that the definition is equivalent to versions with different activation functions.

The following theorem shows that $\Delta_{\textnormal{affine}}(p_\theta(z))$ is a useful measure of convergence to the standard normal distribution:
\begin{theorem}
    \label{thm:gaussian-identification}
    With the definitions from \cref{def:convergence-measure}:
    \begin{equation}
        p_\theta(z) = \Nn(z; 0, I)
        \Leftrightarrow
        \Delta_{\textnormal{affine}}(p_\theta(z)) = 0.
    \end{equation}
\end{theorem}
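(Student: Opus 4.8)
The statement is an equivalence, and the forward direction is immediate: if $p_\theta(z) = \Nn(z;0,I)$ then $\KL{p_\theta(z)}{p(z)} = 0$, and since the KL divergence is nonnegative and the appended block can realize the identity (take $Q = I$, $s_i \equiv 1$, $t_i \equiv 0$, which is admissible), the minimum in \cref{eq:loss-improvement} also equals $0$. Hence $\Delta_{\textnormal{affine}}(p_\theta(z)) = 0$. All the work is in the converse, which I would prove by contraposition: writing $q := p_\theta(z)$, I would show that if $q \neq \Nn$ then some admissible block strictly decreases the latent KL, so $\Delta_{\textnormal{affine}}(q) > 0$.

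First I would record the exact effect of a single block with rotation $Q$. Since $\Nn$ is rotation invariant, the rotation leaves the KL unchanged; splitting the rotated variable into passive $b$ and active $a$, the chain rule for the KL divergence together with the factorization $\Nn(a,b)=\Nn(a)\Nn(b)$ gives
\[
\KL{q_Q}{\Nn} = \KL{q_Q(b)}{\Nn(b)} + \EE_{b}\bigl[\KL{q_Q(a\mid b)}{\Nn(a)}\bigr].
\]
The coupling only reshapes $q_Q(a\mid b)$ by a coordinate-wise affine map $a_i \mapsto s_i(b)a_i + t_i(b)$ and leaves $q_Q(b)$ untouched, so the attainable improvement in frame $Q$ is the expected reduction of the inner term. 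A direct computation shows the optimal map standardizes each active coordinate, $s_i^*(b) = 1/\sigma_i(b)$ and $t_i^*(b) = -\mu_i(b)/\sigma_i(b)$, and that the per-$b$ reduction is \emph{strictly} positive unless $q_Q(a\mid b)$ already has conditional mean zero and unit conditional variance in every active coordinate. Consequently $\Delta_{\textnormal{affine}}(q) = 0$ forces, for every rotation $Q$ and almost every $b$, that each active coordinate of $q_Q(a\mid b)$ has conditional mean zero and unit conditional variance.

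The crux is then to show that these standardized conditionals in all frames force $q = \Nn$. I would argue through the characteristic function $\varphi(\eta) = \EE_{z\sim q}[e^{i\langle \eta,z\rangle}]$, which is twice differentiable precisely because $q$ has finite second moments. Testing the conditional-mean-zero condition against $e^{i\langle\eta,z\rangle}$ with $\eta$ in the passive subspace $V$ (a function of $b$ only) yields $\partial_u\varphi(\eta) = 0$ for every $\eta\in V$ and every active direction $u\in V^\perp$; since for each fixed $\eta$ the admissible active directions sweep out all of $\eta^\perp$ as $V$ and $u$ vary, this gives $\nabla\varphi(\eta)\parallel\eta$ everywhere, so $\varphi$ is radial and $q$ is spherically symmetric. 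Testing the unit-conditional-variance condition the same way gives $\partial_u^2\varphi(\eta) = -\varphi(\eta)$ for $u\perp\eta$; for a radial $\varphi(\eta)=g(\norm{\eta})$ this tangential second derivative equals $g'(r)/r$, so $g$ solves $g'(r) = -r\,g(r)$ with $g(0)=1$, whose unique solution is $g(r)=e^{-r^2/2}$. Hence $\varphi(\eta)=e^{-\norm{\eta}^2/2}$ and $q=\Nn(0,I)$, contradicting $q\neq\Nn$.

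The main obstacle is exactly this characterization step, and I would stress that the characteristic-function route is what makes it work: it uses only first and second conditional moments, matching the finite-moment hypothesis of \cref{def:convergence-measure}, whereas a naive moment-matching argument would need higher moments that are not assumed to exist. A secondary point to close is admissibility. The ideal standardizing conditioner $(s^*,t^*)$ is continuous and bounded whenever $q$ is continuous, positive, and has finite second moments, so it is bi-Lipschitz for a suitable $L$ and can be approximated uniformly on compact sets by a finite-width, two-hidden-layer ReLU network; by continuity of the improvement functional in $(s,t)$, a sufficiently accurate approximation still yields a strictly positive improvement, so the strict decrease survives the restriction to the admissible blocks of \cref{def:convergence-measure}.
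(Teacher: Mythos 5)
Your proposal is correct and shares the paper's overall architecture (trivial forward direction; converse by showing that a vanishing improvement forces zero conditional means and unit conditional variances of the active coordinates in every rotated frame, then that these standardized conditionals characterize $\Nn(0,I)$; finally an approximation step to pass from ideal conditioners to admissible ReLU blocks). The genuine difference is in the characterization step, which is the heart of the theorem. The paper imports two external results: Eaton's theorem (zero conditional means in all orthogonal frames imply spherical symmetry) and a Cambanis--Bryc theorem (a spherical law with constant conditional second moment of the complementary block is Gaussian). You instead prove both facts from scratch through the characteristic function: testing $\EE[a_i\mid b]=0$ against $e^{i\langle\eta,z\rangle}$ with $\eta$ in the passive subspace gives $\nabla\varphi(\eta)\parallel\eta$, hence $\varphi$ radial; testing the unit-variance condition gives the tangential ODE $g'(r)=-r\,g(r)$, hence $\varphi(\eta)=e^{-\norm{\eta}^2/2}$. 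This buys a self-contained, elementary argument that uses exactly the finite-second-moment hypothesis and nothing more, at the cost of redoing (special cases of) known characterization theorems; the paper's route is shorter on the page but leans on the cited literature. One point where you are looser than the paper: your admissibility step asserts that the ideal conditioner $(s^*,t^*)$ is bounded and bi-Lipschitz, which need not hold globally (e.g.\ $\sigma_i(b)$ may degenerate in the tails, so $1/\sigma_i(b)$ is unbounded); the paper's Lemma~\ref{lem:convergence-equivalence} avoids this by constructing an explicit compactly supported ReLU bump that moves $\sigma_i$ toward $1$ only on a box where $\sigma_i$ is bounded away from $1$, and then interpolates toward the identity to enforce $L$-bi-Lipschitzness. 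Your compact-set approximation remark points in the right direction, but to be airtight you should make the localization explicit rather than appeal to boundedness of the global optimum.
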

This shows that the loss improvement by an affine coupling block~$\Delta_{\textnormal{affine}}(p_\theta(z))$ is a useful convergence metric for normalizing flows: If adding more layers has no effect, the latent distribution has converged to the right distribution.

In the remainder of this section, we give a sketch of the proof of \cref{thm:gaussian-identification}, with technical details moved to \cref{proof:gaussian-identification}. We will continue with our universality theorem in \cref{sec:affine-coupling-universality}.

We proceed as follows: First, we use an explicit form of the maximal loss improvement $\Delta_{\textnormal{affine}}^*(p_\theta(z))$ for infinitely expressive affine coupling blocks \cite{draxler2020characterizing}. Then, we show in \cref{lem:convergence-equivalence} that convergence of these unrealistic networks is equivalent to convergence of finite ReLU networks. Finally, we show that $\Delta_{\textnormal{affine}}(p_\theta(z)) = 0$ implies $p_\theta(z) = \Nn(z; 0, I)$. While this derivation is constructed for affine coupling blocks, it also holds for coupling functions which are more expressive (see \cref{app:compatible-couplings} for all applicable couplings we are aware of): If an affine coupling block cannot make an improvement, neither can a more expressive coupling. The other direction is trivial, since by $p_\theta(z) = \Nn(0, I)$, no loss improvement is possible.

If we assume for a moment that neural networks can exactly represent arbitrary continuous functions, then this hypothetical maximal loss improvement was computed by \citet[Theorem 1]{draxler2020characterizing}. A single affine coupling block with a fixed rotation layer $Q$, in order to maximally reduce the loss, will standardize the data by normalizing the first two moments of the active half of dimensions $a = (Qx)_{D/2 + 1, \dots, D}$ conditioned on the passive half of dimensions $b = (Qx)_{1, \dots, D}$. The moments before the coupling
\begin{align}
    \label{eq:conditional-moments}
    \EE_{a_i|b}[a_i] &= m_i(b), &\Var_{a_i|b}[a_i] &= \sigma_i(b)
\intertext{are mapped to:}
    \label{eq:conditionally-normalized}
    \EE_{\tilde a_i|b}[\tilde a_i] &= 0, &\Var_{\tilde a_i|b}[\tilde a_i] &= 1.
\end{align}
This is achieved via the following affine transformation, shifting the conditional mean to zero and scaling the conditional standard deviation to one:
\begin{equation}
    \label{eq:action-single-layer}
    \tilde a_i(a_i; b) = \frac{1}{\sigma_i(b)}(a_i - m_i(b)).
\end{equation}
In terms of loss, this transformation can at most achieve the following loss improvement, with a contribution from each passive coordinate $b$:
\begin{equation}
    \label{eq:best-loss-improvement}
    \begin{matrix*}[l]
	    \Delta_{\textnormal{affine}}^*(p_\theta(z)) \\
	    \displaystyle = \max_Q \tfrac12 \sum_i^{D/2} \EE_{b}\big[ \underbrace{m_i^2(b)}_\text{(A)} + \underbrace{\sigma_i^2(b) - 1 - \log \sigma_i^2(b)}_\text{(B)} \big],
    \end{matrix*}
\end{equation}
where the dependence on $Q$ enters through $(b, a) = Qx$. With the asterisk, we denote that this improvement cannot necessarily be reached in practice with finitely sized and well-conditioned neural networks. More expressive coupling functions can reduce the loss stronger, which we make explicit in \cref{sec:expressive-universality}.

What loss improvement can be achieved if we go back to finite neural networks? It turns out that $\Delta_{\textnormal{affine}}^*(p_\theta(z)) > 0$ is equivalent to the existence of a well-conditioned coupling block as in \cref{def:convergence-measure} with $\Delta_{\textnormal{affine}}(p_\theta(z)) > 0$:
\begin{lemma}
    \label{lem:convergence-equivalence}
    Given a continuous probability density $p(z)$ on $z \in \RR^k$. Then,
    \begin{align}
        \Delta_{\textnormal{affine}}^*(p_\theta(z)) &> 0
    \intertext{if and only if:}
        \label{eq:neural-net-reduces-loss}
        \Delta_{\textnormal{affine}}(p_\theta(z)) &> 0.
    \end{align}
\end{lemma}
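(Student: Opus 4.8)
The plan is to prove the two implications separately; only ``$\Rightarrow$'' requires real work.

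For the direction $\Delta_{\textnormal{affine}}(p_\theta(z)) > 0 \Rightarrow \Delta_{\textnormal{affine}}^*(p_\theta(z)) > 0$, I would simply observe that a finite-width, $L$-bi-Lipschitz affine coupling block with a ReLU conditioner is one particular affine coupling block, whereas by \citet[Theorem 1]{draxler2020characterizing} the quantity $\Delta_{\textnormal{affine}}^*$ in \cref{eq:best-loss-improvement} is the supremum of the attainable loss improvement over \emph{all} affine coupling blocks (a rotation $Q$ together with an arbitrary conditional scale and shift). Hence $\Delta_{\textnormal{affine}}^* \ge \Delta_{\textnormal{affine}}$, and positivity transfers. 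This is also where I would use the fact, stated after \cref{def:convergence-measure}, that a more expressive coupling dominates the affine one.

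For the converse I would first record the loss improvement of a general affine coupling with conditional scale $s_i(b) > 0$ and shift $t_i(b)$. A direct change-of-variables computation in latent space gives
\[
    \Delta(\{s_i,t_i\}) = \tfrac12 \sum_i^{D/2} \EE_b\big[\delta_i(s_i(b),t_i(b);b)\big],
\]
with per-coordinate integrand $\delta_i(s,t;b) = m_i^2 + \sigma_i^2 - (s m_i + t)^2 - s^2\sigma_i^2 + \log s^2$, all moments evaluated at $b$. The three crucial properties I would extract are that $\delta_i(\cdot,\cdot;b)$ is \emph{concave} in $(s,t)$ (a negative quadratic plus $\log s^2$), that it vanishes at the identity $(s,t) = (1,0)$, and that it is maximized at the normalizing choice \cref{eq:action-single-layer} with optimal value $h_i(b) := m_i^2 + \sigma_i^2 - 1 - \log\sigma_i^2 \ge 0$, recovering \cref{eq:best-loss-improvement}.

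The heart of the argument is then a localized small-step construction exploiting this concavity. Since $\Delta_{\textnormal{affine}}^* > 0$ and every $h_i \ge 0$, some coordinate $i_0$ satisfies $\EE_b[h_{i_0}] > 0$. Using continuity of $m_{i_0},\sigma_{i_0}$ (which follows from continuity and positivity of $p$), I would pick a Lipschitz cutoff $\chi$ supported on a compact set on which $\sigma_{i_0}$ is bounded away from $0$ and $\infty$ and $m_{i_0}$ is bounded, chosen so that $\EE_b[\chi h_{i_0}] > 0$. Setting $s^\sharp = 1 + \chi(s^*-1)$ and $t^\sharp = \chi t^*$ (identity off the compact set) and using concavity along the segment from the identity to the optimum yields the pointwise bound $\delta_{i_0}(s^\sharp,t^\sharp;b) \ge \chi(b) h_{i_0}(b) \ge 0$, whence $\Delta(\{s^\sharp,t^\sharp\}) > 0$. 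Concavity once more shows that scaling the deviation by a factor $\eta \in (0,1]$ still gives $\Delta \ge \eta\,\Delta(\{s^\sharp,t^\sharp\}) > 0$, and for small $\eta$ the block is $L$-bi-Lipschitz, because its scale lies within $O(\eta)$ of unity and its bounded, Lipschitz conditioner varies slowly.

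Finally I would replace these continuous, compactly-deviating target functions by a finite-width, two-hidden-layer ReLU network: such networks can be taken equal to the identity outside a bounded box, and since bi-Lipschitzness and strict positivity of $\Delta$ are open conditions, a sufficiently accurate approximation still satisfies every constraint of \cref{def:convergence-measure} while keeping $\Delta_{\textnormal{affine}} > 0$. I expect the main obstacle to be exactly this last step over the unbounded conditioner domain: one must keep the block well-conditioned for the prescribed constant $L$ and simultaneously control the tail contribution to $\EE_b$, which is precisely why I route through the concavity-based small step rather than through the exact, potentially ill-conditioned, normalizing map \cref{eq:action-single-layer}.
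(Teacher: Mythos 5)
Your proposal is correct, and for the substantive direction $\Delta_{\textnormal{affine}}^*>0\Rightarrow\Delta_{\textnormal{affine}}>0$ it takes a genuinely different route from the paper. The paper splits into cases according to whether the variance term $\EE_b[\sigma_i^2-1-\log\sigma_i^2]$ or the mean term $\EE_b[m_i^2]$ is responsible for the positive improvement; for the former it hand-builds an explicit two-hidden-layer ReLU bump that sets the scale to a constant $1/\sigma_{\max/2}$ on a small box and to $1$ outside, for the latter it invokes \citet[Theorem 1]{hornik1991approximation}, and it only afterwards restores $L$-bi-Lipschitzness by interpolating $\tilde a=\alpha(sa+t)+(1-\alpha)a$, justifying positivity of the interpolated improvement rather informally. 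Your argument instead observes that the per-$b$ improvement $\delta_i(s,t;b)$ is concave on $\{s>0\}\times\RR$, vanishes at the identity, and is maximized at the normalizer \cref{eq:action-single-layer} with value $h_i(b)\ge 0$; a cutoff-localized convex combination of identity and optimizer then gives the pointwise bound $\delta_{i_0}\ge\chi h_{i_0}\ge 0$ and hence a strictly positive expectation, treating $s$ and $t$ in one stroke, dispensing with the case split, and turning the paper's interpolation step into a rigorous inequality $\Delta\ge\eta\,\EE_b[\chi h_{i_0}]$. What the paper's route buys is an entirely explicit network (no appeal to an approximation theorem for the scale, and a concrete witness that two hidden layers suffice); what yours buys is a cleaner, quantitative argument whose only remaining work is the final approximation step. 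That last step is where you are vaguer than the paper: you need a two-hidden-layer ReLU conditioner that uniformly approximates $\bigl(1+\chi(s^*-1),\,\chi t^*\bigr)$ on the compact set \emph{and} is exactly the identity map $(1,0)$ outside a bounded box, which is not off-the-shelf universal approximation; to close it you would essentially reuse the paper's trapezoidal bump construction as the localizing factor, after which openness of the conditions in \cref{def:convergence-measure} finishes the argument as you say. (Both your proposal and the paper gloss over the fact that a coupling with non-constant $s(b)$ has an off-diagonal Jacobian block proportional to the unbounded coordinate $a$, so neither treatment of global bi-Lipschitzness is fully airtight; this is a shared issue, not a defect of your route.)
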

This says that the events $\Delta_{\textnormal{affine}}(p_\theta(z)) = 0$ and $\Delta_{\textnormal{affine}}^*(p_\theta(z)) = 0$ can be used interchangeably. The equivalence comes from the fact that if $\Delta_{\textnormal{affine}}^*(p_\theta(z)) > 0$, then we can always construct a conditioner neural network that scales the conditional standard deviations closer to one or the conditional means closer to zero, reducing the loss.
In the detailed proof in \cref{proof:convergence-equivalence} we also make use of a classical regression universal approximation theorem \cite{hornik1991approximation} and ensure the additional coupling block is well-conditioned.

Finally, if the first two \textit{conditional} moments of any latent distribution $p(z)$ are normalized for all rotations $Q$:
\begin{equation}
    \EE_{a_i|b}[a_i] = m_i(b) = 0, \quad \Var_{a_i|b}[a_i] = \sigma_i(b) = 1,
\end{equation}
then the distribution must be the standard normal distribution: $p(z) = \Nn(z; 0, I)$: \Cref{eq:best-loss-improvement} enforces two characteristics of $p_\theta(z)$ that uniquely identify the standard normal distribution: (A)~Is $p_\theta(z)$ rotationally symmetric, since $m_i(b) = 0$ for all $Q$ holds only for rotationally symmetric distributions \cite{eaton1986characterization}. (B)~This term non-negative and zero only for $\sigma_i(b) = 1$ for all $Q$, which uniquely identifies the standard normal from all rotationally symmetric distributions \cite{bryc1995normal}.

This concludes the proof sketch of \cref{thm:gaussian-identification} and we are now ready to present our universality result, employing $\Delta_{\textnormal{affine}}(p_\theta(z))$ as a convergence metric.

\subsection{Affine Coupling Flows Universality}
\label{sec:affine-coupling-universality}

We now confirm that affine coupling flows are a distributional universal approximator in terms of the convergence metric we derived in \cref{sec:convergence-metric}:
\begin{theorem}
    \label{thm:affine-coupling-universality}
    For every continuous $p(x)$ with finite first and second moment with infinite support, there is a sequence of normalizing flows $f_n(x)$ consisting of $n$ $L$-bi-Lipschitz affine coupling blocks such that:
    \begin{align}
    	p_n(z) \xrightarrow{n \to \infty} \Nn(z; 0, I),
    \end{align}
    in the sense that $\Delta_\textnormal{affine}(p_n(z)) \xrightarrow{n \to \infty} 0$.
\end{theorem}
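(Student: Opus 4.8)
The plan is to build the sequence $f_n$ greedily and then extract $\Delta_{\textnormal{affine}}(p_n(z)) \to 0$ from a telescoping argument, using that the loss equals a latent-space KL divergence. Recall from \cref{eq:loss-latent-space} that for any flow $\KL{p(x)}{p_\theta(x)} = \KL{p_\theta(z)}{p(z)}$ with target latent $p(z) = \Nn(z;0,I)$, so I will track the scalar sequence $\Ll_n := \KL{p_n(z)}{\Nn(z;0,I)}$ as coupling blocks are appended. I start from $f_0 = \id$ (zero blocks), so that $p_0(z) = p(x)$, which is continuous, strictly positive, and has finite first and second moments; hence $\Delta_{\textnormal{affine}}(p_0(z))$ is well defined in the sense of \cref{def:convergence-measure}, and the construction can begin.

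At step $n \ge 1$, given $f_{n-1}$ with latent $p_{n-1}(z)$, I append a single $L$-bi-Lipschitz affine coupling block whose parameters $\theta_+$ are chosen to realize at least half of the currently available improvement, i.e. $\Ll_n \le \Ll_{n-1} - \tfrac12 \Delta_{\textnormal{affine}}(p_{n-1}(z))$. Such a block exists directly from the definition of $\Delta_{\textnormal{affine}}$ as the gap to the infimum of post-block losses over exactly this admissible class (if $\Delta_{\textnormal{affine}}(p_{n-1}(z)) = 0$ then $p_{n-1}(z) = \Nn(z;0,I)$ already by \cref{thm:gaussian-identification}, and I simply append identity blocks). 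Since the identity coupling is always admissible, $\Ll_n \le \Ll_{n-1}$, so $(\Ll_n)$ is non-increasing and bounded below by $0$, and therefore converges to some $\Ll_\infty \ge 0$. Summing the per-step inequalities telescopes to $\tfrac12 \sum_{k=1}^{n} \Delta_{\textnormal{affine}}(p_{k-1}(z)) \le \Ll_0 - \Ll_n \le \Ll_0$, so the series $\sum_k \Delta_{\textnormal{affine}}(p_k(z))$ converges and hence $\Delta_{\textnormal{affine}}(p_n(z)) \to 0$. By \cref{thm:gaussian-identification}, this is precisely the claimed convergence of $p_n(z)$ to $\Nn(z;0,I)$ in the intended sense.

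Two points need care, and the second is the main analytic obstacle. First, the recursion must stay inside the regularity class of \cref{def:convergence-measure} so that $\Delta_{\textnormal{affine}}(p_n(z))$ is defined at every step: each coupling block is a bi-Lipschitz diffeomorphism, which preserves strict positivity of the density and, via $\|f(x)\| \le \|f(0)\| + L\|x\|$, preserves finiteness of the first two moments, so the induction is valid. Second, the whole argument is vacuous unless $\Ll_0 < \infty$. I expect to control this from the standing assumptions: boundedness of $p$ gives $\int p \log p \le \log \|p\|_\infty < \infty$ and bounds the entropy below, while the finite second moment controls the quadratic cross term $\tfrac12 \EE[\|z\|^2]$ appearing in $\KL{p_0(z)}{\Nn(z;0,I)}$, so the initial latent KL is finite; monotonicity $\Ll_n \le \Ll_0$ then keeps every subsequent loss finite automatically. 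A minor related subtlety is that the infimum defining $\Delta_{\textnormal{affine}}$ need not be attained, which is exactly why I demand only half (rather than all) of the improvement at each step; any summable slack $\epsilon_n$ would serve equally well.
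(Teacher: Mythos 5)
Your proof is correct and follows essentially the same route as the paper's: a greedy layer-wise construction in which each appended block's non-negative loss improvement telescopes against the finite initial latent KL $\Ll_0$, forcing the terms $\Delta_{\textnormal{affine}}(p_n(z))$ to vanish. Your extra care about possible non-attainment of the infimum (taking half the available improvement per step) and about verifying $\Ll_0 < \infty$ tidies up details the paper's proof leaves implicit, but does not change the argument.
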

This means that with increasing depth, the latent distribution of the flow converges to the standard normal.

The proof of \cref{thm:affine-coupling-universality} explicitly constructs a normalizing flow by following an iterative scheme. We start with the data distribution as our original guess for the latent distribution: $p_0(z) = p(x=z)$. Then, we repeatedly append individual affine coupling blocks $f_\text{blk}(x)$ consisting of a rotation $Q$ and a coupling $f_\text{cpl}$, optimizing the new parameters to maximally reduce the loss as in \cref{eq:minimal-layer-loss}.

This series of coupling blocks converges: $\Delta_{\textnormal{affine}}(p_\theta(z))$ measures how much adding each affine coupling block reduces the loss, but the total loss that can be reduced by the concatenation of many blocks is bounded. Since improvements $\Delta_{\textnormal{affine}}(p_\theta(z))$ are also non-negative, they must converge to zero for the sum to be finite \citep[Theorems 3.14 and 3.23]{rudin1976principles}. By \cref{thm:gaussian-identification}, the fixed point of this procedure is a standard normal distribution in the latent space. We give the full proof in \cref{app:proof-affine-coupling-universality}.

\Cref{fig:universality-example} shows an example for how \cref{thm:affine-coupling-universality} constructs the coupling flow in order to learn a toy distribution. The affine coupling flow is able to learn the distribution well, despite its difficult topology. Empirically, this is also true in terms of KL divergence: \Cref{fig:layer-improvement-vs-kl} in \cref{app:details-layer-wise-experiment} shows the relation between $\Delta_\textnormal{affine}(p_\theta(z))$ and the KL divergence for the flow, both of which decrease over the course of training.

\Cref{tab:improvements} summarizes how our construction is closer to practice than previous work \cite{teshima2020couplingbased, koehler2021representational}: We only use well-conditioned $L$-bi-Lipschitz couplings and allow variable volume change $|f_\theta'(x)|$, as evidenced by the rescaling term in \cref{eq:action-single-layer}. We give further details on the sensitivity of $\Delta_\text{affine}(p_\theta(z))$ to volume-preserving transformations in \cref{sec:volume-perserving-under-delta}.

\textbf{Limitations:} Despite these advances, there are some properties we hope can be improved in the future: 
First, our construction shows that we can build a deep enough flow with arbitrary precision, but we have not exploited that blocks can be jointly optimized. Thus, while our construction shows universality of end-to-end training, we expect a flow trained this way to require fewer blocks than our iterative proof for the same performance.
Secondly, it is unclear how the convergence metric \cref{sec:convergence-metric} is related to convergence in the loss used in practice, the KL divergence given in \cref{eq:loss}. In practice, we find that constructing a coupling flow through iterative training converges in KL divergence (see \cref{fig:layer-improvement-vs-kl} in \cref{app:details-layer-wise-experiment}), so we conjecture that our way of constructing a universal coupling flow converges in KL divergence. The reverse holds: We show in \cref{corr:kl-implies-improvement} in \cref{app:kl-implies-improvement} that convergence in KL implies convergence under our new metric.
Finally, our proof gives no guarantee on the number of required coupling blocks to achieve a certain performance. Related work shows that the number of layers is constant with dimension for the special case of Gaussian data \cite{koehler2021representational, draxler2022whitening}, but in practice is a hyperparameter that is to be tuned depending on the data and together with the complexity of the subnetworks. %
We hope that our contribution paves the way towards a full understanding of affine coupling-based normalizing flows.

\subsection{Expressive Coupling Flow Universality}
\label{sec:expressive-universality}

The above \cref{thm:affine-coupling-universality} shows that affine couplings $c(a_i; \theta) = s a_i + t$ are sufficient for universal distribution approximation. As mentioned in \cref{sec:coupling-flows}, a plethora of more expressive coupling functions have been suggested, for example neural spline flows \cite{durkan2019neural} that use monotone rational-quadratic splines as the coupling function. It turns out that by choosing the parameters in the right way, all coupling functions we are aware of can exactly represent an affine coupling, except for the volume-preserving variants, see \cref{app:compatible-couplings}. For example, a rational quadratic spline can be parameterized as an affine function by using equidistant knots $(a_k, \tilde a_k)$ such that $\tilde a_k = s a_k + t$ and fixing the derivative at each knot to $s$.

\begin{figure}
    \centering
    \includegraphics[width=\linewidth]{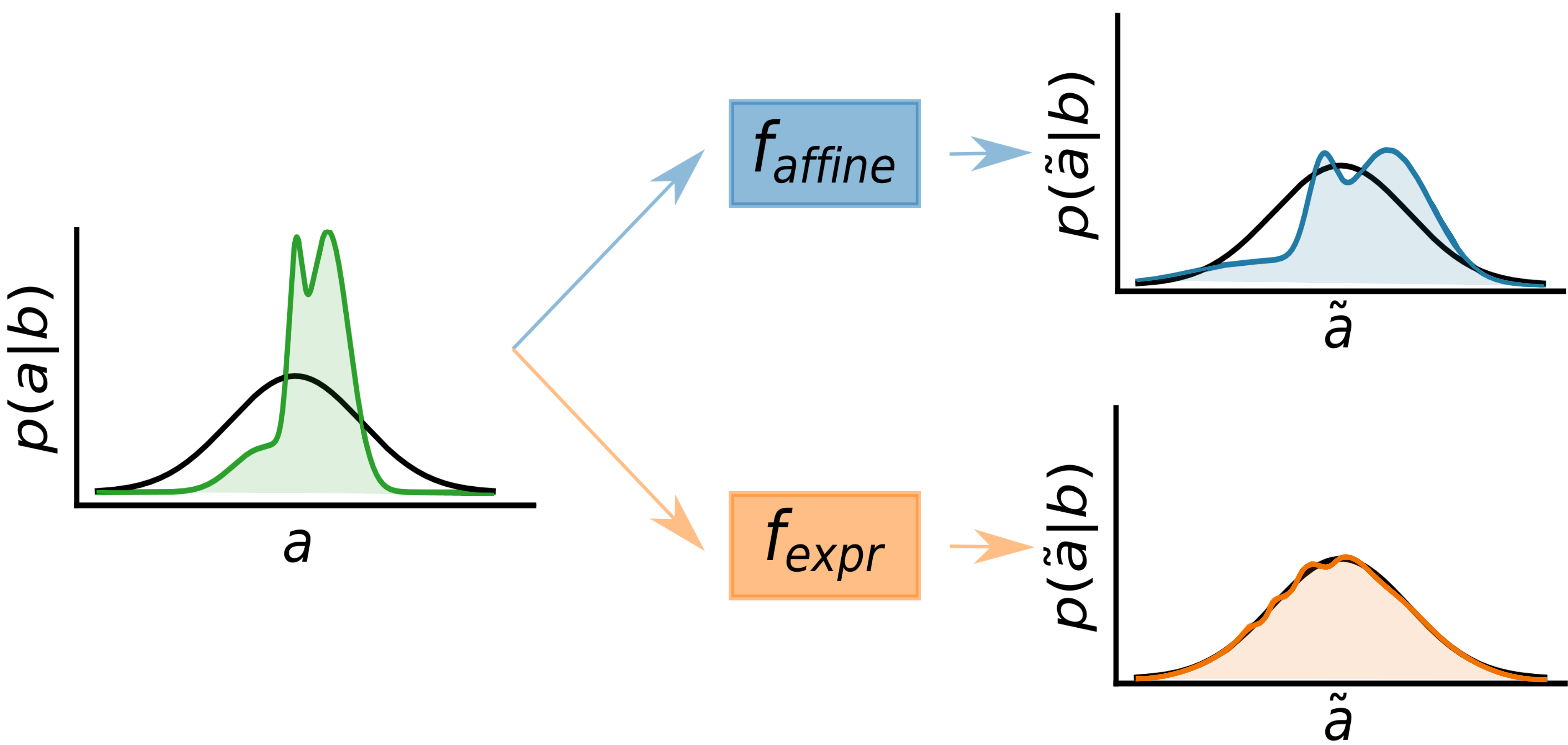}
    \caption{
        Only coupling functions strictly more expressive than affine can fit non-Gaussian conditionals $p(a_i|b)$ in a single block, resulting in a faster loss decrease (\cref{eq:improvement-split}). Note that coupling flows of both kinds are universal.
    }
    \label{fig:expressive-coupling}
\end{figure}

Thus, the universality of more expressive coupling functions follows immediately from \cref{thm:affine-coupling-universality}, just like \citet{ishikawa2022universal} extended their results from affine to more expressive couplings:
\begin{corollary}
    \label{thm:expressive-coupling-universality}
	For every continuous $p(x)$ with finite first and second moment with infinite support, there is a sequence of normalizing flows $f_n(x)$ consisting of $n$ coupling blocks \emph{with coupling functions at least as expressive as affine couplings} such that:
	\begin{align}
		p_n(z) \xrightarrow{n \to \infty} \Nn(z; 0, I),
	\end{align}
	in the sense that $\Delta_\textnormal{affine}(p_n(z)) \xrightarrow{n \to \infty} 0$.
\end{corollary}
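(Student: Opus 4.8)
The plan is to derive this corollary directly from \cref{thm:affine-coupling-universality} by exploiting that the convergence criterion $\Delta_{\textnormal{affine}}(p_n(z))$ is a functional of the latent distribution $p_n(z)$ alone and does not depend on the architecture that produced it. The only additional ingredient is the observation, recorded case by case in \cref{app:compatible-couplings} and illustrated in \cref{sec:expressive-universality} for rational-quadratic splines, that every coupling function at least as expressive as affine can be parameterized to reproduce an affine coupling $c(x;s,t)=sx+t$ exactly (for the spline example, by placing equidistant knots $(a_k,\tilde a_k)$ with $\tilde a_k = s a_k + t$ and fixing the derivative $s$ at every knot). Thus any affine coupling block is a special case of an expressive coupling block.

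First I would invoke \cref{thm:affine-coupling-universality} to obtain, for the given $p(x)$, a sequence of $L$-bi-Lipschitz affine coupling flows $f_n^{\textnormal{aff}}$ whose latent distributions satisfy $\Delta_{\textnormal{affine}}(p_n^{\textnormal{aff}}(z)) \to 0$. Next, for each of the $n$ affine coupling blocks in $f_n^{\textnormal{aff}}$, I would select the parameters of the more expressive coupling function so that it equals the corresponding affine map pointwise; concatenating these with the same rotation layers $Q$ yields expressive coupling flows $f_n$ that coincide with $f_n^{\textnormal{aff}}$ as functions on $\RR^D$. Because they are the same map, they push $p(x)$ to the identical latent distribution, $p_n(z) = p_n^{\textnormal{aff}}(z)$, and inherit the same $L$-bi-Lipschitz constant. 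Consequently $\Delta_{\textnormal{affine}}(p_n(z)) = \Delta_{\textnormal{affine}}(p_n^{\textnormal{aff}}(z)) \xrightarrow{n\to\infty} 0$, which is exactly the claimed convergence.

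An alternative and slightly more informative route is to run the iterative construction of \cref{thm:affine-coupling-universality} directly with the expressive couplings: at each step the optimal expressive block reduces the loss by at least as much as the optimal affine block, since the affine block remains admissible, so the per-step improvement dominates $\Delta_{\textnormal{affine}}(p_n(z)) \ge 0$. The total reducible loss is still bounded below (the KL divergence is nonnegative), so the per-step improvements form a convergent nonnegative series and must tend to zero; a squeeze then forces $\Delta_{\textnormal{affine}}(p_n(z)) \to 0$ as well. This variant additionally connects to \cref{fig:expressive-coupling}, explaining why expressive couplings can converge at least as fast.

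I expect no substantive obstacle: the entire content reduces to the embedding statement that expressive couplings contain affine ones, which is the verification carried out in \cref{app:compatible-couplings}. The only point demanding care is the conceptual one already noted, namely that $\Delta_{\textnormal{affine}}$ is defined through a best-case affine perturbation of a \emph{fixed} distribution, so it is legitimate to evaluate it on the latent distribution of an expressive flow. The volume-preserving coupling variants are explicitly excluded from the expressive family, consistent with their non-universality established in \cref{thm:volume-preserving-not-universal}.
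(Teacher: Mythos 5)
Your proposal is correct and matches the paper's own argument: the corollary is obtained by noting that every coupling function at least as expressive as affine can exactly reproduce an affine coupling (as verified case by case in \cref{app:compatible-couplings}), so the sequence from \cref{thm:affine-coupling-universality} can be realized verbatim within the expressive family and the convergence $\Delta_\textnormal{affine}(p_n(z)) \to 0$ carries over unchanged. Your alternative route via the dominated per-step improvement is also sound and mirrors the paper's discussion around \cref{eq:improvement-split}.
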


Our proof of \cref{thm:affine-coupling-universality}, constructed through layer-wise training, shows how more expressive coupling functions can outperform affine functions using the same number of blocks.
Similar to the loss improvement for an affine coupling in \cref{eq:loss-improvement}, let us compute the maximally possible loss improvement for an arbitrarily flexible coupling function:
\begin{equation}
    \label{eq:improvement-split}
    \Delta_{\textnormal{universal}}^* = \max_Q \EE_{b}[J(b) + \Delta_{\textnormal{affine}}^*(Q)] \geq \Delta_{\textnormal{affine}}^*,
\end{equation}
where the expectation again goes over the passive coordinate $b = (Qx)_{1, \dots, D/2}$ and we have omitted the dependency on $p_\theta(z)$ for brevity.

Here, the additional loss improvement is the conditional negentropy $J(b) = \sum_{i=1}^{D/2} \KL{p_\theta(a_i|b)}{\Nn(m_i(b), \sigma_i(b)}$, which measures the deviation of each active dimension from a Gaussian distribution with matching mean and variance.
An affine coupling function $c(a_i; \theta) = sa_i + t$ doesn't influence this term, due to its symmetrical effect on both sides of the KL in $J(p)$ \citep[Lemma 1]{draxler2022whitening}. More expressive coupling blocks, however, are able to tap on this loss component if the conditional distributions $p(a_i|b)$ are significantly non-Gaussian, see \cref{fig:expressive-coupling} for an example.

The impact of this gain likely varies with the dataset. For instance, in images, the distribution of one color channel of one pixel conditioned on the other color channels in the entire image, often shows a simple unimodal pattern with low negentropy. This may explain why affine coupling blocks are enough to learn the distribution of images \cite{kingma2018glow}.
We give additional technical details on \cref{eq:improvement-split} and the subsequent arguments in \cref{app:pythagorean-split}.

\section{Conclusion}

Our new universality proofs show an intriguing hierarchy of the universality of different coupling blocks:
\begin{enumerate}[itemsep=1pt,topsep=0pt]
    \item \textit{Volume-preserving normalizing flows} are not universal in KL divergence and thus cannot learn all targets $p(x)$.
    \item \textit{Affine coupling flows} such as RealNVP \cite{dinh2017density} are distributional universal approximators despite their seemingly restrictive architecture.
    \item Coupling flows with \textit{more expressive coupling functions} are also universal approximators, but they converge faster by tapping on an additional loss component in layer-wise training.
\end{enumerate}

Our work theoretically grounds why coupling blocks are the standard choice for practical applications with normalizing flows, combined with their easy implementation and speed in training and inference.
We remove spurious constructions present in previous proofs and use a simple principle instead: Construct a flow layer by layer until no more loss improvement can be achieved.

Using volume-preserving flows may have negatively affected existing work. We show what distribution $p^*(x)$ they approximate instead of the true target $p(x)$ and propose how universality can be recovered by learning the actual latent distribution after training.

\section*{Impact Statement}

This paper presents work whose goal is to advance the field of Machine Learning. There are many potential societal consequences of our work, none of which we feel must be specifically highlighted here.

\section*{Acknowledgements}

This work is supported by Deutsche Forschungsgemeinschaft (DFG, German Research Foundation) under Germany's Excellence Strategy EXC-2181/1 - 390900948 (the Heidelberg STRUCTURES Cluster of Excellence).
It is also supported by the Vector Stiftung in the project TRINN (P2019-0092).
We thank Armand Rousselot and Peter Sorrenson for the fruitful discussions and feedback. We thank Vincent Souveton for the helpful comments regarding Hamiltonian Monte Carlo.

\bibliography{references}

\begin{thebibliography}{50}
\providecommand{\natexlab}[1]{#1}
\providecommand{\url}[1]{\texttt{#1}}
\expandafter\ifx\csname urlstyle\endcsname\relax
  \providecommand{\doi}[1]{doi: #1}\else
  \providecommand{\doi}{doi: \begingroup \urlstyle{rm}\Url}\fi

\bibitem[Albergo et~al.(2019)Albergo, Kanwar, and Shanahan]{albergo2019flowbased}
Albergo, M.~S., Kanwar, G., and Shanahan, P.~E.
\newblock Flow-based generative models for {{Markov}} chain {{Monte Carlo}} in lattice field theory.
\newblock \emph{Physical Review D}, 100\penalty0 (3):\penalty0 034515, August 2019.
\newblock ISSN 2470-0010, 2470-0029.
\newblock \doi{10.1103/PhysRevD.100.034515}.

\bibitem[Ardizzone et~al.(2018{\natexlab{a}})Ardizzone, Bungert, Draxler, K{\"o}the, Kruse, Schmier, and Sorrenson]{ardizzone2018framework}
Ardizzone, L., Bungert, T., Draxler, F., K{\"o}the, U., Kruse, J., Schmier, R., and Sorrenson, P.
\newblock Framework for {{Easily Invertible Architectures}} ({{FrEIA}}), 2018{\natexlab{a}}.

\bibitem[Ardizzone et~al.(2018{\natexlab{b}})Ardizzone, Kruse, Rother, and K{\"o}the]{ardizzone2018analyzing}
Ardizzone, L., Kruse, J., Rother, C., and K{\"o}the, U.
\newblock Analyzing {{Inverse Problems}} with {{Invertible Neural Networks}}.
\newblock In \emph{International {{Conference}} on {{Learning Representations}}}, 2018{\natexlab{b}}.

\bibitem[Bryc(1995)]{bryc1995normal}
Bryc, W.
\newblock \emph{The {{Normal Distribution}}}, volume 100 of \emph{Lecture {{Notes}} in {{Statistics}}}.
\newblock Springer New York, New York, NY, 1995.
\newblock ISBN 978-0-387-97990-8 978-1-4612-2560-7.
\newblock \doi{10.1007/978-1-4612-2560-7}.

\bibitem[Cambanis et~al.(1981)Cambanis, Huang, and Simons]{cambanis1981theory}
Cambanis, S., Huang, S., and Simons, G.
\newblock On the theory of elliptically contoured distributions.
\newblock \emph{Journal of Multivariate Analysis}, 11\penalty0 (3):\penalty0 368--385, September 1981.
\newblock ISSN 0047259X.
\newblock \doi{10.1016/0047-259X(81)90082-8}.

\bibitem[Cardoso(2003)]{cardoso2003dependence}
Cardoso, J.-F.
\newblock Dependence, {{Correlation}} and {{Gaussianity}} in {{Independent Component Analysis}}.
\newblock \emph{Journal of Machine Learning Research}, 4:\penalty0 1177--1203, 2003.
\newblock ISSN 1532-4435.

\bibitem[Chen \& Gopinath(2000)Chen and Gopinath]{chen2000gaussianization}
Chen, S. and Gopinath, R.
\newblock Gaussianization.
\newblock In Leen, T., Dietterich, T., and Tresp, V. (eds.), \emph{Advances in {{Neural Information Processing Systems}}}, 2000.

\bibitem[Dibak et~al.(2022)Dibak, Klein, Kr{\"a}mer, and No{\'e}]{dibak2022temperature}
Dibak, M., Klein, L., Kr{\"a}mer, A., and No{\'e}, F.
\newblock Temperature steerable flows and {{Boltzmann}} generators.
\newblock \emph{Phys. Rev. Res.}, 4\penalty0 (4):\penalty0 L042005, October 2022.
\newblock \doi{10.1103/PhysRevResearch.4.L042005}.

\bibitem[Dinh et~al.(2015)Dinh, Krueger, and Bengio]{dinh2015nice}
Dinh, L., Krueger, D., and Bengio, Y.
\newblock {{NICE}}: {{Non-linear Independent Components Estimation}}.
\newblock In \emph{International {{Conference}} on {{Learning Representations}}, {{Workshop Track}}}, 2015.

\bibitem[Dinh et~al.(2017)Dinh, {Sohl-Dickstein}, and Bengio]{dinh2017density}
Dinh, L., {Sohl-Dickstein}, J., and Bengio, S.
\newblock Density estimation using {{Real NVP}}.
\newblock In \emph{International {{Conference}} on {{Learning Representations}}}, 2017.

\bibitem[Draxler et~al.(2020)Draxler, Schwarz, Schn{\"o}rr, and K{\"o}the]{draxler2020characterizing}
Draxler, F., Schwarz, J., Schn{\"o}rr, C., and K{\"o}the, U.
\newblock Characterizing the {{Role}} of a {{Single Coupling Layer}} in {{Affine Normalizing Flows}}.
\newblock In \emph{German {{Conference}} on {{Pattern Recognition}}}, 2020.

\bibitem[Draxler et~al.(2022)Draxler, Schn{\"o}rr, and K{\"o}the]{draxler2022whitening}
Draxler, F., Schn{\"o}rr, C., and K{\"o}the, U.
\newblock Whitening {{Convergence Rate}} of {{Coupling-based Normalizing Flows}}.
\newblock In \emph{Advances in {{Neural Information Processing Systems}}}, 2022.

\bibitem[Draxler et~al.(2023)Draxler, K{\"u}hmichel, Rousselot, M{\"u}ller, Schnoerr, and Koethe]{draxler2023convergence}
Draxler, F., K{\"u}hmichel, L., Rousselot, A., M{\"u}ller, J., Schnoerr, C., and Koethe, U.
\newblock On the {{Convergence Rate}} of {{Gaussianization}} with {{Random Rotations}}.
\newblock In \emph{International {{Conference}} on {{Machine Learning}}}, 2023.

\bibitem[Durkan et~al.(2019{\natexlab{a}})Durkan, Bekasov, Murray, and Papamakarios]{durkan2019cubicspline}
Durkan, C., Bekasov, A., Murray, I., and Papamakarios, G.
\newblock Cubic-{{Spline Flows}}.
\newblock In \emph{International {{Conference}} on {{Machine Learning}}, {{Workshop Track}}}, 2019{\natexlab{a}}.
\newblock \doi{10.48550/ARXIV.1906.02145}.

\bibitem[Durkan et~al.(2019{\natexlab{b}})Durkan, Bekasov, Murray, and Papamakarios]{durkan2019neural}
Durkan, C., Bekasov, A., Murray, I., and Papamakarios, G.
\newblock Neural {{Spline Flows}}.
\newblock In \emph{Advances in {{Neural Information Processing Systems}}}, 2019{\natexlab{b}}.

\bibitem[Eaton(1986)]{eaton1986characterization}
Eaton, M.~L.
\newblock A characterization of spherical distributions.
\newblock \emph{Journal of Multivariate Analysis}, 20\penalty0 (2):\penalty0 272--276, December 1986.
\newblock ISSN 0047259X.
\newblock \doi{10.1016/0047-259X(86)90083-7}.

\bibitem[Gibbs \& Su(2002)Gibbs and Su]{gibbs2002choosing}
Gibbs, A.~L. and Su, F.~E.
\newblock On {{Choosing}} and {{Bounding Probability Metrics}}.
\newblock \emph{International Statistical Review / Revue Internationale de Statistique}, 70\penalty0 (3):\penalty0 419--435, 2002.
\newblock ISSN 03067734, 17515823.

\bibitem[Glorot \& Bengio(2010)Glorot and Bengio]{glorot2010understanding}
Glorot, X. and Bengio, Y.
\newblock Understanding the difficulty of training deep feedforward neural networks.
\newblock In \emph{International {{Conference}} on {{Artificial Intelligence}} and {{Statistics}}}, 2010.

\bibitem[Harris et~al.(2020)Harris, Millman, {van der Walt}, Gommers, Virtanen, Cournapeau, Wieser, Taylor, Berg, Smith, Kern, Picus, Hoyer, {van Kerkwijk}, Brett, Haldane, {del R{\'i}o}, Wiebe, Peterson, {G{\'e}rard-Marchant}, Sheppard, Reddy, Weckesser, Abbasi, Gohlke, and Oliphant]{harris2020array}
Harris, C.~R., Millman, K.~J., {van der Walt}, S.~J., Gommers, R., Virtanen, P., Cournapeau, D., Wieser, E., Taylor, J., Berg, S., Smith, N.~J., Kern, R., Picus, M., Hoyer, S., {van Kerkwijk}, M.~H., Brett, M., Haldane, A., {del R{\'i}o}, J.~F., Wiebe, M., Peterson, P., {G{\'e}rard-Marchant}, P., Sheppard, K., Reddy, T., Weckesser, W., Abbasi, H., Gohlke, C., and Oliphant, T.~E.
\newblock Array programming with {{NumPy}}.
\newblock \emph{Nature}, 585\penalty0 (7825):\penalty0 357--362, 2020.

\bibitem[Ho et~al.(2019)Ho, Chen, Srinivas, Duan, and Abbeel]{ho2019flow}
Ho, J., Chen, X., Srinivas, A., Duan, Y., and Abbeel, P.
\newblock Flow++: {{Improving Flow-Based Generative Models}} with {{Variational Dequantization}} and {{Architecture Design}}.
\newblock In \emph{International {{Conference}} on {{Machine Learning}}}, 2019.

\bibitem[Hornik(1991)]{hornik1991approximation}
Hornik, K.
\newblock Approximation capabilities of multilayer feedforward networks.
\newblock \emph{Neural Networks}, 4\penalty0 (2):\penalty0 251--257, 1991.
\newblock ISSN 08936080.
\newblock \doi{10.1016/0893-6080(91)90009-T}.

\bibitem[Huang et~al.(2018)Huang, Krueger, Lacoste, and Courville]{huang2018neural}
Huang, C.-W., Krueger, D., Lacoste, A., and Courville, A.
\newblock Neural {{Autoregressive Flows}}.
\newblock In \emph{International {{Conference}} on {{Machine Learning}}}, 2018.

\bibitem[Huang et~al.(2020)Huang, Dinh, and Courville]{huang2020augmented}
Huang, C.-W., Dinh, L., and Courville, A.
\newblock Augmented {{Normalizing Flows}}: {{Bridging}} the {{Gap Between Generative Flows}} and {{Latent Variable Models}}.
\newblock In \emph{International {{Conference}} on {{Learning Representations}}, {{Workshop Track}}}, 2020.

\bibitem[Hunter(2007)]{hunter2007matplotlib}
Hunter, J.~D.
\newblock Matplotlib: {{A 2D}} graphics environment.
\newblock \emph{Computing in Science \& Engineering}, 9\penalty0 (3):\penalty0 90--95, 2007.

\bibitem[Ishikawa et~al.(2022)Ishikawa, Teshima, Tojo, Oono, Ikeda, and Sugiyama]{ishikawa2022universal}
Ishikawa, I., Teshima, T., Tojo, K., Oono, K., Ikeda, M., and Sugiyama, M.
\newblock Universal approximation property of invertible neural networks.
\newblock \emph{arXiv preprint arXiv:2204.07415}, 2022.

\bibitem[Jaini et~al.(2019)Jaini, Selby, and Yu]{jaini2019sumofsquares}
Jaini, P., Selby, K.~A., and Yu, Y.
\newblock Sum-of-{{Squares Polynomial Flow}}.
\newblock In \emph{International {{Conference}} on {{Machine Learning}}}, 2019.

\bibitem[Kingma \& Ba(2017)Kingma and Ba]{kingma2017adam}
Kingma, D.~P. and Ba, J.
\newblock Adam: {{A Method}} for {{Stochastic Optimization}}, January 2017.

\bibitem[Kingma \& Dhariwal(2018)Kingma and Dhariwal]{kingma2018glow}
Kingma, D.~P. and Dhariwal, P.
\newblock Glow: {{Generative}} flow with invertible 1x1 convolutions.
\newblock In \emph{Advances in {{Neural Information Processing Systems}}}, 2018.

\bibitem[Kobyzev et~al.(2021)Kobyzev, Prince, and Brubaker]{kobyzev2021normalizing}
Kobyzev, I., Prince, S.~J., and Brubaker, M.~A.
\newblock Normalizing {{Flows}}: {{An Introduction}} and {{Review}} of {{Current Methods}}.
\newblock \emph{IEEE Transactions on Pattern Analysis and Machine Intelligence}, 43\penalty0 (11):\penalty0 3964--3979, 2021.

\bibitem[Koehler et~al.(2021)Koehler, Mehta, and Risteski]{koehler2021representational}
Koehler, F., Mehta, V., and Risteski, A.
\newblock Representational aspects of depth and conditioning in normalizing flows.
\newblock In \emph{International {{Conference}} on {{Machine Learning}}}, 2021.

\bibitem[K{\"o}the(2023)]{kothe2023review}
K{\"o}the, U.
\newblock A review of change of variable formulas for generative modeling.
\newblock \emph{arXiv preprint arXiv:2308.02652}, 2023.

\bibitem[Lee et~al.(2021)Lee, Pabbaraju, Sevekari, and Risteski]{lee2021universala}
Lee, H., Pabbaraju, C., Sevekari, A.~P., and Risteski, A.
\newblock Universal approximation using well-conditioned normalizing flows.
\newblock In Ranzato, M., Beygelzimer, A., Dauphin, Y., Liang, P., and Vaughan, J.~W. (eds.), \emph{Advances in Neural Information Processing Systems}, volume~34, pp.\  12700--12711. Curran Associates, Inc., 2021.

\bibitem[Lyu et~al.(2022)Lyu, Chen, Feng, Cun, Zhu, Geng, Xu, and Chen]{lyu2022universality}
Lyu, J., Chen, Z., Feng, C., Cun, W., Zhu, S., Geng, Y., Xu, Z., and Chen, Y.
\newblock Universality of parametric {{Coupling Flows}} over parametric diffeomorphisms.
\newblock \emph{arXiv preprint arXiv:2202.02906}, 2022.

\bibitem[McKinney(2010)]{mckinney2010data}
McKinney, W.
\newblock Data {{Structures}} for {{Statistical Computing}} in {{Python}}.
\newblock In {van der Walt}, S. and {Jarrod Millman} (eds.), \emph{9th {{Python}} in {{Science Conference}}}, 2010.

\bibitem[M{\"u}ller et~al.(2019)M{\"u}ller, Mcwilliams, Rousselle, Gross, and Nov{\'a}k]{muller2019neural}
M{\"u}ller, T., Mcwilliams, B., Rousselle, F., Gross, M., and Nov{\'a}k, J.
\newblock Neural {{Importance Sampling}}.
\newblock \emph{ACM Transactions on Graphics}, 38\penalty0 (5):\penalty0 1--19, 2019.
\newblock ISSN 0730-0301.
\newblock \doi{10.1145/3341156}.

\bibitem[Neal(2011)]{neal2011mcmc}
Neal, R.~M.
\newblock {{MCMC Using Hamiltonian Dynamics}}.
\newblock In Brooks, S., Gelman, A., Jones, G.~L., and Meng, X.-L. (eds.), \emph{Handbook of {{Markov Chain Monte Carlo}}}. {Chapman and Hall/CRC}, 2011.

\bibitem[Nicoli et~al.(2021)Nicoli, Anders, Funcke, Hartung, Jansen, Kessel, Nakajima, and Stornati]{nicoli2021estimation}
Nicoli, K.~A., Anders, C.~J., Funcke, L., Hartung, T., Jansen, K., Kessel, P., Nakajima, S., and Stornati, P.
\newblock Estimation of {{Thermodynamic Observables}} in {{Lattice Field Theories}} with {{Deep Generative Models}}.
\newblock \emph{Physical Review Letters}, 126\penalty0 (3):\penalty0 032001, January 2021.
\newblock ISSN 0031-9007, 1079-7114.
\newblock \doi{10.1103/PhysRevLett.126.032001}.

\bibitem[No{\'e} et~al.(2019)No{\'e}, Olsson, K{\"o}hler, and Wu]{noe2019boltzmann}
No{\'e}, F., Olsson, S., K{\"o}hler, J., and Wu, H.
\newblock Boltzmann generators: {{Sampling}} equilibrium states of many-body systems with deep learning.
\newblock \emph{Science}, 365\penalty0 (6457):\penalty0 eaaw1147, 2019.

\bibitem[Paszke et~al.(2019)Paszke, Gross, Massa, Lerer, Bradbury, Chanan, Killeen, Lin, Gimelshein, Antiga, et~al.]{paszke2019pytorch}
Paszke, A., Gross, S., Massa, F., Lerer, A., Bradbury, J., Chanan, G., Killeen, T., Lin, Z., Gimelshein, N., Antiga, L., et~al.
\newblock Pytorch: {{An}} imperative style, high-performance deep learning library.
\newblock In \emph{Advances in {{Neural Information Processing Systems}}}, 2019.

\bibitem[Rezende \& Mohamed(2015)Rezende and Mohamed]{rezende2015variational}
Rezende, D. and Mohamed, S.
\newblock Variational inference with normalizing flows.
\newblock In \emph{International {{Conference}} on {{Machine Learning}}}, 2015.

\bibitem[Rudin(1976)]{rudin1976principles}
Rudin, W.
\newblock \emph{Principles of Mathematical Analysis}.
\newblock International Series in Pure and Applied Mathematics. McGraw-Hill, New York St. Louis San Francisco [etc.], third ed edition, 1976.
\newblock ISBN 978-0-07-054235-8.

\bibitem[Sorrenson et~al.(2019)Sorrenson, Rother, and K{\"o}the]{sorrenson2019disentanglement}
Sorrenson, P., Rother, C., and K{\"o}the, U.
\newblock Disentanglement by nonlinear {{ICA}} with general incompressible-flow networks ({{GIN}}).
\newblock In \emph{International Conference on Learning Representations}, 2019.

\bibitem[Souveton et~al.(2024)Souveton, Guillin, Jasche, Lavaux, and Michel]{souveton2024fixedkinetic}
Souveton, V., Guillin, A., Jasche, J., Lavaux, G., and Michel, M.
\newblock Fixed-kinetic {{Neural Hamiltonian Flows}} for enhanced interpretability and reduced complexity.
\newblock In \emph{International {{Conference}} on {{Artificial Intelligence}} and {{Statistics}}}, 2024.

\bibitem[Teshima et~al.(2020{\natexlab{a}})Teshima, Ishikawa, Tojo, Oono, Ikeda, and Sugiyama]{teshima2020couplingbased}
Teshima, T., Ishikawa, I., Tojo, K., Oono, K., Ikeda, M., and Sugiyama, M.
\newblock Coupling-based {{Invertible Neural Networks Are Universal Diffeomorphism Approximators}}.
\newblock In \emph{Advances in {{Neural Information Processing Systems}}}, 2020{\natexlab{a}}.

\bibitem[Teshima et~al.(2020{\natexlab{b}})Teshima, Tojo, Ikeda, Ishikawa, and Oono]{teshima2020universal}
Teshima, T., Tojo, K., Ikeda, M., Ishikawa, I., and Oono, K.
\newblock Universal {{Approximation Property}} of {{Neural Ordinary Differential Equations}}.
\newblock In \emph{Advances in {{Neural Information Processing Systems}}, {{Workshop Track}}}, 2020{\natexlab{b}}.

\bibitem[{The pandas development team}(2020)]{reback2020pandas}
{The pandas development team}.
\newblock Pandas-dev/pandas: {{Pandas}}, February 2020.

\bibitem[Toth et~al.(2020)Toth, Rezende, Jaegle, Racani{\`e}re, Botev, and Higgins]{toth2020hamiltonian}
Toth, P., Rezende, D.~J., Jaegle, A., Racani{\`e}re, S., Botev, A., and Higgins, I.
\newblock Hamiltonian generative networks.
\newblock In \emph{International {{Conference}} on {{Learning Representations}}}, 2020.

\bibitem[Wehenkel \& Louppe(2019)Wehenkel and Louppe]{wehenkel2019unconstrained}
Wehenkel, A. and Louppe, G.
\newblock Unconstrained {{Monotonic Neural Networks}}.
\newblock In \emph{Advances in {{Neural Information Processing Systems}}}, 2019.

\bibitem[Zhang et~al.(2020)Zhang, Gao, Unterman, and Arodz]{zhang2020approximation}
Zhang, H., Gao, X., Unterman, J., and Arodz, T.
\newblock Approximation {{Capabilities}} of {{Neural ODEs}} and {{Invertible Residual Networks}}.
\newblock In \emph{International {{Conference}} on {{Machine Learning}}}, 2020.

\bibitem[Ziegler \& Rush(2019)Ziegler and Rush]{ziegler2019latent}
Ziegler, Z. and Rush, A.
\newblock Latent {{Normalizing Flows}} for {{Discrete Sequences}}.
\newblock In \emph{International {{Conference}} on {{Machine Learning}}}, 2019.

\end{thebibliography}
\bibliographystyle{icml2024}

\newpage
\appendix
\onecolumn

\section{Architectures}
\subsection{Compatible Coupling Functions}
\label{app:compatible-couplings}

The following lists all coupling functions $c(a; \psi)$ (see \cref{eq:coupling-layer} for its usage) we are aware of. Our universality guarantees \cref{thm:affine-coupling-universality,thm:expressive-coupling-universality} hold for all of them:
\begin{itemize}
    \item \textbf{Affine coupling flows} as RealNVP \cite{dinh2017density} and GLOW \cite{kingma2018glow}:
    \eql{
        \label{eq:affine-coupling-again}
        c(x; \psi) = s x + t.
    }
    Here, $\psi = [s; t] \in \RR_+ \times \RR$. Note that NICE ($s=1$, \citet{dinh2015nice}) and GIN ($\prod_{i=1}^{D/2} s_i = 1$, \cite{sorrenson2019disentanglement}) follow the same functional form, but are volume-preserving and thus not universal (see \cref{sec:limitations-vol-preserving-flows}).
    
    \item \textbf{Nonlinear squared flow} \cite{ziegler2019latent}:
    \eql{
        c(x; \psi) = ax + b + \frac{c}{1 + (dx + h)^2},
    }
    for $\psi = [a, b, c, d, h] \in \RR_+ \times \RR^4$.
    Choose $c = 0$ to obtain an affine coupling.

    \item \textbf{Flow++} \cite{ho2019flow}:
    \eql{
        c(x; \psi) = s \sigma^{-1}\left( \sum_{j=1}^K \pi_j \sigma\left(\frac{x - \mu_j}{\sigma_j}\right) \right) + t.
    }
    Here, $\psi = [s; t; (\pi_j, \mu_j, \sigma_j)_{j=1}^K] \in \RR_+ \times \RR \times (\RR \times \RR \times \RR_+)^K$ and $\sigma$ is the logistic function. Choose all $\pi_j = 0$ except for $\pi_1 = 1$, all $\mu_j = 0$ and all $\sigma_j = 1$ to obtain an affine coupling.

    \item \textbf{SOS polynomial flows} \cite{jaini2019sumofsquares}:
    \eql{
        c(x; \psi) = \int_0^x \sum_{\kappa=1}^k \left(\sum_{l=0}^r a_{l,\kappa} u^l \right)^2 \d u + t.
    }
    Here, $\psi = [t; (a_{l, \kappa})_{l, \kappa}] \in \RR \times \RR^{r k}$. Choose all $a_{l,\kappa} = 0$ except for $a_{1, 0} = s$ to obtain an affine coupling.
    
    \item \textbf{Spline flows} in all variants:  Cubic \cite{durkan2019cubicspline}, piecewise-linear, monotone quadratic \cite{muller2019neural}, and rational quadratic \cite{durkan2019neural} splines. A spline is parameterized by knots $\theta$ with optional derivative information depending on the spline type, and $c$ computes the corresponding spline function. Choose the spline knots as $y_i = s x_i + t$ for an affine coupling, choose the derivatives as $x_i' = s$ for an affine coupling.
    
    \item \textbf{Neural autoregressive flow} \cite{huang2018neural} use a feed-forward neural network to parameterize $c(x; \theta)$ by a feed-forward neural network. They show that a neural network is guaranteed to be bijective if all activation functions are strictly monotone and all weights positive. One can construct a ReLU network with a single linear region to obtain an affine coupling.
    
    \item \textbf{Unconstrained monotonic neural networks} \cite{wehenkel2019unconstrained} also use a feed-forward neural, but restrict it to have positive output. To obtain $c(x; \theta)$, this function is then numerically integrated with a learnable offset for $x=0$. Choose a constant neural network to obtain an affine coupling.
\end{itemize}

\subsection{Volume-Preserving Normalizing Flows}
\label{app:volume-preserving-architectures}

Below, we list the ways to construct volume-preserving flows we are aware of. Our non-universality results \cref{thm:volume-preserving-lower-bound,thm:volume-preserving-not-universal,thm:mode-equivalence} hold for all of them:
\begin{itemize}
    \item \textbf{Nonlinear independent components estimation} (NICE) \cite{dinh2015nice} is a coupling block:
    \eql{
        c(x; \psi) = x + t.
    }
    Here, $\psi = t \in \RR$. 
    \item \textbf{General Incompressible-flow Networks} (GIN) \cite{sorrenson2019disentanglement} generalize NICE by allowing the individual dimensions to change volume, only the overall volume change is normalized:
    \eql{
        c_i(x_i; \psi) = \frac{s_i}{\prod_{j=1}^{D/2} s_j} x_i + t_i,
    }
    Here, $\psi = [s, t] \in \RR_+^{D/2} \times \RR^{D/2}$ is jointly predicted for all active dimensions and then normalized as above.
    \item \textbf{Neural Hamiltonian Flows} \cite{toth2020hamiltonian} parameterize a Neural ODE as a Hamiltonian system:
    \begin{equation}
        \frac{d q}{d t} = \frac{\partial \Hh}{\partial p}, \qquad \frac{d p}{d t} = -\frac{\partial \Hh}{\partial q}
    \end{equation}
    The Hamiltonian $\Hh(p, q)$ is a real-valued function that is parameterized by a neural network. Its derivatives are obtained via automatic differentiation. The variant \textbf{Fixed-kinetic Neural Hamiltonian Flows} \cite{souveton2024fixedkinetic} fixes the kinetic term of the Hamiltonian to $K(p)=\frac12 p^T M^{-1} p$, where the positive definite matrix $M$ is learned, and learns the potential $V(q)$ via a neural network to obtain $\Hh(p, q) = K(p) + V(q)$. The solution to the above ODE is volume-preserving on $x = (p, q)$.
\end{itemize}

Note that some works employing volume-preserving flows such as \citet{dibak2022temperature,souveton2024fixedkinetic} consider \textit{augmented flows} \cite{huang2020augmented}, where additional noise dimensions are padded $p(a|x)$ to the data distribution of interest $p(x)$. Then, the flow learns the joint distribution $p(x, a) = p(x) p(a|x)$. Depending on how $p(a|x)$ is constructed, this can positively or negatively impact the expressivity of the considered volume-preserving flow. For example, if $a \perp x$, then the joint distribution $p(x, a)$ has at least the same number of modes as $p(x)$, but the learned joint distribution $p_\theta(x, a)$ can only have a single mode by \cref{thm:mode-equivalence}, inducing a bias. To derive the universality in terms of KL, apply \cref{thm:volume-preserving-lower-bound} to the joint distribution at hand. On the positive side, \citet{souveton2024fixedkinetic} find that having $p(a|x) = \Nn(\mu(x), \sigma(x)^2I)$ brings the obtained $p_\theta(x) = \int p_\theta(x, a) da$ closer to the target. This can be seen having an independent augmentation $a \sim \Nn(0, I) \perp x$ plus a single RealNVP coupling shifting and scaling the augmented dimensions. This effectively breaks the volume-preservation of the flow in the joint space. It is unclear, however, whether this removes all biases from the volume-preserving flow.

\section{Proofs on Volume-Preserving Normalizing Flows}
\label{app:volume-preserving}

\subsection{Minimizer of Volume-Preserving Normalizing Flows}
\label{app:volume-preserving-minimizer}

Volume-preserving flows are not universal in KL divergence, see \cref{thm:volume-preserving-not-universal}. In this section, we consider what distribution a volume-preserving flow converges to instead. We first construct the latent distribution a sufficiently rich volume-preserving flows converges to when trained with KL divergence (\cref{sec:rotationally-symmetric-volume-preserving}) and then show that this actually minimizes the KL (\cref{sec:volume-perserving-under-kl}).

We also demonstrate in what sense $\Delta_\text{affine}$ is sensitive to volume-preserving transformations. We therefore show in \cref{sec:volume-perserving-under-delta} that this flow converges under our convergence measure $\Delta_\textnormal{affine}$ if and only if that volume-preserving flow converges to a standard normal in the latent space under KL divergence.

\subsubsection{Rotationally Symmetric Distribution with Same Level Set Structure}
\label{sec:rotationally-symmetric-volume-preserving}

Let us repeat the change-of-variables formula for a volume-preserving flow from \cref{eq:incompressible-change-of-variables}:
\begin{equation}
    p_\theta(x) = p(z = f_\theta(x))\cdot C,
\end{equation}
where $C = |f_\theta'(x)|$ is constant with respect to $x$. Intuitively, this means that such a flow can \textit{permute} the probability mass at all locations $x$, and apply a single global factor to scale all probability values by spreading out the distribution.

We now construct the best possible distribution learned by a volume-preserving flow to an arbitrary input $p(x)$ in terms of KL divergence. We therefore split the input distribution $p(x)$ into its level sets:
\begin{equation}
    L_v(p(\cdot)) := \{ x \in \RR^D : p(x) = v \}.
\end{equation}
Acting on the input distribution, a volume-preserving flow yields a latent distribution $p_\theta(z)$ whose level set structure is closely related to that of the input distribution, in the following sense:
\begin{equation}
    \label{eq:level-set-volume-change}
    |L_v(p(x))| = \frac{|L_{v/C}(p_\theta(z))|}{C}.
\end{equation}
Intuitively, this captures that a volume-preserving flow maps the input space to the latent space such that the level set in data space for the density value $v$ is mapped to the level set in the latent space of level $v/C$ -- and the $(D-1)$-dimensional volumes are scaled by the factor $C$. Here, we have assumed that the level sets are $(D-1)$-dimensional.

In the following, we use these level sets to construct the distribution $p^*(z)$ a volume-preserving flow converges to in the latent space. This allows us to specify the best solution a volume-preserving flow can converge to. We first consider a fixed $C$ and then later solve the optimization over $C$ separately.
\begin{lemma}
    \label{lem:vp-latent-distribution}
    Let $p(x)$ be a bounded continuous probability density with $(D-1)$-dimensional level sets almost everywhere. Then, a unique continuous probability density $p^*(z)$ with the following properties exists:
    \begin{enumerate}
        \item Its level sets have equal volume: $|L_v(p)| = |L_v(p^*)|$,
        \item $p^*$ is rotationally symmetric: $p^*(z) = p^*(Qz)$ for all $Q \in SO(D)$,
        \item $p^*(z_1, 0, \dots, 0)$ is strictly monotonically decreasing in $0 \leq z_0 < \infty$.
    \end{enumerate}
\end{lemma}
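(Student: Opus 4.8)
The plan is to recognize $p^*$ as the \emph{symmetric decreasing rearrangement} (Schwarz symmetrization) of $p$ and to verify that the three listed properties pin it down uniquely. Conditions~2 and~3 say precisely that $p^*(z) = g(\norm{z})$ for a continuous, strictly decreasing profile $g \colon [0,\infty) \to [0, \sup p)$, so the whole content is to build such a $g$ from $p$, show it yields a probability density, and show it is forced by Condition~1. The object that encodes the level-set structure in a way that is invariant under the (measure-preserving) action of a unit-volume flow is the distribution function of density values, $\mu(v) := \vol(\{x \in \RR^D : p(x) > v\})$; matching the volume enclosed by the level sets of $p$ and $p^*$ is equivalent to matching $\mu$, and this is the reading of Condition~1 that makes the construction well posed (the raw $(D-1)$-areas $|L_v|$ are not monotone in $v$ for multimodal $p$, so they cannot be matched level by level, whereas $\mu$ always is).

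First I would define the radial profile through $\mu$. Since $p$ is a bounded density, $\mu(v) \le 1/v$ is finite for every $v > 0$, $\mu$ is non-increasing, $\mu(v) = 0$ for $v \ge \sup p$, and $\mu(v) \to \vol(\spt p)$ as $v \to 0^+$. Requiring $\vol(\{p^* > v\}) = \mu(v)$ and demanding that every superlevel set of $p^*$ be a ball centred at the origin gives the radius $\rho(v) = (\mu(v)/V_D)^{1/D}$, where $V_D$ is the volume of the unit ball; I then set $g := \rho^{-1}$ (generalized inverse) and $p^*(z) := g(\norm{z})$. Rotational symmetry (Condition~2) is immediate. That $p^*$ integrates to one follows from the layer-cake identity $\int p^* \, dz = \int_0^\infty \vol(\{p^* > v\})\,dv = \int_0^\infty \mu(v)\,dv = \int p\,dx = 1$, and $p^* \ge 0$ by construction.

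The step I expect to be the main obstacle is the regularity of $g$: showing it is genuinely continuous and \emph{strictly} decreasing, which is where the hypotheses on $p$ enter. This reduces to showing that $\mu$ is a continuous, strictly decreasing bijection from the range of $p$ onto $(0, \vol(\spt p))$. Continuity of $\mu$ is equivalent to the absence of atoms, i.e.\ $\vol(\{p = v\}) = 0$ for every $v$, which is exactly the assumption that level sets are $(D-1)$-dimensional (hence Lebesgue-null) almost everywhere, upgraded to all $v$ using continuity of $p$. Strict monotonicity is equivalent to $p$ being non-constant on every open set (no plateaus of positive volume), again guaranteed by the level-set assumption together with continuity. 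I would also treat the two endpoints explicitly: $\rho(v) \to 0$ as $v \to \sup p$ secures continuity of $p^*$ at the origin with $g(0) = \sup p$, while $\mu(v) \to \infty$ as $v \to 0$ (infinite support) secures $g(r) \to 0$ as $r \to \infty$. Care is needed to deduce these limits from boundedness, continuity, and finiteness of the moments rather than assuming them.

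Finally, for uniqueness, any competitor $q$ satisfying Conditions~2--3 is radial with a strictly decreasing profile, so its superlevel sets are balls and are determined by $\vol(\{q > v\})$; Condition~1 forces $\vol(\{q > v\}) = \mu(v)$ for all $v$, hence the same radii $\rho(v)$ and the same profile $g$, giving $q = p^*$. The construction, the normalization, rotational symmetry, and uniqueness are then routine once the regularity of $\mu$ is in hand, so I would devote the bulk of the write-up to that single monotone-continuity argument.
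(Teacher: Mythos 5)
Your proposal follows essentially the same route as the paper: both construct $p^*$ by matching superlevel-set volumes $\vol(\{x : p(x) \geq v\})$ to those of a ball centred at the origin and inverting the resulting radius function $R(v) = (D\,|L_v^+(p)|/A_{D-1}(1))^{1/D}$ (your $\rho(v) = (\mu(v)/V_D)^{1/D}$) to obtain the radial profile — i.e., the Schwarz symmetrization, which the paper carries out in hyperspherical coordinates without naming it. Your write-up is if anything more careful on the points the paper glosses over, namely the continuity and strict monotonicity of the volume function and the reading of Condition 1 via enclosed volumes rather than the non-monotone $(D-1)$-dimensional areas $|L_v|$.
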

\begin{proof}
    We write $p^*(x) = p^*(r) p^*(\Omega|r)$, where $(r, \Omega)$ are hyper-spherical coordinates. Since $p^*(x)$ should be rotationally symmetric, the distribution of the solid angle $\Omega$ is isotropic, and equal to one over the surface $A_{D-1}(r)$ of the $(D-1)$-dimensional hypersphere:
    \begin{equation}
        p^*(\Omega|r) = \frac{1}{A_{D-1}(r)} = \frac{\Gamma(D/2)}{2 \pi^{\frac{D}{2}} r^{D-1}}
    \end{equation}
    This makes $p^*$ rotationally symmetric and leaves us with constructing $p^*(r)$.

    Define the superlevel sets $v$ for $p(x)$ as follows:
    \begin{align}
        L_v^+(p) = \{ x \in \RR^D : p(x) \geq v \}.
    \end{align}
    Their volume $|L_v^+(p)|$ as measured in $(D-1)$ dimensions is monotonically decreasing in $v$ and its derivative yields the volume of the level set:
    \begin{align}
        \frac{\partial}{\partial v} |L_v^+(p)| = |L_v(p)|.
    \end{align}

    We now demand that
    \begin{equation}
        |L_v^+(p)| = |L_v^+(p^*)|
        \label{eq:equal-super-level-set-volume}
    \end{equation}
    and integrate out $\Omega$:
    \begin{align}
        |L_v^+(p^*)| &= \int \mathbf{1}[p^*(x) \geq v] dx \\
        &= \int d\Omega \int r^{D-1} \mathbf{1} [p^*(r) \geq v]dr\\
        &= A_{D-1}(1) \int r^{D-1} \mathbf{1}[p^*(r) \geq v]dr.
    \intertext{Since $p^*(r)$ should decrease monotonically with $r$, we can replace the indicator function by integral boundaries, where the upper limit depends on the target density value $v$. We identify $\max_x p(x)$ with $R=0$:}
        |L_v^+(p^*)| &= A_{D-1}(1) \int_0^{R(v)} r^{D-1} dr \\
        &= A_{D-1}(1) \frac1D R(v)^{D}
    \end{align}
    Rearranging yields $R(v)$ from $|L_v^+(p)|$:
    \begin{equation}
        R(v) = \left(\frac{D |L_v^+(p)|}{A_{D-1}(1)}\right)^{\frac1D}
    \end{equation}
    As $R(v)$ is monotonous in $|L_v^+(p)|$, $|L_v^+(p)|$ is continuous and monotonous in $v$, and $R(v)$ is invertible, its inverse can be used to define $p^*(r)$:
    \begin{equation}
        p^*(r) = R^{-1}(r).
    \end{equation}

    By choosing $R(v)$ to fulfill \cref{eq:equal-super-level-set-volume}, their derivatives also match:
    \begin{equation}
         |L_v(p)| = \frac{\partial}{\partial v} |L_v^+(p)| = \frac{\partial}{\partial v} |L_v^+(p^*)| = |L_v^+(p^*)|.
    \end{equation}
    The other properties of $p^*$ follow directly from the construction above. The density is unique (up to zero sets) since a rotationally symmetric distribution is uniquely defined by a one-dimensional ray \cite{eaton1986characterization}.
\end{proof}

We now show that the latent distribution $p^*(z)$ is actually attainable by a volume-preserving flow:
\begin{lemma}
    \label{lem:volume-preserving-to-radially-symmetric}
    Under the assumptions of \cref{lem:vp-latent-distribution}. There exists a function $f_\theta: \RR^D \to \RR^D$ that is bijective, continuous and volume-preserving with unit volume change ($|f_\theta'(x)| = 1$) almost everywhere and $p_\theta(z) = p^*(x=z)$.
\end{lemma}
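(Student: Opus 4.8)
The plan is to reduce the statement to constructing a Lebesgue-measure-preserving homeomorphism and then to build it explicitly from the foliation of both spaces by level sets. First I would use the volume-preserving change of variables (\cref{eq:incompressible-change-of-variables} with $C=1$), which gives $p_\theta(z) = p(f_\theta^{-1}(z))$ whenever $|f_\theta'| = 1$ a.e. Hence the target identity $p_\theta(z) = p^*(z)$ is equivalent to $p(x) = p^*(f_\theta(x))$ for every $x$. Because $p^*$ is rotationally symmetric and strictly decreasing along a ray (\cref{lem:vp-latent-distribution}), its level set $L_v(p^*)$ is exactly the sphere of radius $R(v)$, so the condition forces $f_\theta$ to map each $x$ with $p(x)=v$ into that sphere. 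In other words, I must produce a measure-preserving bijection that matches level sets value by value.

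Next I would exploit equimeasurability. By construction (\cref{eq:equal-super-level-set-volume} in \cref{lem:vp-latent-distribution}), the superlevel-set volumes agree, $|L^+_v(p)| = |L^+_v(p^*)|$ for all $v$, so $p$ and $p^*$ are equimeasurable. Writing $V(v)$ for this common superlevel volume and using Sard's theorem, almost every $v$ is a regular value, so for a.e. $v$ the leaf $L_v(p)$ is a smooth $(D-1)$-manifold while $L_v(p^*)$ is a sphere. The coarea formula disintegrates Lebesgue measure as $dx = |\nabla p|^{-1}\,dS_v\,dv$ and $dz = |\nabla p^*|^{-1}\,dS'_v\,dv$. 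Since the map must preserve the density value $v$, volume preservation reduces, leaf by leaf, to transporting the finite measure $\mu_v := |\nabla p|^{-1}\,dS_v$ on $L_v(p)$ onto $\nu_v := |\nabla p^*|^{-1}\,dS'_v$ on $L_v(p^*)$. These have equal total mass $|V'(v)|$, identical on both sides by equimeasurability, so a measure-preserving $g_v$ between the leaves exists (both are non-atomic standard spaces of equal mass). Setting $f_\theta = g_{p(x)}$ on each leaf and integrating the pushforward over $v$ shows $f_\theta$ sends Lebesgue measure to Lebesgue measure, i.e. $|f_\theta'|=1$ a.e., and simultaneously pushes $p\,dx$ to $p^*\,dz$.

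The hard part will be upgrading this leafwise-measurable construction to a genuine continuous bijection, as the lemma demands, rather than a merely measurable isomorphism. The two subtleties are (i) choosing the tangential identifications $g_v$ to depend continuously on $v$, and (ii) controlling the measure-zero exceptional set of critical values where the foliation degenerates, together with the origin and any critical points of $p^*$. I would address (i) by trivializing the foliation over each interval of regular values, transporting along the normalized gradient flow $\nabla p/|\nabla p|^2$ of $p$, which advances at unit density-rate and identifies nearby leaves continuously, and then composing with a fixed area-preserving chart onto the sphere; continuity of $R(\cdot)$ glues these pieces. For (ii), I would note that the exceptional set carries zero Lebesgue measure, so neither the a.e. unit-Jacobian claim nor the pushforward identity is affected, while continuity of $f_\theta$ across it follows from continuity of $p$, $p^*$ and $R$. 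I expect the simultaneous continuity and exact measure-preservation of the tangential maps across all leaves to be the main obstacle, and the place where the smoothness and $(D-1)$-dimensionality hypotheses on the level sets are genuinely used.
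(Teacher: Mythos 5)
Your overall strategy coincides with the paper's: both proofs match level sets of $p$ to the concentric spheres $L_v(p^*)$ of equal density value and then argue that the leafwise identification can be chosen volume-preserving. Where you differ is in the machinery. The paper's proof is a short Morse-theoretic sketch (tessellate $\RR^D$ by level sets and by connected components between the countably many critical values, then ``continuously arrange'' the components around the origin), with no explicit accounting of how volume is preserved within each leaf. Your use of equimeasurability $|L_v^+(p)|=|L_v^+(p^*)|$, the coarea disintegration $dx=|\nabla p|^{-1}\,dS_v\,dv$, and the observation that the leafwise measures $\mu_v$ and $\nu_v$ have common mass $|V'(v)|$ is a genuinely more careful treatment of the measure-preservation half of the claim, and it buys a clean reduction: a.e.\ unit Jacobian follows once the tangential maps transport $\mu_v$ to $\nu_v$. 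One caveat: the hypotheses you inherit from \cref{lem:vp-latent-distribution} only give a \emph{continuous} bounded density with $(D-1)$-dimensional level sets a.e., so Sard's theorem, $\nabla p$, the gradient-flow trivialization, and the coarea formula are not actually available without an additional smoothness assumption (the paper's own proof implicitly makes the same unstated assumption when it speaks of saddle points and maxima).

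The genuine gap is the one you yourself flag as ``the hard part'': continuity of the global map. Your gradient-flow trivialization works on each interval of regular values, but the gluing fails precisely at critical values where connected components of $L_v(p)$ merge or split. Worse, the obstruction is not merely technical: if $p$ is multimodal, then just below $\max_x p(x)$ the leaf $L_v(p)$ is a disjoint union of several compact components, and a continuous bijection of $\RR^D$ (which is automatically a homeomorphism by invariance of domain) would have to map this disconnected compact set bijectively onto the single connected sphere $L_v(p^*)$ --- impossible, since a connected space is not a disjoint union of finitely many nonempty closed sets. This is exactly the content of \cref{thm:mode-equivalence}, which the paper itself invokes to explain the ``bridge'' artifact in \cref{fig:effet-const-jac-example}; so the continuity claim in the lemma can only hold up to a measure-zero exceptional set or for unimodal $p$, and neither your sketch nor the paper's proof closes this. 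You should be aware that the paper's own argument glosses over the identical difficulty (``continuously arrange the finite number of components \dots such that the resulting level sets are concentric circles'' is precisely the step that cannot be done continuously and bijectively when there is more than one component), so your diagnosis of where the proof is fragile is accurate --- but a complete proof would need to either weaken ``continuous'' to ``continuous off a null set,'' or restrict to measurable volume-preserving bijections, for which your leafwise measure-isomorphism argument does suffice.
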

This means that there is a volume-preserving flow that exactly pushes $p(x)$ to its respective $p^*(x)$. Note that this volume-preserving flow has $|f_\theta'(x)| = 1$.
\begin{proof}
    First, note that $p_\theta(z) = p^*(x=z)$ as constructed above can be achieved by a volume-preserving bijection $f_\theta$. To see this, divide the space into the level sets $L_p(v)$ of $p(x)$.
    Since $p(x)$ is continuous, there is a countable sequence of thresholds $v_i$ at which the number of connected components in the level set $L_p(v)$ jumps: The first jump is at $v_{\max} = \max_x p(x)$, below which find as many connected components as there are maximal modes in $p(x)$. The number of connected components changes whenever there is a saddle point or maximum in $p(x)$. Between each subsequent pair of jumps $(v_i, v_{i+1})$, each connected component can be continuously assigned to a countable cluster number. This yields two tessellations of the entire space: One into level sets, and one into continuous connected components. To construct $f_\theta$, we assign the highest points of $x^* \in \RR^D$ to $f_\theta(x^*) = 0$. Then, we continuously arrange the finite number of components until the next jump in $(\max_x p(x), v_1)$ around the origin such that the resulting level sets are concentric circles. This constructs $f_\theta$ that pushes $p(x)$ to $p^*(x)$ and fulfills the above constraints.
\end{proof}

\subsubsection{Best Volume-Preserving Normalizing Flow under KL Divergence}
\label{sec:volume-perserving-under-kl}

We are going to make use of the following identity that a volume-preserving flow with $|f_\theta'(x)| \neq 1$ with a standard normal in the latent space can be written as a volume-preserving flow with $|\tilde f_\theta'(x)| = 1$ and an alternative latent distribution $\tilde p(z) = \Nn(0, C^{-2}I)$:
\begin{lemma}
    \label{lem:absorb-volume-change}
    Given a volume-preserving bijection $f_\theta$ that is diffeomorphic almost everywhere, and with $|f_\theta'(x)| = C$ for some $C > 0$. Then, there exists a volume-preserving bijection $\tilde f_\theta$ with $|\tilde f_\theta'(x)| = 1$ such that:
    \begin{equation}
        ((f_\theta^{-1})_\sharp \Nn(0, I))(x) = ((\tilde f_\theta^{-1})_\sharp \Nn(0, C^{-2} I))(x)
    \end{equation}
\end{lemma}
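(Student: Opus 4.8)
The plan is to build $\tilde f_\theta$ from the given flow by post-composing with a single global linear rescaling of the latent space: set $\tilde f_\theta = S \circ f_\theta$ with $S(z) = \lambda z$ for a scalar $\lambda > 0$ to be fixed. The content of the lemma is that one scalar degree of freedom $\lambda$ suffices to simultaneously (i) turn the constant volume change $C$ of $f_\theta$ into unit volume change and (ii) convert the standard-normal latent into the rescaled normal $\Nn(0, C^{-2}I)$, so I would organize the proof around verifying these two points for the appropriate $\lambda$.

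First I would check the volume-change condition. Since $S$ is linear it has constant Jacobian determinant, so the chain rule gives $|\tilde f_\theta'(x)| = |S'|\, |f_\theta'(x)| = |S'|\, C$, which is again independent of $x$; choosing $\lambda$ so that $|S'|$ cancels $C$ makes $\tilde f_\theta$ volume-preserving with $|\tilde f_\theta'(x)| = 1$. Bijectivity and the almost-everywhere diffeomorphism property are inherited for free, since $S$ is a global diffeomorphism and composing with it preserves both.

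Next I would verify the distributional identity. Writing the data distribution as $p_\theta(x) = (f_\theta^{-1})_\sharp \Nn(0, I)$, its image under the new flow is $(\tilde f_\theta)_\sharp p_\theta = (S)_\sharp (f_\theta)_\sharp p_\theta = (S)_\sharp \Nn(0, I)$, and a centred isotropic Gaussian pushed through the linear map $S$ is again a centred isotropic Gaussian with covariance scaled by $\lambda^2$. Read backwards through $\tilde f_\theta^{-1} = f_\theta^{-1} \circ S^{-1}$, this says that $S^{-1}$ first turns the rescaled normal into a standard normal and $f_\theta^{-1}$ then produces $p_\theta(x)$, which is exactly the asserted equality of pushforwards.

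The only real care is in the bookkeeping that ties the single factor $\lambda$ to both requirements at once: the same $\lambda$ must cancel the scalar determinant $|f_\theta'(x)| = C$ and deliver the covariance $C^{-2}I$, so the computation has to track how the scalar volume change $C$ is distributed across the $D$ coordinates by the isotropic rescaling $S$. This is where I would be most careful, since the volume-cancellation condition and the covariance condition each pin down $\lambda$ and the lemma is precisely the claim that they are consistent. Once $\lambda$ is fixed consistently with this convention, both verifications reduce to the change-of-variables formula (\cref{eq:change-of-variables}) and the elementary transformation law of Gaussians under linear maps, and no further analytic obstacle remains.
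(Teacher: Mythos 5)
Your construction is the same as the paper's: the paper's entire proof is the one-liner $\tilde f_\theta(x) = C^{-1} f_\theta(x)$, i.e.\ exactly your $S \circ f_\theta$ with $\lambda = C^{-1}$. However, the point you flag as ``where I would be most careful'' is not mere bookkeeping --- it is where the argument breaks, and you never actually carry out the check. An isotropic rescaling $S(z) = \lambda z$ of $\RR^D$ has Jacobian determinant $\lambda^D$, so the volume-cancellation condition $\lambda^D C = 1$ forces $\lambda = C^{-1/D}$, whereas producing the latent covariance $C^{-2} I$ forces $\lambda = C^{-1}$. These two requirements on the single scalar $\lambda$ coincide only when $D = 1$ or $C = 1$, so the consistency your proof hinges on fails in general. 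In fact no construction can rescue the statement as written for $D > 1$ and $C \neq 1$: a unit-Jacobian bijection preserves differential entropy, and $H[(f_\theta^{-1})_\sharp \Nn(0,I)] = H[\Nn(0,I)] - \log C$ while $H[\Nn(0, C^{-2}I)] = H[\Nn(0,I)] - D \log C$, which differ. (The paper's own proof makes the same slip: computing $|\tilde f_\theta'| = C/C = 1$ treats the determinant of $z \mapsto C^{-1}z$ as $C^{-1}$ rather than $C^{-D}$.)

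The repair is to take $\lambda = C^{-1/D}$, which gives $|\tilde f_\theta'(x)| = C^{-1}\cdot C = 1$ and pushes $\Nn(0,I)$ forward to $\Nn(0, C^{-2/D} I)$; the lemma's conclusion should then read $\Nn(0, C^{-2/D}I)$ (equivalently, keep $\Nn(0,C^{-2}I)$ but reinterpret $C$ as the per-axis scale $|f_\theta'(x)|^{1/D}$). Everything else in your sketch --- inheritance of bijectivity and the a.e.\ diffeomorphism property, and reading the pushforward identity backwards through $\tilde f_\theta^{-1} = f_\theta^{-1}\circ S^{-1}$ --- is fine once the exponent is fixed. Note that the correction propagates into the exponent of the optimal constant claimed in \cref{thm:volume-preserving-lower-bound}, which would need to be rechecked accordingly. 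So your instinct to isolate how the scalar volume change $C$ is distributed across the $D$ coordinates was exactly right; finishing that computation is the whole content of the lemma, and it reveals that the statement (and the paper's proof) needs amending rather than merely verifying.
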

In other words, the global volume change of a volume-preserving flow can be absorbed into a single scaling last layer at the latent end of the flow.
\begin{proof}
    Let $\tilde f_\theta(x) = C^{-1} f_\theta(x)$, which has $|\tilde f_\theta'(x)|=C/C=1$ and write $\Nn(0, C^{-2} I)$ as the push-forward through $z \mapsto C^{-1} z$.
\end{proof}

Now we show our main \cref{thm:volume-preserving-lower-bound} on volume-preserving flows:
\begin{reptheorem}{thm:volume-preserving-lower-bound}
    Given a continuous bounded input density $p(x)$. Then, for any volume-preserving flow $p_\theta(x)$ with a standard normal latent distribution, the achievable KL divergence is bounded from below:
    \begin{equation}
        \label{eq:vol-preserving-kl}
        \KL{p(x)}{p_\theta(x)} \geq \KL{p^*(z)}{\Nn(0, |\Sigma_{p^*(z)}|^{\frac{1}{D}} I)},
    \end{equation}
    where $p^*(z)$ is constructed as in \cref{lem:vp-latent-distribution}, and $\Sigma_{p^*(z)}$ is its covariance matrix. The minimal loss is achieved for $C = |\Sigma_{p^*(z)}|^{-\frac{1}{2D}}$.

\end{reptheorem}
\begin{proof}
    By \cref{eq:loss-latent-space}, $\KL{p(x)}{p_\theta(x)} = \KL{p_\theta(z)}{\Nn(0, I)}$. The variant in the latent space can be rewritten using the entropy of $p_\theta(z)$ as:
    \begin{equation}
        \label{eq:vol-preserving-kl-split}
        \KL{p_\theta(z)}{\Nn(0, I)} = \KL{p_\theta(z)}{\Nn(0, I)} = -H[p_\theta(x)] - \int p_\theta(z) \log \Nn(z; 0, I) dz.
    \end{equation}

    The entropy of the latent distribution of a volume-preserving flow only depends on the volume change constant $C$, but not on the exact choice of $f_\theta$:
    \begin{align}
        H[p_\theta(z)] &
        = -\int p_\theta(z) \log p_\theta(z) dz \\&
        = -\int p(x) \log(p_\theta(z=f_\theta(x)) C) dx - \log C \\&
        = -\int p(x) \log p(x) dx - \log C \\&
        = H[p(x)] - \log C.
    \end{align}
    Inserting into \cref{eq:vol-preserving-kl-split}:
    \begin{equation}
        \label{eq:vol-preserving-kl-split-latent}
        \KL{p_\theta(z)}{\Nn(0, I)} = \KL{p_\theta(z)}{\Nn(0, I)} = -H[p(x)] + \log C - \int p_\theta(z) \log \Nn(z; 0, I) dz.
    \end{equation}

    Using \cref{lem:absorb-volume-change}, rewrite the last term in \cref{eq:vol-preserving-kl-split-latent} as an integral over $x$:
    \begin{equation}
        -\int p_\theta(z) \log \Nn(z; 0, C^{-2} I) dz
        = -\int p(x) \log \Nn(\tilde f_\theta(x); 0, C^{-2} I) dx.
    \end{equation}
    This reveals the KL is minimized by assigning the highest values of $p(x)$ to the highest values of $\Nn(0, C^{-2} I)$. Since the order of densities $\Nn(z; 0, C^{-2} I)$ is the same regardless of $C$, the assignment $\tilde f_\theta(x)$ does not depend on $C$, which can be estimated separately via
    \begin{equation}
        \label{eq:vp-intermediate-latent}
        \KL{p^*(z)}{\Nn(0, C^{-2}I)}.
    \end{equation}

    Adapting \citep[Proposition 1]{draxler2022whitening}, the KL divergence \cref{eq:vp-intermediate-latent} can be decomposed as follows:
    \begin{equation}
        \KL{p^*(z)}{\Nn(0, C^{-2}I)} = \KL{p^*(z)}{\Nn(0, |\Sigma_z|^{\frac{1}{D}} I)} + \KL{\Nn(0, |\Sigma_z| I)}{\Nn(0, C^{-2} I)},
    \end{equation}
    where $|\Sigma_z|$ is the determinant of the covariance matrix of the latent codes of $p^*$: $\Sigma_z = \operatorname{Cov}_{z \sim p^*(z)}[z]$. The first term is invariant under scaling the latent codes, and the second term is minimal for $C^{-2} = |\Sigma_z|^{\frac{1}{D}}$.

\end{proof}

Note how the remaining loss $\KL{p^*(z)}{\Nn(0, |\Sigma_{p^*(z)}|^{\frac{1}{D}} I)}$ can be reduced to zero by switching to spherical coordinates and learning $p^*(r)$ via a non-volume-preserving one-dimensional distribution.

\subsubsection{Sensitivity of $\Delta_\textnormal{affine}$ to Volume-Preserving Flows}
\label{sec:volume-perserving-under-delta}

We now confirm that if a volume-preserving flow is not universal under KL divergence, it is also not universal under $\Delta_\textnormal{affine}$.

\begin{lemma}
    \label{thm:volume-preserving-under-delta}
    For a family of normalizing flows with constant Jacobian determinant $|f_\theta'(x)| = \operatorname{const}$, such that $\KL{p_\theta(z)}{p^*(z)} \to 0$, it holds that $\Delta_\textnormal{affine} \to 0$ if and only if $\KL{p^*(z)}{\Nn(0, I)} = 0$.
\end{lemma}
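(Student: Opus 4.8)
The plan is to read this statement as a continuity upgrade of \cref{thm:gaussian-identification}. Evaluated directly at the limiting distribution, that theorem already gives the dichotomy $\Delta_{\textnormal{affine}}(p^*(z)) = 0 \Leftrightarrow p^*(z) = \Nn(z; 0, I) \Leftrightarrow \KL{p^*(z)}{\Nn(0, I)} = 0$, so the real work is to transport this along the KL-convergent family $p_\theta(z) \to p^*(z)$. Throughout I would work with the explicit ideal improvement $\Delta_{\textnormal{affine}}^*$ from \cref{eq:best-loss-improvement}, which is an integral of the conditional moments $m_i(b)$ and $\sigma_i(b)$, and move between $\Delta_{\textnormal{affine}}^*$ and $\Delta_{\textnormal{affine}}$ using \cref{lem:convergence-equivalence} and the elementary bound $0 \le \Delta_{\textnormal{affine}} \le \Delta_{\textnormal{affine}}^*$. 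Since $p^*(z)$ is rotationally symmetric by \cref{lem:vp-latent-distribution}, term (A) of \cref{eq:best-loss-improvement} vanishes for $p^*$, so $\Delta_{\textnormal{affine}}^*(p^*) > 0$ is equivalent to some conditional variance $\sigma_i(b)$ deviating from $1$ on a set of positive measure.

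For the direction $\KL{p^*(z)}{\Nn(0, I)} = 0 \Rightarrow \Delta_{\textnormal{affine}} \to 0$, here $p^*(z) = \Nn(0, I)$, so the hypothesis becomes $p_\theta(z) \to \Nn(0, I)$ in KL. First I would show that the conditional moments converge, $m_i(b) \to 0$ and $\sigma_i(b) \to 1$, strongly enough that the functional in \cref{eq:best-loss-improvement} passes to the limit and $\Delta_{\textnormal{affine}}^*(p_\theta) \to 0$; the bound $\Delta_{\textnormal{affine}} \le \Delta_{\textnormal{affine}}^*$ then squeezes $\Delta_{\textnormal{affine}}(p_\theta) \to 0$. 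The delicate step is the moment convergence itself: KL convergence yields total-variation convergence, but passing to the second moments and to the $\log \sigma^2$ term needs uniform integrability, which I would extract from the assumed finiteness of the first and second moments of $p(x)$ (inherited by $p_\theta(z)$) together with a uniform lower bound on the variances near the nondegenerate limit $\Nn(0, I)$.

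For the converse $\Delta_{\textnormal{affine}} \to 0 \Rightarrow \KL{p^*(z)}{\Nn(0, I)} = 0$ I would argue the contrapositive. If $\KL{p^*(z)}{\Nn(0, I)} > 0$ then $p^* \neq \Nn$, so by the observation above $\Delta_{\textnormal{affine}}^*(p^*) > 0$, witnessed by an index $i$ and a positive-measure set of passive coordinates $b$ on which $\sigma_i(b)$ is bounded away from $1$. Using the same moment continuity, for all sufficiently large $\theta$ the conditional variances of $p_\theta(z)$ stay bounded away from $1$ on a set of positive measure, so $\Delta_{\textnormal{affine}}^*(p_\theta) \ge c$ for a fixed $c > 0$. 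I would then promote this to a lower bound on $\Delta_{\textnormal{affine}}$ itself by exhibiting, as in the proof of \cref{lem:convergence-equivalence}, a single $L$-bi-Lipschitz affine coupling block --- a bounded rescaling of the offending active coordinate toward unit conditional variance --- whose loss improvement exceeds a positive constant uniformly in $\theta$, giving $\liminf_\theta \Delta_{\textnormal{affine}}(p_\theta) > 0$ and contradicting $\Delta_{\textnormal{affine}} \to 0$.

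The hard part will be this last quantitative transfer. \Cref{lem:convergence-equivalence} only supplies the qualitative equivalence $\Delta_{\textnormal{affine}}^* > 0 \Leftrightarrow \Delta_{\textnormal{affine}} > 0$, whereas the converse direction needs a \emph{uniform} positive lower bound on $\Delta_{\textnormal{affine}}(p_\theta)$ realizable by one well-conditioned finite block all along the tail of the sequence. Achieving this requires controlling the bi-Lipschitz constant of the approximating coupling and its attained improvement simultaneously, which again reduces to the uniform integrability and moment continuity of the conditional statistics under KL convergence. Once that continuity is established, both directions close using only \cref{thm:gaussian-identification} and \cref{eq:best-loss-improvement}.
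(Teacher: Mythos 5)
Your proposal follows essentially the same route as the paper's proof: both work with the explicit ideal improvement $\Delta^*_{\textnormal{affine}}$ from \cref{eq:best-loss-improvement}, use the rotational symmetry of $p^*$ (from \cref{lem:vp-latent-distribution}) to kill the mean term, observe that the remaining variance term vanishes iff $p^*=\Nn(0,I)$, and pass to the limit along the KL-convergent family. You are in fact more explicit than the paper about the two technical points it glosses over --- the uniform integrability needed to upgrade KL convergence to convergence of the conditional second-moment functional, and the fact that \cref{lem:convergence-equivalence} is only a qualitative equivalence, so the converse direction requires a \emph{uniform} positive lower bound on $\Delta_{\textnormal{affine}}$ along the tail rather than mere positivity --- whereas the paper simply invokes ``convergence of expectation values'' and treats $\Delta^*_{\textnormal{affine}}$ and $\Delta_{\textnormal{affine}}$ as interchangeable.
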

\begin{proof}
    By \cref{lem:convergence-equivalence}, we can use $\Delta_\textnormal{affine}^* > 0$ and $\Delta_\textnormal{affine} > 0$ interchangeably.
    By its definition in \cref{eq:best-loss-improvement},
    \begin{equation}
        \Delta_{\textnormal{affine}}^* = \max_Q \tfrac12 \sum_i^{D/2} \EE_{b}\left[ m_i^2(b) + \sigma_i^2(b) - 1 - \log \sigma_i^2(b) \right]
    \end{equation}
    According to \cref{thm:volume-preserving-lower-bound}, the latent distribution $p_\theta(z) = p^*(z=x)$ minimizes the KL in the latent space: $\KL{p_\theta(z)}{\N(0, I)}$ (the exact assignment between data and latent codes is not unique, but all lead to the same latent estimate).

    At this minimum, $p_\theta(z) = p^*(x=z)$. Since $p^*(x)$ is symmetric under rotations, it holds that, it holds that $m_i(b) = 0$ for $(a, b) = Qx$ in all rotations $Q$. However, since $\Var_{a \sim (Q_\sharp p^*)(a)[a_i|b]} \neq 1$ for all $Q$, it holds that
    \begin{equation}
       \Delta_{\textnormal{affine}}^* = \tfrac12 \sum_i^{D/2} \EE_{b}\left[ \sigma_i^2(b) - 1 - \log \sigma_i^2(b) \right],
    \end{equation}
    which evaluates to the same value regardless of $Q$ since the distribution is rotationally symmetric.

    While this minimum may not be exactly achieved by a continuous volume-preserving flow, a sufficiently rich architecture is able to achieve universality $\KL{p_\theta(z)}{\N(0, I)} \to \KL{p^*(z)}{\N(0, I)}$.
    Since the KL divergence implies the convergence of expectation values \cite{gibbs2002choosing}, it holds that $\Delta_\textnormal{affine}^* \to \tfrac12 \sum_i^{D/2} \EE_{b}\left[ \sigma_i^2(b) - 1 - \log \sigma_i^2(b) \right] \geq 0$. Equality holds if and only if $\KL{p^*(z)}{\Nn(0, I)} = 0$.
\end{proof}

Note that the same argument also applies to Wasserstein distance and weak convergence, so this does not indicate that $\Delta_\textnormal{affine}$ is more informative about convergence than these convergence metrics.

\subsection{Proof of \cref{thm:volume-preserving-not-universal}}
\label{app:proof-volume-preserving-not-universal}

\begin{reptheorem}{thm:volume-preserving-not-universal}
    The family of normalizing flows with constant Jacobian determinant $|f_\theta'(x)| = \operatorname{const}$ is \emph{not} a universal distribution approximator under KL divergence.
\end{reptheorem}
\begin{proof}
In this section, we want to present a two-dimensional example, for which no normalizing flow with constant Jacobian determinant can be constructed such that the KL-divergence between the data distribution and the distribution defined by the normalizing flow is zero.

\begin{equation}\label{eq:target-distribution-counter-example}
    p(x,y) = \begin{cases}
        0.9 &\text{if}\;\;\;(x,y) \in \left[-0.5,0.5\right]\times\left[-0.5,0.5\right]\\
        0.9 - k \cdot\left(\left|x\right|  - 0.5\right)&\text{if}\;\;\; \left|x\right|\in \left[0.5,\frac{0.9}{k}+0.5\right] \land \left|y\right|\in \left[0,\left|x\right|\right]\\
        0.9 - k \cdot\left(\left|y\right|  - 0.5\right)&\text{if}\;\;\; \left|y\right|\in \left[0.5,\frac{0.9}{k}+0.5\right] \land \left|x\right|\in \left[0,\left|y\right|\right]\\
        0&\text{else}
    \end{cases}
\end{equation}

The data distribution $p(x,y)$ which has to be approximated by the model is defined in \cref{eq:target-distribution-counter-example}. This data distribution has a constant value of $0.9$ in a box centered around the origin with a side length of one. This region of constant density is skirted by a margin where the density decreases linearly to zero. Outside the decreasing region, the density is zero. The linear decline is governed by the constant $k$ in \cref{eq:target-distribution-counter-example} which has to be chosen such that the density integrates to one. Since our example only requires the region of constant density but not the decaying tails of it, the exact functional form of the decaying regions are not relevant as long as they lead to a properly normalized distribution. \cref{eq:target-distribution-counter-example} only provides a possible definition of such a density.

To approximate this data distribution, a normalizing flow as defined in \cref{sec:coupling-flows} is considered. In this example, we focus on normalizing flows with constant Jacobian determinant. To simplify notation, we define $J = \left|f_{\theta}'(x)\right| = \operatorname{const}$.
\begin{align}
    A &= \left\{(x,y)\in\RR^2: 0.9 - \epsilon < p_{\theta}(x,y)\right\}\label{eq:set-A-counter-example}\\
    B &= \left[-0.5,0.5\right]\times\left[-0.5,0.5\right] \label{eq:set-B-counter-example}\\
    \bar A &= B \char`\\ A \label{eq:A-complement-on-B-counter-example}
\end{align}
We choose $\epsilon = 0.1$ and use this constant to define the set $A$ (see  \cref{eq:set-A-counter-example}). In addition we define $B$ which is the region of the data space, where the data distribution has a constant value of $0.9$ (see \cref{eq:set-B-counter-example}). $\bar A$ is the complement of $A$ on B (see \cref{eq:A-complement-on-B-counter-example}).

The aim of this example, is to find lower bounds for the KL-divergence between the data distribution and the distribution defined by the normalizing flow. To find these bounds we use Pinsker's inequality \cite{gibbs2002choosing} which links the total variation distance to the Kullback-Leibler divergence:
\begin{equation}\label{eq:pinskers-inequality}
    \delta(p,p_{\theta})  = \sup_{A\; \text{measurable}} \left|P(A) - P_{\theta}(A)\right| \leq \sqrt{\frac{1}{2}D_{KL}(p||p_\theta)}
\end{equation}
It is worth mentioning that constructing one measurable event for which $\left|P(A) - P_{\theta}(A)\right| > 0$ provides a lower bound for the total variation distance and therefore for the KL divergence.

To construct such an event, we consider two distinct cases, which consider different choice for the normalizing flow, characterized by the value of the absolute Jacobian determinant.

\textbf{Case 1: $A = \emptyset$}

This case arises if the absolute Jacobian determinant is so small, that the distribution defined by the normalizing flow never exceeds the limit defining $A$ or if it is chosen so large, that the volume of $A$ vanishes. 

In this case, we find $\bar A = B$ and $|\bar A| = 1$ where $|\bar A|$ denotes the volume of the data space occupied by $\bar A$. Using the fact that the data distribution has a constant value of $0.9$ in $B$ and that $p_{\theta} < 0.9 - \epsilon$ in $\bar A = B$, 
\begin{align}
    \left| P(\bar A) - P_{\theta}(\bar A) \right| &= \left| 0.9 - P_{\theta}(\bar A)\right|\\
    &\geq \left| 0.9\cdot 1 - (0.9 - \epsilon)\cdot 1\right|\\
    &= \left|\epsilon\right| = \epsilon\label{eq:lower-bound-TV-distance-case-1-counter-example}
\end{align}
Using \cref{eq:lower-bound-TV-distance-case-1-counter-example} as a lower bound for the total variation distance \cref{eq:lower-bound-TV-distance-case-1-counter-example} we can apply \cref{eq:pinskers-inequality} to find \cref{eq:lower-bounf-KL-div-case-1-counter-example} as a lower bound for the KL divergence. 
\begin{align}
    D_{KL}(p||p_{\theta}) &\geq 2 \cdot \delta(p,p_{\theta})^2\label{eq:lower-bounf-KL-div-general-counter-example}\\
    &\geq 2 \cdot \epsilon^2\\
    &= 0.02 \label{eq:lower-bounf-KL-div-case-1-counter-example}
\end{align}

\textbf{Case 2: $A \neq \emptyset$}

Inserting the definition of $p_{\theta}(x,y)$ as given in \cref{eq:change-of-variables} into the definition of $A$ (see \cref{eq:set-A-counter-example}) and rewriting the condition defining the set yields \cref{eq:set-A-rewritten-1-counterexample}.

\begin{equation}
    A = \left\{(x,y)\in\RR^2: \frac{0.9 - \epsilon}{J} < p(z = f_{\theta}(x,y))\right\}\label{eq:set-A-rewritten-1-counterexample}\\
\end{equation}

This defines a set $C$ in the latent space which is defined in \cref{eq:set-C-counter-example}.

\begin{equation}\label{eq:set-C-counter-example}
    C = \left\{z\in\RR^2: \frac{0.9 - \epsilon}{J} < p(z)\right\}
\end{equation}

Since the normalizing flows considered in this example have a constant Jacobian, the volume of $A$ in the data space is directly linked to the volume of $C$ in the latent space via \cref{eq:relation-volume-latent-space-data-space-counter-example}.

\begin{equation}\label{eq:relation-volume-latent-space-data-space-counter-example}
    |A| = \frac{1}{J} \cdot |C|
\end{equation}

The definition of $C$ \cref{eq:set-C-counter-example} shows, that $C$ is a circle around the origin of the latent space. To determine the volume of $C$ we compute the radius of this circle. This is done by inserting the definition of the latent distribution, which is a two-dimensional standard normal distribution, into the condition defining $C$ (see \cref{eq:set-C-counter-example}). This yields \cref{eq:condition-for-radius-latent-space-counter-example}. Since the latent distribution is rotational invariant, one can simply look at it as a function of the distance $r$ from the origin. Solving the for $r$ leads to \cref{eq:radius-counter-example}.

\begin{align}
    &\frac{0.9 - \epsilon}{J} < \frac{1}{2\pi}\cdot \exp\left(-\frac{r^2}{2}\right) \label{eq:condition-for-radius-latent-space-counter-example}\\
    &\Rightarrow r = \sqrt{- 2 \cdot \log\left(\frac{2\pi \cdot (0.9 - \epsilon)}{J}\right)} \label{eq:radius-counter-example}
\end{align}

Inserting \cref{eq:radius-counter-example} into the formula for the area of a circle and using \cref{eq:relation-volume-latent-space-data-space-counter-example}, yields \cref{eq:volume-of-A-counter-example} as an expression for the volume of $A$. The lower bound for the volume of $A$ arises from finding the local maximum (which is also the global maximum) of \cref{eq:volume-of-A-counter-example} with respect to the absolute Jacobian determinant $J$.

\begin{align}
    |A| &= \frac{1}{J} \cdot \pi \cdot r^2\\
    &= \frac{1}{J}\cdot 2\pi\cdot \log\left(\frac{J}{2\pi\cdot(0.9 - \epsilon)}\right)\label{eq:volume-of-A-counter-example}\\
    &\leq \frac{1}{e\cdot (0.9 - \epsilon)}\\
\end{align}

As in the previous case, we now compute $\left| P(\bar A) - P_{\theta}(\bar A) \right|$.

\begin{align}
    \left| P(\bar A) - P_{\theta}(\bar A) \right| &= \left||\bar A| \cdot 0.9 - P_{\theta}(\bar A) \right|\\
    &\geq \left||\bar A| \cdot 0.9 - |\bar A|\cdot (0.9 - \epsilon)  \right|\\
    &= |\bar A|\cdot \epsilon\\
    &\geq \left(1 - |A|\right) \cdot \epsilon\\
    &\geq \left(1 - \frac{1}{e\cdot (0.9 - \epsilon)}\right) \cdot \epsilon
    \label{eq:lower-bounf-tv-dist-case-2-counter-example}
\end{align}

Using \cref{eq:lower-bounf-KL-div-general-counter-example} and \cref{eq:lower-bounf-tv-dist-case-2-counter-example} as a lower bound for the total variation distance and inserting our choice for $\epsilon$ yields \cref{eq:lower-bounf-KL-div-case-2-counter-example} as a lower bound for the KL divergence between the data distribution and the distribution defined by the normalizing flow.

\begin{align}
     D_{KL}(p||p_{\theta}) &\geq 2\cdot \left(\epsilon \cdot \left(1 - \frac{1}{e\cdot (0.9 - \epsilon)}\right)\right)^2\label{eq:lower-bounf-KL-div-case-2-counter-example}\\
     & \approx 0.0058
\end{align}

We can conclude that we have derived lower bounds for the KL divergence between the data distribution and the distribution defined by the normalizing flow, which cannot be undercut by any normalizing flow with a constant absolute Jacobian determinant. Therefore, we have proven that the class of normalizing flows with constant (absolute) Jacobian determinant cannot approximate arbitrary continuous distributions if one uses the KL divergence as a convergence measure.
\end{proof}

\subsection{Proof of \cref{thm:mode-equivalence}}
\label{app:mode-equivalence}

\begin{definition}
    \label{def:mode}
    Given a probability density $p(x)$ and  a connected set $M \subset \mathbb R^D$.
    Then, $M$ is called a \emph{mode} of $p(x)$ if
    \begin{equation}
        p(x) = p(y) \quad \forall x, y \in M,
    \end{equation}
    and there is a neighborhood $U$ of $M$ such that:
    \begin{equation}
        \label{eq:neighborhood-of-mode}
        p(x) > p(y) \quad \forall x \in M, y \in U \setminus M.
    \end{equation}
\end{definition}

With this definition of a mode, let us characterize the correspondence between modes of $p_\theta(x)$ and $p(z)$ for a volume-preserving flow:

\begin{lemma}
    \label{lem:mode-equivalence}
    Given a latent probability density $p(z)$, a diffeomorphism $f_\theta: \mathbb R^D \to \mathbb R^D$  with constant Jacobian determinant $|f_\theta'(x)| = \operatorname{const}$ and a mode $M \subset \mathbb R^D$.
    Then, $f(M)$ is a mode of $p_\theta(x)$.
\end{lemma}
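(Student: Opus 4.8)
The plan is to reduce everything to the order-preserving property of the volume-preserving change of variables. From \cref{eq:incompressible-change-of-variables} we have $p_\theta(x) = C \cdot p(z = f_\theta(x))$, where $C = |f_\theta'(x)| > 0$ is a strictly positive constant independent of $x$. The single structural consequence I need is that multiplying the latent density by the positive constant $C$ preserves both equalities and strict inequalities between density values: $p_\theta(x_1)$ compares to $p_\theta(x_2)$ in exactly the same way that $p(f_\theta(x_1))$ compares to $p(f_\theta(x_2))$. This is what transports the defining properties of a mode from the latent side to the data side.

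Concretely, given a mode $M$ of the latent density $p(z)$ with a witnessing neighborhood $U$ as in \cref{def:mode}, I would take as the candidate data-space mode the set $M' = f_\theta^{-1}(M)$ together with $U' = f_\theta^{-1}(U)$ (reading the generating map $f_\theta^{-1}$ for the "$f$" of the statement), and verify the two conditions of \cref{def:mode}. For constancy, any $x, y \in M'$ satisfy $f_\theta(x), f_\theta(y) \in M$, so $p(f_\theta(x)) = p(f_\theta(y))$ because $p$ is constant on $M$; multiplying by $C$ gives $p_\theta(x) = p_\theta(y)$. For strict dominance, if $x \in M'$ and $y \in U' \setminus M'$, then using that preimages commute with set difference, namely $f_\theta^{-1}(U) \setminus f_\theta^{-1}(M) = f_\theta^{-1}(U \setminus M)$, I get $f_\theta(x) \in M$ and $f_\theta(y) \in U \setminus M$, hence $p(f_\theta(x)) > p(f_\theta(y))$ by the mode property of $M$; multiplying by the strictly positive $C$ preserves this to $p_\theta(x) > p_\theta(y)$.

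It then remains to check the topological requirements that $M'$ is connected and that $U'$ is a genuine neighborhood of $M'$, and here I would invoke that a diffeomorphism is in particular a homeomorphism. The set $M'$ is the continuous image of the connected set $M$ under $f_\theta^{-1}$, hence connected, and if $V$ is an open set with $M \subseteq V \subseteq U$, then $f_\theta^{-1}(V)$ is open (preimage of an open set under the continuous map $f_\theta$) and satisfies $M' \subseteq f_\theta^{-1}(V) \subseteq U'$, so $U'$ is a neighborhood of $M'$. Assembling these pieces shows $f_\theta^{-1}(M)$ is a mode of $p_\theta(x)$. I do not anticipate a genuine obstacle: the only points demanding care are the elementary preimage identity above and the observation that it is precisely the \emph{positivity} of the constant $C$ (guaranteed because it is an absolute Jacobian determinant) that protects the strict inequality, so no sign issue can destroy the mode structure.
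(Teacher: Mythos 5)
Your proposal is correct and follows essentially the same route as the paper's proof: transport points of the mode via bijectivity, use the strictly positive constant Jacobian factor to preserve the density equality and the strict inequality on the surrounding neighborhood, and use the homeomorphism property to carry the neighborhood (and, in your slightly more explicit version, connectedness) across. The only difference is notational — you phrase the data-space mode as $f_\theta^{-1}(M)$ rather than $f(M)$ for the generative map $f$ — which matches the paper's construction exactly.
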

\begin{proof}
    We show that $f(M)$ fulfills  \cref{def:mode}. First, for every $x, y \in f(M)$:
    The pre-images of $x, y$ are unique in $M$ as $f$ is bijective, that is: $f^{-1}(x), f^{-1}(y) \in M$.
    As $M$ is a mode:
    \begin{equation}
        p(f^{-1}(x)) = p(f^{-1}(y)).
    \end{equation}
        We follow:
    \begin{equation}
        (f_\sharp p)(x) = p(f^{-1}(x)) |J| = p(f^{-1}(y)) |J| = (f_\sharp p)(y),
    \end{equation}
    where we have used the change-of-variables formula for bijections and that $|J| = \operatorname{const}$.

    Let $U$ be a neighborhood of $M$ such that \cref{eq:neighborhood-of-mode} is fulfilled.
    As $f$ is continuous, there is a neighborhood $V$ of $f(M)$ such that $V \subseteq f(U)$.
    Consider $x \in f(M), y \in V \setminus f(M)$. As $M$ is a mode:
    \begin{equation}
        p(f^{-1}(x)) > p(f^{-1}(y)).
    \end{equation}
    Multiplying both sides by $|J|$, we find:
    \begin{equation}
        (f_\sharp p)(x) = p(f^{-1}(x))|J| > p(f^{-1}(y))|J| = (f_\sharp p)(y).
    \end{equation}

    Thus, $f(M)$ is a mode of $(f_\sharp p)(x)$ by \cref{def:mode}.
\end{proof}

This makes us ready for the proof:
\begin{repproposition}{thm:mode-equivalence}
    A normalizing flow $p_\theta(x)$ with constant Jacobian determinant $|f_\theta'(x)| = \operatorname{const}$ has the same number of modes as the latent distribution $p(z)$.
\end{repproposition}
\begin{proof}
    By \cref{lem:mode-equivalence}, every mode of $p(x)$ implies a mode of $(f_\sharp p)(x)$.
    Also, every mode of $(f_\sharp p)(x)$ implies a mode of $(f_\sharp^{-1} f_\sharp p)(x) = p(x)$.
    Therefore, there is a one-to-one correspondence of modes between $p(x)$ and $(f_\sharp p)(x)$.
\end{proof}

\section{Proofs on Affine Coupling Flows}

\subsection{Proof of Convergence Measure Minimum}

\subsubsection{Underlying Results from Previous Work}

Here, we restate the results from the literature that our main proof is based on:

First, \citep{eaton1986characterization} show that if for some vector-valued random variable $X$ and every pair of orthogonal projections the mean of one projection conditioned on the other is zero, then $X$ follows a spherical distribution:
\begin{theorem}[\citet{eaton1986characterization}]
    \label{thm:spherical-characterization}
    Suppose the random vector $X \in \RR^D$ has a finite mean vector. Assume that for each vector $v \neq 0$ and for each vector $u$ perpendicular to $v$ (i.e.~$u \cdot v = 0$):
    \eql{
        \EE[u \cdot X | v \cdot X] = 0.
    }
    Then $X$ is spherical and conversely.
\end{theorem}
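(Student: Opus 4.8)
The plan is to encode the hypothesis in the characteristic function $\phi(t) = \EE[e^{i\dotp{t}{X}}]$ and to show that it depends on $t$ only through $\norm{t}$; by uniqueness of characteristic functions this forces the law of $X$ to be invariant under $O(D)$. Throughout I would assume $D \geq 2$, since for $D = 1$ there is no nonzero $u \perp v$ and the hypothesis is vacuous. The forward implication is the easy direction: if $X$ is spherical then $RX \stackrel{d}{=} X$ for every $R \in O(D)$, and given $v \neq 0$ and $u \perp v$ I would take $R$ to be the reflection across the hyperplane orthogonal to $u$, which fixes $v$ and sends $u \mapsto -u$. Then $(\dotp{v}{X}, \dotp{u}{X}) \stackrel{d}{=} (\dotp{v}{X}, -\dotp{u}{X})$, so $\dotp{u}{X}$ is conditionally symmetric given $\dotp{v}{X}$ and its conditional mean vanishes.

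For the converse, I would first use the finite-mean assumption to differentiate under the integral sign, so that $\phi \in C^1$ with $\nabla\phi(t) = i\,\EE[X e^{i\dotp{t}{X}}]$ (justified because $\norm{X}$ is integrable). The hypothesis $\EE[\dotp{u}{X} \mid \dotp{v}{X}] = 0$ says that $\dotp{u}{X}$ is orthogonal to every bounded function of $\dotp{v}{X}$; testing against $g(s) = e^{is}$ yields
\begin{equation}
    \EE[\dotp{u}{X}\, e^{i\dotp{v}{X}}] = 0 \qquad \text{for all } v \neq 0,\ u \perp v.
\end{equation}
Fixing an arbitrary $t \neq 0$ and applying this with $v = t$ then shows that for every $u \perp t$ the directional derivative $\dotp{u}{\nabla\phi(t)} = i\,\EE[\dotp{u}{X}\, e^{i\dotp{t}{X}}] = 0$. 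Hence $\nabla\phi(t)$ has no component orthogonal to $t$, i.e.\ $\nabla\phi(t)$ is parallel to $t$ for every $t \neq 0$.

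The final step is differential-geometric: a $C^1$ function on $\RR^D \setminus \{0\}$ whose gradient is everywhere radial must be constant on each sphere $\{\norm{t} = r\}$, since its tangential derivative vanishes and the sphere is connected for $D \geq 2$. This gives $\phi(t) = \psi(\norm{t})$ for some $\psi$, and then for any $R \in O(D)$ the characteristic function of $RX$ is $t \mapsto \phi(\transp{R} t) = \psi(\norm{\transp{R} t}) = \psi(\norm{t}) = \phi(t)$, so $RX \stackrel{d}{=} X$ and $X$ is spherical. The hard part will be the middle paragraph: rigorously passing from the conditional-mean hypothesis to the vanishing tangential gradient of $\phi$ (which relies on the finite mean for $C^1$ regularity and on the complex exponentials forming a determining family) and then integrating the radial-gradient condition using connectedness of the spheres; the surrounding identities are routine.
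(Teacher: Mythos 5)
This theorem is imported from \citet{eaton1986characterization}; the paper states it without proof in its ``Underlying Results from Previous Work'' section, so there is no in-paper argument to compare against. Your blind proof is, as far as I can check, correct and self-contained, and it follows the standard characteristic-function route to Eaton's characterization. The forward direction via the reflection fixing $v$ and negating $u$ is sound. For the converse, the chain of reasoning holds up: finiteness of $\EE[\norm{X}]$ dominates $|\partial_{t_j} e^{i\dotp{t}{X}}| \leq \norm{X}$ and gives $\phi \in C^1$ with $\nabla\phi(t) = i\,\EE[X e^{i\dotp{t}{X}}]$; the tower property applied to the bounded test function $s \mapsto e^{is}$ turns the conditional-mean hypothesis into $\EE[\dotp{u}{X} e^{i\dotp{v}{X}}] = 0$; taking $v = t$ shows every tangential derivative of $\phi$ on the sphere of radius $\norm{t}$ vanishes, and connectedness of spheres for $D \geq 2$ gives $\phi(t) = \psi(\norm{t})$, hence $\phi(\transp{R}t) = \phi(t)$ and sphericity by uniqueness of characteristic functions. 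Your caveat that $D \geq 2$ is needed (the hypothesis is vacuous for $D = 1$) is a point the paper's statement glosses over. The only nitpick is that you should note explicitly that the identity $\EE[\dotp{u}{X} g(\dotp{v}{X})] = 0$ for bounded $g$ is exactly the defining property of the conditional expectation combined with the hypothesis, and that integrability of $\dotp{u}{X} e^{i\dotp{v}{X}}$ follows from the finite mean; but these are the routine verifications you already flagged.
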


Secondly, \citet[Corollary 8a]{cambanis1981theory} identifies the Gaussian from all elliptically contoured (which includes spherical) distributions. We write it in the form of \citet[Theorem 4.1.4]{bryc1995normal}:
\begin{theorem}[\citet{bryc1995normal}]
    \label{thm:normal-characterization}
    Let $p(x)$ be radially symmetric with $\EE[\norm{x}^\alpha] < \infty$ for some $\alpha > 0$. If
    \eql{
        \EE[\norm{x_{1, \dots, m}}^\alpha | x_{m+1, \dots, n}] = \const,
    }
    for some $1 \leq m < n$, then $p(x)$ is Gaussian.
\end{theorem}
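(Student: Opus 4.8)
The converse direction (that a Gaussian satisfies the stated constancy) is a one-line computation, so the plan focuses on the forward implication. Write the random vector as $X = (Y, Z)$ with $Y = x_{1, \dots, m} \in \RR^m$, $Z = x_{m+1, \dots, n} \in \RR^{n-m}$, and let $g$ denote the radial profile of the density, i.e.\ $p(x) = g(\norm{x}^2)$. The first step is to reduce the hypothesis to the radial part: since $p$ is invariant under the block rotation $\operatorname{diag}(I_m, Q)$ for every orthogonal $Q$ acting on the $Z$-coordinates, while $\norm{Y}$ is unaffected, the conditional moment $\EE[\norm{Y}^\alpha \mid Z]$ depends on $Z$ only through $\norm{Z}$. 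Passing to polar coordinates in $\RR^m$ then turns it into a ratio of radial integrals, and with $F_\beta(u) := \int_0^\infty t^{\beta - 1} g(t + u)\, dt$ the assumption becomes
\begin{equation}
    F_{(m + \alpha)/2}(u) = c\, F_{m/2}(u) \quad \text{for all } u \geq 0,
\end{equation}
where $c$ is the constant conditional moment and the hypothesis $\EE[\norm{X}^\alpha] < \infty$ guarantees all integrals converge.

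The second step is to recognize $F_\beta = \Gamma(\beta)\, W^\beta g$ as a Weyl fractional integral, $W^\beta g(u) = \tfrac{1}{\Gamma(\beta)}\int_u^\infty (w - u)^{\beta - 1} g(w)\, dw$, and to exploit its semigroup law $W^{(m+\alpha)/2} = W^{\alpha/2} \circ W^{m/2}$. Setting $h := F_{m/2}$, a nonnegative integrable function on $[0,\infty)$ (it is the radial profile of the marginal density of $Z$), the displayed identity collapses to the eigenvalue problem
\begin{equation}
    W^{\alpha/2} h = \lambda h, \qquad \lambda = \frac{c\,\Gamma(m/2)}{\Gamma((m + \alpha)/2)}.
\end{equation}

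The heart of the argument, and the step I expect to be the main obstacle, is to show that the only nonnegative solutions of this equation are single exponentials $h(u) = A e^{-\mu u}$. The plan is to first prove $h$ is completely monotone. Integrating by parts gives $\tfrac{d^j}{du^j} W^\beta h = (-1)^j W^{\beta - j} h$, so $(-1)^j (W^\beta h)^{(j)} = W^{\beta-j} h \geq 0$ whenever $\beta > j$; iterating the eigenvalue relation yields $h = \lambda^{-N} W^{N\alpha/2} h$, which is monotone of every order up to $\lfloor N\alpha/2 \rfloor$, and since $N$ is arbitrary, $h$ is completely monotone. By Bernstein's theorem $h(u) = \int_0^\infty e^{-\mu u}\, d\nu(\mu)$ for a positive measure $\nu$, and because $W^{\alpha/2}$ scales $e^{-\mu u}$ by $\mu^{-\alpha/2}$, the eigenvalue equation forces $\int (\mu^{-\alpha/2} - \lambda)\, e^{-\mu u}\, d\nu(\mu) = 0$ for all $u$. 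Uniqueness of the Laplace transform then pins $\nu$ to a single atom at $\mu_0 = \lambda^{-2/\alpha}$, giving $h(u) = A e^{-\mu_0 u}$.

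Finally I would transport this back to $g$. From $W^{m/2} g \propto e^{-\mu_0 u} = \mu_0^{m/2}\, W^{m/2}(e^{-\mu_0 \cdot})$ and injectivity of the Weyl fractional integral on locally integrable functions, one concludes $g(v) = c_0 e^{-\mu_0 v}$, hence $p(x) = c_0 e^{-\mu_0 \norm{x}^2}$ is the density of $\Nn(0, \tfrac{1}{2\mu_0} I)$. The delicate points to watch are the vanishing of the boundary terms in the integration by parts and the convergence of the iteration, both controlled by the finite-moment hypothesis, together with the injectivity of $W^{m/2}$; the complete-monotonicity step is where the real work lies, as it is precisely what upgrades a bare eigenfunction relation into the rigidity that singles out the exponential.
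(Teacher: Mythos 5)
The paper does not prove this statement at all: it is imported as \cref{thm:normal-characterization} from \citet[Theorem 4.1.4]{bryc1995normal}, who restate \citet[Corollary 8a]{cambanis1981theory}, so there is no in-paper proof to measure you against. Judged on its own terms, your argument is a sound, essentially complete derivation for the case where the spherical law has a density $p(x)=g(\norm{x}^2)$ --- which is the only case the paper ever invokes (in the proof of \cref{thm:gaussian-identification} the latent law is a continuous, everywhere-positive density), though it is strictly narrower than the classical theorem, whose standard proof works with the representation $X \stackrel{d}{=} R\,U$ ($U$ uniform on the sphere, $R = \norm{X}$ independent) and a functional equation for the law of $R$, with no density assumption. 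Your route trades that generality for a short rigidity argument: the reduction to $F_{(m+\alpha)/2}=c\,F_{m/2}$, the identification $F_\beta=\Gamma(\beta)W^\beta g$ with the semigroup collapse to $W^{\alpha/2}h=\lambda h$, the bootstrap $h=\lambda^{-N}W^{N\alpha/2}h \Rightarrow (-1)^j h^{(j)}=\lambda^{-N}W^{N\alpha/2-j}h\ge 0$ for every $j$, and the Bernstein-plus-Laplace-uniqueness step that concentrates $\nu$ on the single atom $\mu_0=\lambda^{-2/\alpha}$ are all correct. Three points to tighten, none fatal: (i) the finiteness of $W^{\gamma}h$ needed for the differentiation under the integral comes from the eigen-equation itself ($W^{N\alpha/2}h=\lambda^N h<\infty$, plus the comparison $(w-u)^{\gamma-1}\le (w-u)^{N\alpha/2-1}$ for $w-u\ge 1$ and $(w-u)^{\gamma-1}\le(w-u)^{\alpha/2-1}$ for $w-u\le 1$, arranging $\gamma=N\alpha/2-j\ge\alpha/2$), not from the moment hypothesis, which only enters to make $c$ finite; (ii) you must explicitly exclude an atom of $\nu$ at $\mu=0$, which follows since a constant component of $h$ would force $W^{\alpha/2}h=\infty$, contradicting $W^{\alpha/2}h=\lambda h<\infty$; (iii) the conditional-moment identity a priori holds only for $u$ in the support of $\norm{x_{m+1,\dots,n}}^2$, but both $F_{(m+\alpha)/2}$ and $F_{m/2}$ vanish off that support and are continuous, so the identity extends to all $u\ge 0$ before you apply the semigroup law.
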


Finally, \citet[Theorem 1]{draxler2020characterizing} show that the explicit form of the maximally achievable loss improvement by an affine coupling block $\Delta_{\textnormal{affine}}^*(p_\theta(z))$ if the data is rotated by a fixed rotation layer $Q$ is given by (omitting the dependence on $p_\theta(z)$ to avoid clutter):
\begin{align}
    \Delta_{\textnormal{affine}}^*(Q) &
    = \KL{p_0(z)}{p(z)} - \min_{s, t} \KL{p_{s,t|Q}(z)}{p(z)} \\&
    = \tfrac12 \EE_{b}\left[ m_i(b)^2 + \sigma_i(b)^2 - 1 - \log \sigma_i(b)^2 \right].
    \label{eq:improvement-by-rotation}
\end{align}
Here, $s, t$ are the scaling and translation in an affine coupling block (see \cref{eq:affine-coupling}), and we optimize over continuous functions for now. By $p_{s,t|Q}(z)$ we denote the latent distribution achieved if $\tilde a(a; b) = s(b) \odot a + t(b)$ is applied to $p_0(a, b)$, the rotated version of the incoming $p_0(z)$. The symbols $m_i(b), \sigma_i(b)^2$ are conditional moments of the active dimensions $a_i$ conditioned on the passive dimensions $b$:
\begin{equation}
    m_i(b) = \EE_{a_i|b}[a_i], \qquad \sigma_i(b) = \EE_{a_i|b}[a_i^2] - m_i(b)^2.
\end{equation}
These conditional moments are continuous functions of $b$ if $p(x)$ is a continuous distribution and $p(b) > 0$ for all passive $b \in \RR^{D/2}$. The improvement in \cref{eq:improvement-by-rotation} is achieved by the affine coupling block with the following subnetwork:
\begin{equation}
    \label{eq:continuous-minimizing-network}
    s_i^*(b) = \frac{1}{\sigma_i(b)}, \qquad
    t_i^*(b) = -\frac{m_i(b)}{\sigma_i(b)}.
\end{equation}
Note that $s^*(b)$ and $t^*(b)$ are continuous functions and not actual neural networks. In the next section, we show that a similar statement on practically realizable neural networks that is sufficient for our universality.

\subsubsection{Relation to practical neural networks (\cref{lem:convergence-equivalence})}
\label{proof:convergence-equivalence}

Before moving to the proof of \cref{thm:gaussian-identification}, we show the helper statement \cref{lem:convergence-equivalence}.
For reference, let us repeat the definition of the loss improvement by an affine coupling block $\Delta_{\textnormal{affine}}(p_\theta(z))$:
\begin{repdefinition}{def:convergence-measure}
	Given a normalizing flow $f_\theta(x)$ on a continuous probability distribution $p(x)$ with finite first and second moment and $p(x) > 0$ everywhere.
	Then, we define the \emph{loss improvement by an affine coupling block} as:
	\begin{equation}
		\label{repeq:loss-improvement}
			\Delta_{\textnormal{affine}}(p_\theta(z))
			:= \KL{p_{\theta}(z)}{p(z)} - \min_{\theta_+} \KL{p_{\theta \cup \theta_+\!}(z)}{p(z)},
	\end{equation}
	where $\theta_+ = (Q, \phi)$ parameterizes a single $L$-bi-Lipschitz affine coupling block whose conditioner neural network $\psi$ has at least two hidden layers of finite width and ReLU activations.
\end{repdefinition}

To relate \cref{eq:continuous-minimizing-network,eq:improvement-by-rotation} to actually realizable networks, which cannot \textit{exactly} follow the arbitrary continuous functions $s_i^*(b), t_i^*(b)$, the following statement asserts that the fix point of adding coupling layers with infinitely expressive conditioner functions is the same as actually realizable and well-conditioned coupling blocks:
\begin{replemma}{lem:convergence-equivalence}
	Given a continuous probability density $p(z)$ on $z \in \RR^k$. Then,
	\begin{align}
		\Delta_{\textnormal{affine}}^*(p_\theta(z)) &> 0
		\intertext{if and only if:}
		\label{repeq:neural-net-reduces-loss}
		\Delta_{\textnormal{affine}}(p_\theta(z)) &> 0.
	\end{align}
\end{replemma}
\begin{proof}
    First, note that $\Delta_{\textnormal{affine}}^*(Q) \geq \Delta_{\textnormal{affine}}(Q) \geq 0$ since no practically realizable coupling block can achieve better than \cref{eq:improvement-by-rotation}. Thus, if $\Delta_{\textnormal{affine}}^*(Q) = 0$, so is $\Delta_{\textnormal{affine}}(Q) = 0$.

    For the reverse direction, we fix $Q = I$, and otherwise consider a rotated version of $p$. Also, without loss of generalization, we consider one single active dimension $a_i$ in the following, but the construction can then be repeated for each other active dimension.

    If we apply any affine coupling layer $f_{\text{cpl},\phi}(a; b) = s_\phi(b) a + t_\phi(b)$, the loss change by this layer can be computed from the theoretical maximal improvement $\Delta_{\textnormal{affine}}^*(Q)$ before and after adding this layer $\tilde \Delta_{\textnormal{affine}}^*(I)$:
    \begin{equation}
        \Delta_{\textnormal{affine}}(I) 
        = \Delta_{\textnormal{affine}}^*(I) - \tilde \Delta_{\textnormal{affine}}^*(I)
        = \tfrac12 \EE_{b}\left[ m_i(b)^2 + \sigma_i(b)^2 - 1 - \log \sigma_i(b)^2 \right] - \tfrac12 \EE_{b}\left[ \tilde m_i(b)^2 + \tilde \sigma_i(b)^2 - 1 - \log \tilde \sigma_i(b)^2 \right]. 
    \end{equation}

    The moments after the affine coupling layers read:
    \begin{equation}
        \tilde m_i(b) = s_\phi(b) m_i(b) + t_\phi(b), 
        \qquad 
        \tilde \sigma_i(b) = s_\phi(b) \sigma_i(b).
    \end{equation}

    Case 1: $\EE_b[\sigma_i(b)^2 - 1 - \log \sigma_i(b)^2] > 0$:

    Then, without loss of generality, by continuity and positivity of $p$ and consequential continuity of $\sigma_i(b)$ in $b$, there is a convex open set $A \subset \RR^{D/2}$ with non-zero measure $p(A) > 0$ where $\sigma_i(b) > 1$.
    If $\sigma_i(b) < 1$ everywhere, apply the following argument flipped around $\sigma_i(b) = 1$.

    Denote by $\sigma_{\max} = \max_{b \in A} \sigma_i(b)$. Then, by continuity of $\sigma_i(b)$ there exists $B \subset A$ so that $\sigma_i(b) > (\sigma_{\max} - 1) / 2 + 1 =: \sigma_{\max / 2}$ for all $b \in B$. Let $C \subset B$ be a multidimensional interval $[l_1, r_1] \times \dots \times [l_{D/2}, r_{D/2}]$ with $p(C) > 0$ inside $B$.

    Now, we construct a ReLU neural network with two hidden layers with the following property, where $F \subset E \subset C$ are specified later with $p(F) > p(E) > 0$:
    \begin{equation}
        \begin{cases}
            f_\phi(x) = \frac{1}{\sigma_{\max / 2}} & x \in E \subset D \\
            \frac{1}{\sigma_{\max / 2}} \leq f_\phi(x) < 1 & x \in D \\
            f_\phi(x) = 0 & \text{else.}
        \end{cases}
        \label{eq:box-neural-network}
    \end{equation}

    To do so, we make four neurons for each dimension $i = 1, \dots, D/2$:
    \begin{equation}
        \relu(x_i - l_i), \relu(x_i - l_i - \delta), \relu(x_i - r_i), \relu(x_i - r_i + \delta),
    \end{equation}
    where $0 < \delta < \min_i (r_i - l_i) / 4$. If we add these four neurons with weights $1, -1, -1, 1$, we find the following piecewise function:
    \begin{equation}
        \begin{cases}
            0 & x \leq l_i \\
            x - l_i & l_i < x < l_i + \delta \\
            \delta & l_i + \delta \leq x \leq r_i - \delta \\
            r_i - x & r_i - \delta < x < r_i \\
            0 & r_i \leq x.
        \end{cases}
    \end{equation}

    If we repeat this for each dimension and add together all neurons with the corresponding weights into a single neuron in the second layer, then only inside $D = (l_1 + \delta, r_1 - \delta) \times \dots \times (l_{D/2} + \delta, r_{D/2} - \delta) \subset C$ the weighted sum would equal $\delta D/2$. By choosing $\delta$ as above, this region has nonzero volume. We thus equip the single neuron in the second layer with a bias of $-\delta D/2 + \epsilon$ for some $\epsilon < \delta$, so that it is constant with value $\epsilon$ inside $E = (l_1 + \delta - \epsilon, r_1 - \delta + \epsilon) \times \dots \times (l_{D/2} + \delta + \epsilon, r_{D/2} - \delta - \epsilon) \subset D$ and smoothly interpolates to zero in the rest of $D$.

    For the output neuron of our network, we choose weight $(\sigma_{\max / 2} - 1)/\epsilon$ and bias $1$.
    By inserting the above construction, we find the network specified in \cref{eq:box-neural-network}.

    Now, for all $b \in D$,
    \begin{equation}
        1 < \tilde \sigma_i(b) < \sigma_i(b),
    \end{equation}
    so that
    \begin{equation}
        \tilde m_i(b)^2 + \tilde \sigma_i(b)^2 - 1 - \log \tilde \sigma_i(b)^2 < m_i(b)^2 + \sigma_i(b)^2 - 1 - \log \sigma_i(b)^2.
    \end{equation}
    Thus, parameters $\phi$ exist that improve on the loss.
    (Note that this construction can be made more effective in practice by identifying the sets where $\sigma > 1$ resp.~$\sigma < 1$ and then building neural networks that output one or scale towards $\tilde \sigma(b) = 1$ everywhere. Because we are only interested in identifying improvement, the above construction is sufficient.)

    Now, regrading $t_\phi$, we focus on $\EE_{b}[m_i(b)^2] > 0$ (otherwise choose $t_\phi = 0$ as a constant, which corresponds to a ReLU network with all weights and biases set to zero):
    \begin{align}
        \EE_{b}[m_i(b)^2] > \EE_{b}[(s_\phi(b) m_i(b) + t_\phi(b))^2].
    \end{align}
    By \citep[Theorem 1]{hornik1991approximation} there always is a $t_\phi$ that fulfills this relation.

    Case 2: $\EE_{b}[\sigma_i(b)^2 - 1 - \log \sigma_i(b)^2] = 0$. Then, choose the neural network $s_\phi(b) = 1$ as a constant. As $\Delta_{\textnormal{affine}} > 0$, $\EE_{b \sim p(a, b)}[m_i(b)^2] > 0$ and we can use the same argument for the existence of $t_\phi$ as before.

    It is left to show that a $L$-bi-Lipschitz coupling block can be constructed. To achieve this, replace the action of the coupling block $\tilde a_i = s(b) a_i + t(b)$ by $\tilde a_i = \alpha(s(b) a_i + t(b)) + (1 - \alpha) a_i$. Since $s(b)$ and $t(b)$ above were constructed to move the data in the right direction, we obtain a finite loss improvement, since $\alpha > 0 \Leftrightarrow \Delta_{\textnormal{affine}} > 0$. The Jacobian $J$ of the restricted coupling block is $|f_\text{cpl}'| = \alpha |f_\text{original}'| + (1 - \alpha) I$. Since the eigenvectors are unchanged, all eigenvalues $\lambda^{(i)}_\text{original}$ of $|f_\text{original}'|$ are modified to $\lambda_\alpha^{(i)} = \alpha \lambda^{(i)}_\text{original} + (1-\alpha)$. This moves all eigenvalues closer to $1$. Choose $\alpha > 0$ such that $\min_i \lambda_\alpha^{(i)} \geq L^{-1}$ and $\max_i \lambda_\alpha^{(i)} \leq L$ to achieve $L$-bi-Lipschitzness.
\end{proof}

\subsubsection{Proof of \cref{thm:gaussian-identification}}
\label{proof:gaussian-identification}

The following theorem based on $\cref{lem:convergence-equivalence}$ shows that $\Delta_{\textnormal{affine}}(p_\theta(z))$ is a useful measure of convergence to the standard normal distribution:
\begin{reptheorem}{thm:gaussian-identification}
	With the definitions from \cref{def:convergence-measure}:
	\begin{equation}
		p_\theta(z) = \Nn(z; 0, I)
		\Leftrightarrow
		\Delta_{\textnormal{affine}}(p_\theta(z)) = 0.
	\end{equation}
\end{reptheorem}
\begin{proof}
    The forward direction is trivial: $p(z) = \Nn(0, I)$ and therefore $\KL{p(z)}{\Nn(0, I)} = 0$. As adding a identity layer is a viable solution to \cref{eq:minimal-layer-loss}, there is a $\phi$ with $\KL{p_\phi(z)}{\Nn(0, I)} = 0$, and thus $\Delta_{\textnormal{affine}}(p_\theta(z)) = 0$.
    
    For the reverse direction, start with $\Delta_{\textnormal{affine}}(p_\theta(z)) = 0$. Then, by \cref{lem:convergence-equivalence}, also $\Delta_{\textnormal{affine}}^*(p_\theta(z)) = 0$.
    
    The maximally achievable loss improvement for any rotation $Q$ is then given by \cref{eq:best-loss-improvement}:
    \begin{equation}
        \Delta_{\textnormal{affine}}^*(p_\theta(z)) = \max_Q \frac12 \sum_{i=1}^{D/2} \EE_{b}\left[ m_i(b)^2 + \sigma_i(b)^2 - 1 - \log \sigma_i(b)^2 \right] = 0.
    \end{equation}
    It holds that both $x^2 \geq 0$ and $x^2 - 1 - \log x^2 \geq 0$. Thus, the following two summands are zero:
    \begin{align}
        0 &= \tfrac12 \EE_{b}\left[ m_i(b)^2 \right], \\
        0 &= \tfrac12 \EE_{b}\left[ \sigma_i(b)^2 - 1 - \log \sigma_i(b)^2 \right].
    \end{align}
    This holds for all $Q$ since the maximum over $Q$ is zero.

    By continuity of $p(b)$ and $m_1(b)$ in $p$, this implies for all $b$:
    \begin{equation}
        \EE_{a_1|b}[a_1] = 0.
    \end{equation}
    Fix $b_1$ and marginalize out the remaining dimensions $b_{2, \dots D/2}$ to compute the mean of $a_1$ conditioned on $b_1$:
    \begin{equation}
        m_{a_1|b}
        = \EE_{a_1|b_1}[a_1]
        = \EE_{b_{1, \dots, D/2}}[\EE_{a_1|b}[a_1]]
        = \EE_{b_{1, \dots, D/2}}[0]
        = 0.
    \end{equation}
    As $a_1$ and $b_1$ are arbitrary orthogonal directions since the above is valid for any $Q$, we can employ \cref{thm:spherical-characterization} to follow that $p(x)$ is spherically symmetric.
        
    We are left with showing that for a spherical $p(x)$, if for all $Q$ there is no improvement $\Delta_{\textnormal{affine}}(Q)$, then $p(x) = \Nn(0, I)$.
    
    Without loss of generality, we can fix $Q = I$, as $(Q_\sharp p)(x) = p(x)$ for all $Q$. We write $x = (p; a)$.
    
    As $\Delta_{\textnormal{affine}} = 0$, we can follow $\sigma_i(b) = 1$ like above.
    This implies that:
    \eql{
        \EE_{a|b}[\norm{a}^2] = \sum_{i=1}^{D/2} (\mean_i(b)^2 + \sigma_i(b)^2) = D/2.
    }
    In particular, this is independent of $b$ and we can thus apply \cref{thm:normal-characterization} with $\alpha = 2$.

    Finally, $\mean(b) = 0$ and $\sigma_i(b) = 1$ for all $Q$ imply that $p(x) = \Nn(0, I)$.
\end{proof}

\subsection{Proof of \cref{thm:affine-coupling-universality}}
\label{app:proof-affine-coupling-universality}

\begin{figure}
    \centering
    \includegraphics[width=.5\linewidth, trim=0 0 6.5cm 0, clip]{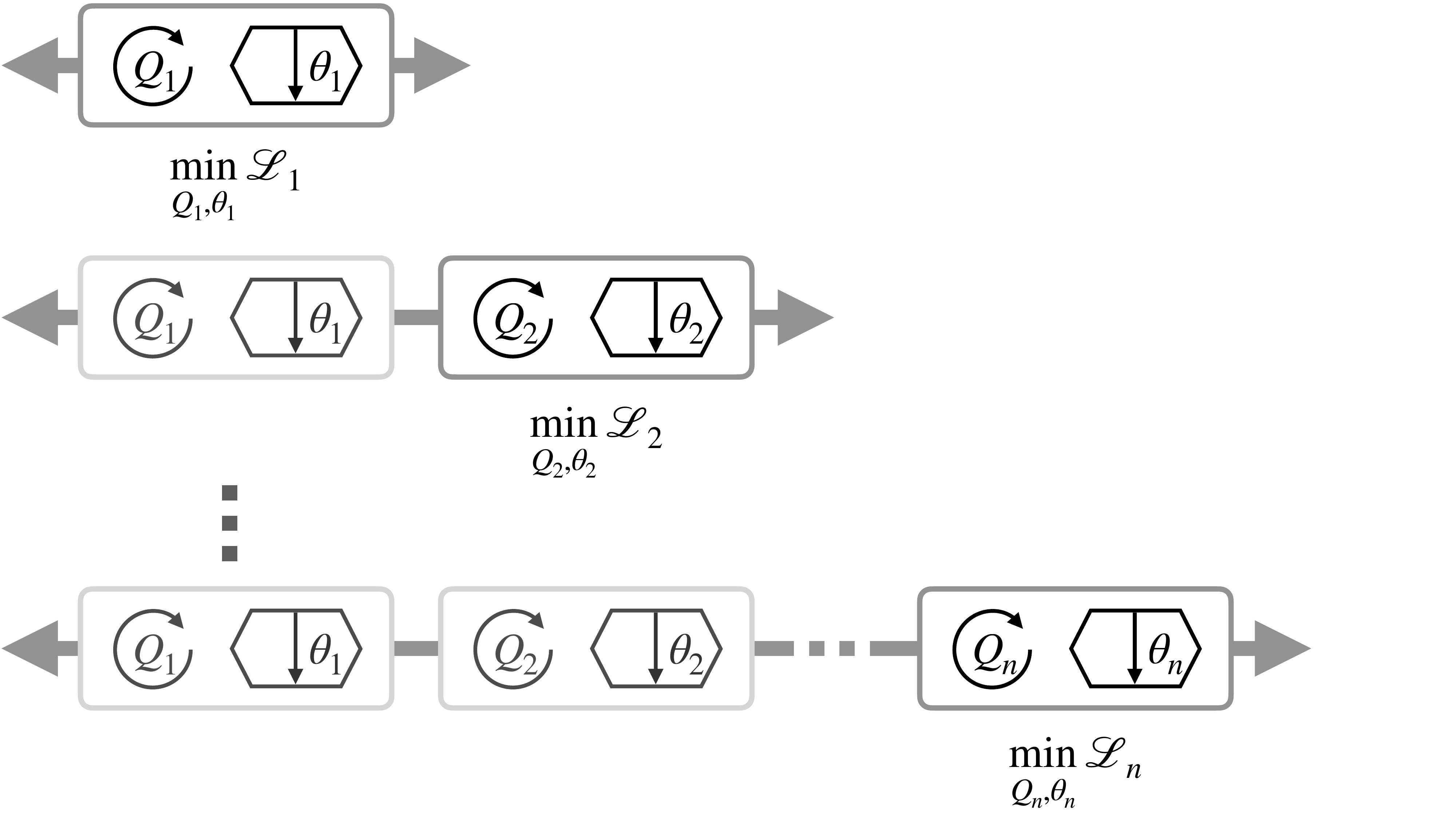}
    \caption{
        The normalizing flow we construct in our proof is remarkably simple: We iteratively add coupling blocks, optimizing the parameters of the new block while keeping previous parameters fixed. \Cref{thm:gaussian-identification} shows that if adding another blocks shows no improvement in the loss, the flow has converged to a standard normal distribution in the latent space. Since the total loss that can be removed is finite, the flow converges.
    }
    \label{fig:universality-proof-idea}
\end{figure}

\begin{reptheorem}{thm:affine-coupling-universality}
	For every continuous $p(x)$ with finite first and second moment with infinite support, there is a sequence of normalizing flows $f_n(x)$ consisting of $n$ affine coupling blocks such that:
	\begin{align}
		p_n(z) \xrightarrow{n \to \infty} \Nn(z; 0, I),
	\end{align}
	in the sense that $\Delta_\textnormal{affine}(p_n(z)) \xrightarrow{n \to \infty} 0$.
\end{reptheorem}
\begin{proof}
	The proof idea of iteratively adding new layers which are trained without changing previous layers is visualized in \cref{fig:universality-proof-idea}.
	
    Let us consider a coupling-based normalizing flow of depth $n$ and call the corresponding latent distributions $p_n(z)$, where $n=0$ corresponds to the initial data distribution $p(x)$. Denote by $\Ll_n = \KL{p_n(z)}{p(z)}$ the corresponding loss. Then, if we add another layer to the flow, we achieve a difference in loss of:
    $\Delta_{{\textnormal{affine}}}(p_n(z)) = \Ll_{n} - \Ll_{n+1}$.

    Without loss of generality, we may assume that the rotation layer $Q$ of each block can be chosen freely. Otherwise, add 48 coupling blocks with fixed rotations that together exactly represent the $Q$ we want, as shown by \citet[Theorem 2]{koehler2021representational}.

    We construct the blocks of the flow iteratively: Choose the rotation and subnetwork parameters $\phi_{n+1}$ of each additional block such that the block maximally reduces the loss, keeping the parameters of the previous blocks $\phi_{1, \dots, n}$ fixed. Then, $\Delta_{{\textnormal{affine}}}(p_n(z))$ attains the value given in \cref{eq:loss-improvement} (\cref{repeq:loss-improvement}):
    \begin{equation}
        \Delta_{\textnormal{affine}}(p_n(z)) = \Ll_n - \Ll_{n+1} = \KL{p_{\phi_{1, \dots, n}}(z)}{p(z)} - \min_{\phi_{n+1}} \KL{p_{\phi_{1, \dots, n} \cup \phi_{n+1}}(z)}{p(z)} \geq 0,
    \end{equation}

    Each layer contributes a non-negative improvement in the loss which can at most sum up to the initial loss:
    \begin{equation}
        \sum_{k=0}^{n-1} \Delta_{{\textnormal{affine}}}(p_k(z)) = \Ll_0 - \Ll_n \leq \Ll_0 \quad \text{for all } n \geq 1,
    \end{equation}
    and the inequality is due to $\Ll_n \geq 0$.
    For a non-negative series that is bounded above, the terms of the series must converge to zero \citep[Theorems 3.14 and 3.23]{rudin1976principles}, which shows convergence in terms of \cref{sec:convergence-metric}:
    \begin{equation}
        \label{eq:finite-series-convergence}
        \sum_{n=0}^\infty \Delta_{{\textnormal{affine}}}(p_n(z)) \leq \Ll_0 < \infty \Rightarrow \Delta_{{\textnormal{affine}}}(p_n(z)) \to 0.
    \end{equation}
\end{proof}

\subsection{Relation to Convergence in KL}
\label{app:kl-implies-improvement}

\begin{corollary}
    \label{corr:kl-implies-improvement}
    Given a series of probability distributions $p_n(z)$. Then, convergence in KL divergence
    \begin{equation}
        \label{eq:kl-convergence}
        \KL{p_n(z)}{\N(0, 1)} \xrightarrow{n \to \infty} 0
    \end{equation}
    implies convergence in the loss improvement by a single affine coupling as in \cref{def:convergence-measure}:
    \begin{equation}
        \Delta_{{\textnormal{affine}}}(p_n(z)) \xrightarrow{n \to \infty} 0.
    \end{equation}
\end{corollary}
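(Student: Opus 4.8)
The plan is to bound $\Delta_{\textnormal{affine}}(p_n(z))$ between $0$ and $\KL{p_n(z)}{\Nn(0,I)}$ and then apply the squeeze theorem, using the hypothesis that the latter tends to zero. Recall from \cref{def:convergence-measure} that, with target latent distribution $p(z) = \Nn(0,I)$,
\[
\Delta_{\textnormal{affine}}(p_n(z)) = \KL{p_n(z)}{\Nn(0,I)} - \min_{\theta_+} \KL{p_{n \cup \theta_+}(z)}{\Nn(0,I)},
\]
where $\theta_+$ ranges over single $L$-bi-Lipschitz affine coupling blocks with a finite-width ReLU conditioner. The entire argument rests on reading off elementary bounds for the two terms on the right.

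For the lower bound $\Delta_{\textnormal{affine}}(p_n(z)) \geq 0$, I would observe that the identity map is an admissible block $\theta_+$: taking $Q = I$ and a conditioner that outputs the constants $s(b) = 1$ and $t(b) = 0$ (realizable by a ReLU network with zero weights and suitable biases) yields the identity coupling, which is $1$-bi-Lipschitz and hence $L$-bi-Lipschitz for any $L \geq 1$. Since this choice leaves $p_{n \cup \theta_+}(z) = p_n(z)$ unchanged, the minimum over $\theta_+$ is at most $\KL{p_n(z)}{\Nn(0,I)}$, so the improvement is non-negative. This is the same non-negativity already exploited in the proof of \cref{thm:affine-coupling-universality}.

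For the upper bound I would simply use non-negativity of the KL divergence: since $\min_{\theta_+} \KL{p_{n \cup \theta_+}(z)}{\Nn(0,I)} \geq 0$, subtracting a non-negative quantity gives
\[
\Delta_{\textnormal{affine}}(p_n(z)) \leq \KL{p_n(z)}{\Nn(0,I)}.
\]
Combining the two bounds yields $0 \leq \Delta_{\textnormal{affine}}(p_n(z)) \leq \KL{p_n(z)}{\Nn(0,I)}$, and letting $n \to \infty$ the right-hand side vanishes by \cref{eq:kl-convergence}, so $\Delta_{\textnormal{affine}}(p_n(z)) \to 0$ by the squeeze theorem.

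There is essentially no serious obstacle here; the statement is elementary once one recognizes the sandwich structure, and it is exactly the converse direction that is \emph{not} covered by \cref{thm:affine-coupling-universality}. The only point requiring a word of care is confirming that the identity block genuinely lies in the admissible class of \cref{def:convergence-measure}, so that the improvement is bounded below by $0$ rather than by some negative constant; this reduces to checking that a finite-width ReLU conditioner can represent the constant functions $s \equiv 1$ and $t \equiv 0$ and that the resulting block is $L$-bi-Lipschitz, both of which are immediate.
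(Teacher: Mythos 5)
Your proof is correct, and it reaches the same key inequality as the paper, namely $0 \leq \Delta_{\textnormal{affine}}(p_n(z)) \leq \KL{p_n(z)}{\Nn(0,I)}$, followed by the squeeze argument. The route to the upper bound differs slightly: the paper writes $\Delta_{\textnormal{affine}}(p_n(z)) \leq \sum_{m=n}^\infty \Delta_{\textnormal{affine}}(p_m(z)) \leq \KL{p_n(z)}{\Nn(0,I)}$, i.e.\ it bounds the single-block improvement by the sum of all future improvements and then by the total remaining loss via the telescoping argument from Theorem~\ref{thm:affine-coupling-universality}. That chain implicitly presupposes that the tail of the sequence is generated by iteratively appending optimal blocks, whereas the corollary is stated for an arbitrary sequence $p_n(z)$. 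Your version gets the upper bound in one step from the definition --- the minimized term $\min_{\theta_+}\KL{p_{n\cup\theta_+}(z)}{\Nn(0,I)}$ is a KL divergence and hence non-negative --- which is both more elementary and valid at the stated level of generality. Your care in verifying that the identity block is admissible (constant ReLU conditioner, $1$-bi-Lipschitz hence $L$-bi-Lipschitz) to establish $\Delta_{\textnormal{affine}} \geq 0$ is also a point the paper leaves implicit. No gaps.
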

\begin{proof}
    By assumption, for every $\epsilon > 0$ there exists $N \in \NN$ such that:
    \begin{equation}
        \KL{p_n(z)}{\N(0, 1)} < \epsilon \quad\forall n > N.
    \end{equation}
    This implies convergence of $\Delta_{{\textnormal{affine}}}(p_n(z))$, by the following upper bound via the sum of all possible future improvements which is bounded from above by the total loss:
    \begin{equation}
        \Delta_{{\textnormal{affine}}}(p_n(z)) \leq \sum_{m=n}^\infty \Delta_{{\textnormal{affine}}}(p_m(z)) \leq \KL{p_n(z)}{\N(0, 1)} < \epsilon \quad\forall n > N.
    \end{equation}
\end{proof}

\section{Benefits of More Expressive Coupling Blocks}
\label{app:pythagorean-split}

To see what the best improvement for an infinite capacity coupling function can ever be, we make use of the following Pythagorean identity combined from variants in \citet{draxler2022whitening, cardoso2003dependence, chen2000gaussianization}:
\begin{equation}
    \label{eq:coupling-pythagorean-identity}
    \Ll
    = \KL{p_\theta(z)}{\Nn(0, I)}
    = P + \EE_{b \sim p(a, b)}\left[D(b) + J(b) + S(b)\right].
\end{equation}
The symbols $P, D(b), J(b), S(b)$ all denote KL divergences:

The first two terms remain unchanged under a coupling layer: The KL divergence to the standard normal in the passive dimensions $P = \KL{p_\theta(b)}{\N(0, I_{D/2})}$, which are left unchanged. The \textit{dependence} between active dimensions $D(b) = \KL{p_\theta(a|b)}{p_\theta(a_1|b) \cdots p_\theta(a_{D/2}|b)}$ measures the multivariate mutual information between active dimensions. It is unchanged because each dimension $a_i$ is treated conditionally independent of the others \cite{chen2000gaussianization}.

The remaining terms measure how far each dimension $p_\theta(a_i|b)$ differs from the standard normal: The negentropy measures the divergence to the Gaussian with the same first moments as $p_\theta(a_i|b)$ in each dimension, summing to $J(b) = \sum_{i=1}^{D/2} \KL{p_\theta(a_i|b)}{\Nn(m_i(b), \sigma_i(b))}$. Finally, the non-Standardness $S(b) = \sum_{i=1}^{D/2} \KL{\Nn(m_i(b), \sigma_i(b))}{\Nn(0, 1)}$ measures how far these 1d Gaussian are away from the standard normal distribution.

Note that the total loss $\Ll$ is invariant under a rotation of the data. The rotation does, however, affect how that loss is distributed into the different components in \cref{eq:coupling-pythagorean-identity}.

If we restrict the coupling function to be affine-linear $c(a_i; \theta) = sa_i + t$ (i.e.~a RealNVP coupling), then this means that also $J(b)$ is left unchanged, essentially because $p_\theta(a_i|b)$ and $\Nn(m_i(b), \sigma_i(b))$ undergo the same transformation \citep[Lemma 1]{draxler2022whitening}. Only a nonlinear coupling function $c(a_i; \theta)$ can thus affect $J(b)$ and reduce it to $\tilde J(b) < J(b)$ (if $J(b) > 0$).

Taking the loss difference between two layers, we find \cref{eq:improvement-split}.

\section{Experimental Details}
\label{app:experimental-details}

We base our code on PyTorch \cite{paszke2019pytorch}, Numpy \citep{harris2020array}, Matplotlib \citep{hunter2007matplotlib} for plotting and Pandas \citep{mckinney2010data,reback2020pandas} for data evaluation.

\subsection{Layer-Wise Flow}
\label{app:details-layer-wise-experiment}

In experiment on a toy dataset for \cref{fig:universality-example}, we demonstrate that a coupling flow constructed layer by layer as in \cref{eq:action-single-layer} learns a target distribution. We proceed as follows:

We construct a data distribution on a circle as a Gaussian mixture of $M$ Gaussians with means $m_i = (r \cos \phi_i, r \sin \phi_i)$, where $\phi_i = 0, \frac1M 2\pi, \dots, \frac{M-1}{M} 2\pi $ are equally spaced, and $\sigma_i = 0.3$. The advantage of approximating the ring with this construction is that this yields a simple to evaluate data density, which we need for accurately plotting $p_\theta(z)$:
\begin{equation}
    p(x) = \frac1M \sum_{i=1}^M \Nn(x; m_i, \sigma^2 I).
\end{equation}

We then fit a total 100 layers in the following way: First, treat $p(x)$ as the initial guess for the latent distribution. Then, we build the affine coupling block that maximally reduces the loss using \cref{eq:action-single-layer}. We therefore need to know the conditional mean $m(b)$ and standard deviation $\sigma(b)$ for each $b$. We approximate this from a finite number of samples $N$ which are grouped by the passive coordinate $b$ into $B$ bins so that $N/B$ samples are in each bin. We then compute the empirical mean $m_i$ and standard deviation $\sigma_i$ over the active dimension in each bin $i = 1, \dots, B$. According to \cref{eq:action-single-layer}, we define $s_i = \frac{1}{\sigma_i}$ and $t_i = -\frac{1}{\sigma_i} m_i$ at the bin centers and interpolate between bins using a cubic spline. Outside the domain of the splines, we extrapolate with constants $s, t$ with the value of the closest bin.
We do not directly optimize over $Q$, but choose the $Q$ that reduces the loss most out of $N_Q$ random 2d rotation matrices.

We limit the step size of each layer to avoid artifacts from finite training data, by mapping:
\begin{equation}
    \tilde x = \alpha x + (1 - \alpha) f_\text{blk}(x).
\end{equation}
In addition, we resample the training data from the ground truth distribution after every step to avoid overfitting. We do not explicitly control for the bi-Lipschitz constant of our coupling blocks because we do not encounter any numerical problems.

We choose $N = 2^{26}$, $B=64$, $M=20$, $\alpha=0.5$, $N_Q=10$. The resulting flow has $64 \cdot 2 \cdot 100 = 12,800$ learnable parameters. \Cref{fig:kl-by-layer} shows how the KL divergence vanishes for our layer-wise training, together with $\Delta_\text{affine}$.

\begin{figure}
    \centering
	\centering
    \includegraphics[width=.49\linewidth]{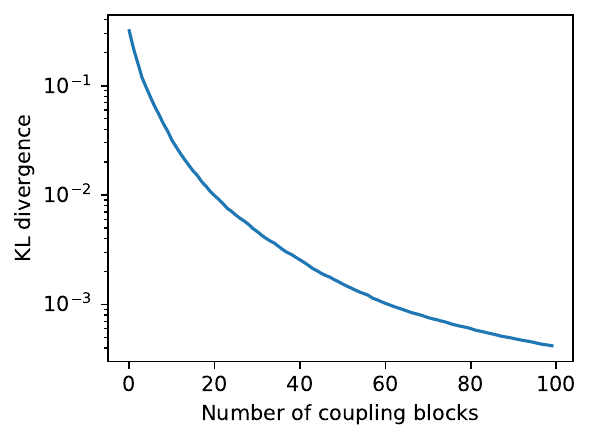}
	\includegraphics[width=.49\linewidth]{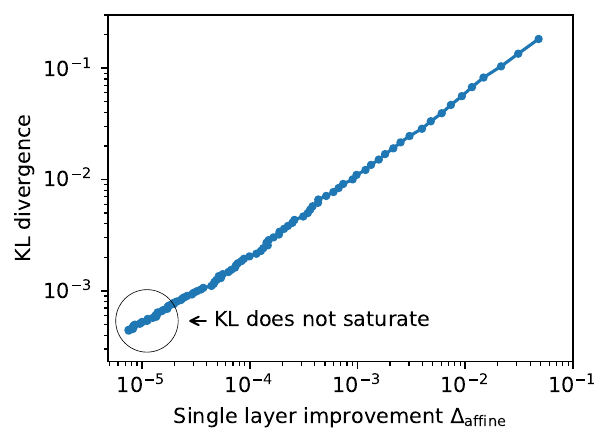}
	\caption{Empirically, the KL divergence decreases as more coupling blocks are added for the toy distribution considered in \cref{fig:universality-example} \textit{(left)}. At the same time, there is a strong correlation between the loss improvement by a single coupling block $\Delta_\textnormal{affine}(p_\theta(z))$ and the KL divergence \textit{(right)}. Crucially, the KL divergence does not saturate as the loss improvements become smaller. The coupling flow considered is trained according to the greedy layer-wise training in our proof construction.}
	\label{fig:layer-improvement-vs-kl}
    \label{fig:kl-by-layer}
\end{figure}

\subsection{Volume-Preserving Normalizing Flows}
\label{app:details-volume-preserving-experiment}

The target distribution is a two-dimensional Gaussian Mixture Model with two modes. The two modes have the same relative weight but different covariance matrices ($\Sigma_1 = I  \cdot 0.2$, $\Sigma_2 = I \cdot 0.1$) and means ($m_1 = [-0.5,-0.5]$, $m_2 = [0.5,0.5]$).

The normalizing flow with a constant Jacobian determinant consists of 15 GIN coupling blocks as introduced in \citet{sorrenson2019disentanglement}. This type of coupling blocks has a Jacobian determinant of one. To allow for a global volume change, a layer with a learnable global scaling is added after the final coupling block. This learnable weight is initialized as one. For the normalizing flow with variable Jacobian determinant, the GIN coupling is modified by removing the normalization of the scaling factors in the affine couplings. This allows the normalizing flow to have variable Jacobian determinants. In this case, the global scaling block is omitted. To implement the normalizing flow, we use the FrEIA package \cite{ardizzone2018framework} implementation of the GIN coupling blocks.

In both normalizing flows, the two subnetworks used to compute the parameters of the affine couplings are fully connected neural networks with two hidden layers and a hidden dimensionality of 128. ReLU activations are used. The weights of the linear layers of the subnetworks are initialized by applying the PyTorch implementation of the Xavier initialization \cite{glorot2010understanding}. In addition, the weights and biases of the final layer of each subnetworks are set to zero.

The networks are trained using the Adam \cite{kingma2017adam} with PyTorch's default settings and an initial learning rate of $1\cdot10^{-3}$ which is reduced by a factor of ten after $5000,10000$ and $15000$ training iterations. In total, the training ran for $25000$ iterations. In each iteration, a batch of size $128$ was drawn from the target distribution to compute the negative log likelihood objective. We use a standard normal distribution as the latent distribution.

For obtaining the optimal distribution $p^*(x)$, we follow the grid procedure in \cref{sec:limitations-vol-preserving-flows} and compute the probabilities on a regular $400 \times 400$ grid of grid spacing $0.01$. The covariance of $p^*(z)$ is computed for the latent scaling layer by sampling $4096$ points from the mixture model, moving them according to the volume-preserving flow learned using the grid and computing their empirical covariance matrix. This yields essentially the same scaling as obtained from training the volume-preserving flow.

\end{document}